\documentclass{article}

\usepackage[nonatbib,preprint]{neurips_2026}

\usepackage[utf8]{inputenc} 
\usepackage[T1]{fontenc}    
\usepackage{hyperref}       
\usepackage{url}            
\usepackage{booktabs}       
\usepackage{amsfonts}       
\usepackage{amsmath, amssymb, amsthm, mathtools}
\usepackage{enumitem}
\usepackage{nicefrac}       
\usepackage{microtype}      
\usepackage{xcolor}         
\usepackage{bm}
\usepackage{bbm}

\usepackage[dvipdfmx]{graphicx}
\usepackage[dvipdfmx]{xcolor}
\usepackage{tikz}
\usetikzlibrary{calc}

\setlist{leftmargin=1.5em}

\theoremstyle{definition}
\newtheorem{definition}{Definition}
\newtheorem{theorem}{Theorem}
\newtheorem{proposition}{Proposition}
\newtheorem{lemma}{Lemma}

\newtheorem{remark}{Remark}
\newtheorem{example}{Example}

\newcommand{\Emph}[1]{\textbf{#1}}
\newcommand{\NewTerm}[1]{\textit{\textbf{#1}}}

\newcommand\dhxrightarrow[2][]{%
  \mathrel{\ooalign{$\xrightarrow[#1\mkern4mu]{#2\mkern4mu}$\cr%
  \hidewidth$\rightarrow\mkern4mu$}}
}

\newcommand{\Enc}{\operatorname{\mathsf{Enc}}}

\newcommand{\NN}{\mathbb{N}}
\newcommand{\QQ}{\mathbb{Q}}
\newcommand{\RR}{\mathbb{R}}
\newcommand{\EE}{\operatorname{\mathbb{E}}}
\newcommand{\dom}{\operatorname{\mathsf{dom}}}
\newcommand{\src}{\operatorname{\mathsf{src}}}
\newcommand{\computableto}{\xrightarrow{\mathsf{comp}}}
\newcommand{\computableinjection}{\dhxrightarrow{\mathsf{comp}}}
\newcommand{\supp}{\operatorname{supp}}

\newcommand{\Interpret}{\operatorname{\mathsf{Interpret}}}

\newcommand{\Programs}{\operatorname{\mathsf{Programs}}}

\title{Fundamental Limitation in Explaining AI}

\author{%
  Atsushi Suzuki \\
  Department of Mathematics\\
  Faculty of Science\\
  The University of Hong Kong \\
  Hong Kong SAR\\
  \texttt{atsushi.suzuki.rd@outlook.com} \\
  \And
  Jing Wang \\
  School of Computing and Mathematical Sciences \\
  Faculty of Engineering and Science \\
  University of Greenwich \\
  United Kingdom \\  \texttt{jing.wang.research@gmail.com} \\
}

\begin{document}

\maketitle

\begin{abstract}
While large-scale models such as LLMs and diffusion models have achieved practical success, public institutions have emphasized the importance of explainability in AI. Existing methods for explaining AI, however, are not designed to provide completely faithful explanations of the behavior of large-scale AI systems.
Although a completely faithful and interpretable explanation of the behavior of an AI system might be useful for AI governance, it has not been known whether providing such an explanation is theoretically possible.
In this paper, we mathematically prove a fundamental quadrilemma in explaining AI, stating that AI and its explanation cannot satisfy the following four conditions simultaneously: 1) the complexity of the operation environment, 2) the goodness of the AI's performance, 3) the interpretability of the AI's explanation, and 4) the complete faithfulness of the AI's explanation.
This quadrilemma suggests that, in most applications where we cannot change the environment or sacrifice good AI performance and an interpretable explanation, we should give up complete faithfulness of explanations and should instead aim to explain only the parts that are important for applications. As a consequence, the quadrilemma implies that AI governance should be designed on the premise that the faithfulness of AI explanations is always incomplete.
\end{abstract}

\section{Introduction}

With the major successes of GPT-2 \cite{radford2019language} and latent diffusion models \cite{rombach2022high} as milestones, large-scale AI systems such as large language models (LLMs) and diffusion models have achieved success on various tasks, including question-answering and text-to-image generation. These AI systems define input-output relations that return desirable outputs to user inputs, either deterministically or stochastically.
As large-scale AI systems achieve industrial success even on complex tasks, public institutions have emphasized the importance of explainability in AI (e.g., NIST, US \cite{nist2024genai_profile}, European Data Protection Supervisor, EU \cite{edps2025airiskmanagement}).
However, existing methods for explaining AI are not designed to provide a completely faithful explanation of the behavior of a practical AI system so that the explanation can describe the whole behavior of the AI system and distinguish it from any other AI systems.
For example, some provide numerical sequences corresponding to only a part of the properties of practical AI systems used for complex tasks such as general-purpose chat AI, for example local properties or scalarized outputs (e.g., SHAP \cite{lundberg2017unified} and its derivative studies \cite{lundberg2020local}, integrated gradients \cite{sundararajan2017axiomatic}, integrated Hessians \cite{janizek2021explaining}, NormLIME \cite{ahern2019normlime}, Grad-CAM \cite{selvaraju2017grad} and its derivative methods \cite{chattopadhay2018grad,wang2020score,fu2020axiom}, counterfactual explanations \cite{wachter2017counterfactual}, and algorithmic recourse \cite{karimi2021algorithmic}), some generate text describing a part of such properties (e.g., \cite{huang2023can, kroeger2023context, jie2024interpretable}), others construct surrogate models whose performance is reduced for the sake of explanation (e.g., LIME \cite{ribeiro2016should}, Anchors \cite{ribeiro2018anchors}, GLocalX \cite{setzu2021glocalx}, GLEAMS \cite{visani2024gleams}, and decision-tree-based methods \cite{craven1995extracting, setiono1995understanding, jacobs2022ai,buono2024expected}).
The lack of faithfulness in AI explanations has often been recognized as a current problem \cite{turpin2023language, zhao2024explainability}.
If a practical AI system could be equipped with an interpretable explanation that is completely faithful to its behavior, AI governance might become easier. However, whether providing such an explanation is possible in principle has not been clarified.
Therefore, this paper asks the following research question: \Emph{under a complex environment}, does there exist a pair consisting of an \Emph{AI system with sufficiently good performance for practical use} and its explanation that is \Emph{completely faithful} to its behavior and \Emph{interpretable} to humans?

\begin{figure}[t]
\centering
\begin{tikzpicture}[
  every node/.style={
    font=\small,
    align=center,
    fill=white,
    fill opacity=0.85,
    text opacity=1,
    inner sep=1.5pt
  }
]

\filldraw[
  fill=blue!30,
  draw=black,
  line width=0.8pt,
  fill opacity=0.45,
  draw opacity=1,
  rotate around={-45:(0,1)}
]
  (0,1) ellipse [x radius=3, y radius=1.8];

\filldraw[
  fill=orange!30,
  draw=black,
  line width=0.8pt,
  fill opacity=0.45,
  draw opacity=1,
  rotate around={45:(0,1)}
]
  (0,1) ellipse [x radius=3, y radius=1.8];

\filldraw[
  fill=gray!35,
  draw=black,
  line width=0.8pt,
  fill opacity=0.45,
  draw opacity=1,
  rotate around={-45:(-1.25,-0.25)}
]
  (-1.25,-0.25) ellipse [x radius=3, y radius=1.8];

\filldraw[
  fill=red!30,
  draw=black,
  line width=0.8pt,
  fill opacity=0.45,
  draw opacity=1,
  rotate around={45:(1.25,-0.25)}
]
  (1.25,-0.25) ellipse [x radius=3, y radius=1.8];

\filldraw[
  fill=red!30,
  dashed,
  draw=red,
  line width=0.8pt,
  fill opacity=0.25,
  draw opacity=1,
  rotate around={45:(1.6,-0.6)}
]
  (1.6,-0.6) ellipse [x radius=3.6, y radius=2.6];

\begin{scope}
  \clip[rotate around={-45:(0,1)}]
    (0,1) ellipse [x radius=3, y radius=1.8];
  \clip[rotate around={45:(0,1)}]
    (0,1) ellipse [x radius=3, y radius=1.8];
  \clip[rotate around={-45:(-1.25,-0.25)}]
    (-1.25,-0.25) ellipse [x radius=3, y radius=1.8];
  \clip[rotate around={45:(1.6,-0.6)}]
    (1.6,-0.6) ellipse [x radius=3.6, y radius=2.6];

  \fill[white, fill opacity=0.95] (-6,-6) rectangle (6,6);
\end{scope}

\node at (-2, 3.5) {Complicated\\environment};
\node at ( 2, 3.5) {Good\\AI performance};
\node at (-3.5,0) {Faithful\\explanation};
\node at ( 3.5,0) {Intepretable\\explanation};
\node[text=red] at ( 3.5,-2.2) {Short\\explanation};
\node at (0, -0.3) {\textbf{Impossible}\\to achieve};
\end{tikzpicture}
\label{fig:Concept}
\caption{Conceptual diagram illustrating the main claim of this study. In this paper, we prove that the following four conditions cannot all be satisfied simultaneously: the complexity of the operation environment, the goodness of AI performance, the interpretability of AI explanations, and the complete faithfulness of AI explanations. Among these, it is difficult to give a mathematical formulation of the interpretability of AI explanations directly. However, the shortness of an AI explanation is a necessary condition for interpretability and is much easier to formulate mathematically. Therefore, in this paper, under the other three conditions, we prove the impossibility of achieving short AI explanations, and as a consequence, we show the impossibility of achieving interpretable AI explanations.}
\end{figure}

This paper answers the above research question negatively. Specifically, with our novel inequality, we mathematically show that there is a fundamental \Emph{quadrilemma} stating that an AI system and its explanation cannot simultaneously satisfy the following four conditions (See Figure \ref{fig:Concept} for the conceptual diagram).

\begin{itemize}
\item \textbf{Complexity of the operation environment}: The AI operates under an environment in which the true input-output relation is complex.
\item \textbf{Goodness of the AI's performance}: The AI's performance is sufficiently good.
\item \textbf{Interpretability of the AI's explanation}: The explanation is interpretable to humans.
\item \textbf{Complete faithfulness of the AI's explanation}: The explanation is completely faithful to the behavior of the AI.
\end{itemize}
Although this quadrilemma is logically a negative statement, it provides guidance for the field of AI explanation as to which conditions should be prioritized and which condition should be sacrificed.
More specifically, in applications involving natural language or image generation, such as recent large-scale AI systems, it is unavoidable to operate in environments where the true input-output relation is complex. Moreover, the goodness of performance and the interpretability of explanations cannot be sacrificed. Therefore, in many cases, one should give up the requirement that explanations be completely faithful to AI behavior.
This conclusion justifies the direction of existing AI explanation methods, which are not designed to provide completely faithful explanations.
The quadrilemma in this paper suggests that future research on AI explanation methods should not attempt to provide completely faithful explanations of AI behavior, but should instead be designed to explain only those parts of AI behavior that are important for applications, and that industry and public institutions should design AI governance on the premise that AI explanations will continue to be unfaithful.

One significant strength of this paper is that the quadrilemma is given mathematically as a form of an inequality without ambiguity. While the incompleteness of definitions of the relevant concepts has often been regarded as a problem in the field of AI explanation (e.g., \cite{arrieta2020explainable, kumarage2026explainable}), in this paper, in contrast, the necessary conditions for the four conditions appearing in the quadrilemma are all mathematically formulated, which enables avoiding ambiguity in our result.
Specifically, the necessary conditions for the four conditions are formulated as follows.
Firstly, the goodness of AI performance is formulated by using perplexity, which is a basic performance metric in natural language processing.
Secondly, the complete faithfulness of an AI explanation is formulated by requiring not only that the explanation uniquely specify the behavior of the AI, but also that, under an appropriate interpretation program, the concrete probability distribution of the output can be approximately computed to arbitrary precision from the explanation and the input string.
This is because, if simple injectivity from behaviors to strings were adopted as the definition of complete faithfulness, then meaningless assignments to strings would be allowed as long as there were no duplicates.
Thirdly, as a necessary condition for the interpretability of an explanation, we adopt the \Emph{shortness of the explanation}.
The shortness of the explanation is not sufficient, but necessary for the explanation to be interpretable since human capacity to read and remember text is limited \cite{brysbaert2019many} \cite{guinness_pi_places_memorised} (See also Appendix \ref{sec:HumanLimitation}).
The trick here is that, generally speaking, we can prove an impossibility, like a quadrilemma, by refuting a necessary condition only, without proving the original condition.
This trick releases us from the difficulty in formulating the interpretability of explanations directly, which has often been pointed out in the field of AI explanation \cite{arrieta2020explainable, kumarage2026explainable}. As a result, we can substitute the shortness, which is easy to handle mathematically, for the interpretability in proving the quadrilemma.  
Fourthly, we use \Emph{Kolmogorov complexity}, a classical information-theoretic measure, to quantify the complexity of the operation environment.
The nontrivial issue is which object's Kolmogorov complexity to use.
Interestingly, this study proves that na\"ively choosing the Kolmogorov complexity $C (P (\cdot \mid \cdot))$ of the conditional probability mass function $P (\cdot \mid \cdot)$ of the operation environment as a measure of the complexity of the operation environment fails to describe the limitations on AI performance and explanation shortness.
To avoid this problem, this paper adopts the true expectation of conditional Kolmogorov complexity, $\EE_{X, Y \sim P} C (Y \mid X)$, as the measure of the complexity of the operation environment.

Under the above formulations of the four conditions, this paper proves an inequality stating that the sum of the expected logarithmic conditional perplexity and the length of a completely faithful explanation is, up to terms of logarithmic order or smaller, cannot be smaller than the true expectation of conditional Kolmogorov complexity.
This means that, under a complex environment in which the true expectation of conditional Kolmogorov complexity is very large, either the expected logarithmic conditional perplexity becomes large, meaning AI's poor performance, or the length of a completely faithful explanation becomes long, meaning losing interpretability of the explanation; thus, the quadrilemma arises.
This inequality suggests that, unless one sacrifices AI performance, one must give up the complete faithfulness of explanations.

The contributions of this paper are as follows.

\begin{itemize}
    \item We prove a novel inequality showing a fundamental quadrilemma in explaining AI, stating that AI and its explanation cannot satisfy the following four conditions simultaneously: 1) the complexity of the operation environment, 2) the goodness of the AI's performance, 3) the interpretability of the AI's explanation, and 4) the complete faithfulness of the AI's explanation.
    \item We apply the derived inequality to practical applications and show that giving up complete faithfulness of AI explanations is unavoidable. We further justify, from the derived inequality, that explaining only part of the behavior of an AI system or using a surrogate model as an explanation can serve as a solution.
    \item We demonstrate that the Kolmogorov complexity of the conditional probability mass function of the environment cannot describe the limitation on the goodness of AI performance and the interpretability of completely faithful explanations, and that instead we need to use the expectation of conditional Kolmogorov complexity as a measure of the complexity of the operating environment.
\end{itemize}

\section{Related work}
\label{sec:Related}
There are studies that discuss trade-offs concerning AI explanations, but to the best of the authors' knowledge, there is no paper that discusses a trade-off among the complexity of the environment, AI performance, faithfulness of explanations, and interpretability.
Zhang et al.~\cite{zhang2023trade} show, within the scope of removal-based explanations such as SHAP \cite{lundberg2017unified}, the impossibility of simultaneously satisfying interpretability, consistency, and efficiency under a situation in which an interpretable function class is given in advance. However, the class of explanations is limited to removal-based explanations, and the interpretable class must be specified separately. Moreover, they do not clarify the trade-off with the performance of the AI that is explained by such explanations.
Bilodeau et al.~\cite{bilodeau2024impossibility} also theoretically show limitations only within the scope of feature attribution methods such as SHAP \cite{lundberg2017unified}, but their result is not a limitation on general explanation methods and does not clarify the trade-off with AI performance. Moreover, these studies do not analyze explanations that are completely faithful to AI behavior.
Bressan et al.~\cite{bressan2024theory} discuss the trade-off between the height of a tree and approximation performance for a method that explains a model by a decision tree whose branching conditions are written in the form of indicator functions of subsets, and Frost et al.~\cite{frost2024partially} discuss trade-offs of explanations based on decision lists. However, neither directly handles AI systems with stochastic input-output behavior, such as recent LLMs and diffusion models. In addition, explanations by decision trees or decision lists do not necessarily take into account explanations that are simplified by being structured with subroutines.
The present study differs from previous studies in that it does not restrict the type of explanation, and shows a trade-off that takes into account all of the complexity of the environment, AI performance, complete faithfulness of explanations, and interpretability of explanations, in a setting that also includes stochastic input-output behavior.

This paper explains AI behavior from the viewpoints of Kolmogorov complexity and perplexity.
Both are information-theoretically natural quantities. For example, Proposition 6 of Shportko \cite{shportko2026kolmogorov} includes a relation between them, but it is in the context of steganography and does not directly provide implications for explainability.
The relationship between LLMs and Kolmogorov complexity has recently been suggested in many studies \cite{burden2025conversational,pan2025understanding,deletang2024language,shportko2026kolmogorov,elmoznino2025incontext}, but to the best of the authors' knowledge, there is no study that directly shows the impossibility of explanations based on this relationship.

The remainder of this paper is organized as follows.
Section \ref{sec:ProblemFormulation} formulates the problem setting considered in this paper and the four conditions appearing in the quadrilemma.
Section \ref{sec:MainResult} presents the inequality that gives the quadrilemma as the main result of this paper and explains its implications.
It also explains that a naive method fails.
Section \ref{sec:Limitation} clarifies the limitations of the present method and suggests directions for future work.

\section{Problem Formulation}
\label{sec:ProblemFormulation}

The goal of this paper is to derive a quadrilemma concerning AI explanations.
In this section, after giving a comprehensive definition of AI that includes LLMs and diffusion models, we mathematically rigorously define the four conditions appearing in the quadrilemma.
We first introduce mathematical notation.

\noindent \textbf{Notation}: The set of nonnegative integers is denoted by $\NN$, the set of rational numbers by $\QQ$, and the set of real numbers by $\RR$.
The set of natural numbers less than $n$ is denoted by $\NN_{<n}$. That is, $\NN_{<n}=\{0,1,\ldots,n-1\}$.
General sets are denoted by calligraphic letters such as $\mathcal{X}$ and $\mathcal{Y}$.
For a set $\mathcal{A}$, its cardinality is denoted by $|\mathcal{A}|$.
The set of all finite sequences consisting of elements of $\mathcal{A}$ is denoted by $\mathcal{A}^*$.
That is, $\mathcal{A}^* := \mathcal{A}^0 \cup \mathcal{A}^1 \cup \cdots$.
Finite sequences are written in bold italic letters such as $\bm{a}$, and their length is denoted by $|\bm{a}|$.
When we want to explicitly write the elements of a sequence, we write, for example, $\bm{a}=(a_0,a_1,\ldots,a_{m-1})$.
The concatenation of a string $\bm{a}$ and a string $\bm{b}$ is written as $\bm{a}\cdot\bm{b}$.
The empty sequence, that is, the sequence of length zero, is written as $()$.

\subsection{Definition of stochastic input-output AI}

We first need to define AI mathematically. To make the scope of the theory as broad as possible, we want the definition of AI to be as broad as possible.
Therefore, in this paper, we define AI as an input-output relation, defined on a computer, that is appropriate for solving the task we want to solve.
For example, in the case of a general-purpose chat AI, the input is a string obtained by arranging the history of prompts entered by the user with tags, and the output is a string desired by the user.
That is, a chat AI is an appropriate input-output relation whose input set is a set of strings and whose output set is a set of strings.
In the case of text-to-image AI, the input is a prompt string, and the output is a vector representing integer values of each pixel of an image.
That is, a text-to-image AI is an appropriate input-output relation whose input set is a set of strings and whose output set is a set of integer-valued vectors.
We also want to include not only deterministic input-output relations but also stochastic input-output relations using pseudo-random numbers, as in LLMs and diffusion models.
Here, the important point is that all functions constituting AI are \Emph{computable}.
A function being computable means, intuitively, that it is a function that can be described by a finite string called a program.
For a more detailed definition, see Appendix \ref{sec:Preliminary}.
Although the definition is cumbersome, the reasons for restricting the class of functions considered to computable functions are the following two points.
\begin{itemize}
    \item Since AI is a function implemented on a computer, it necessarily belongs to the class of computable functions, and it is sufficient to consider only that class.
    \item The set of all general functions is uncountable, whereas explanations are finite strings, and the set of finite strings over a finite alphabet is at most countable. Therefore, assigning individual explanations to all general functions is trivially impossible, leaving no further room for consideration.
\end{itemize}
Based on the above considerations, we define a \NewTerm{stochastic input-output AI} composed of computable functions as follows.
Let $\Sigma$ be a finite alphabet with at least two elements, used by the computation model when defining computability.
For the countable sets appearing below, we assume that encoding functions into $\Sigma^*$ have been fixed, each of which is injective and has a computable (decidable) image, and that computability is defined based on these encodings.
When $f$ is a partial computable function with source set $\mathcal{X}$ and target set $\mathcal{Y}$, we write $f:\subseteq \mathcal{X}\computableto\mathcal{Y}$. The domain of $f$ is written as $\dom f$. If $x \in \dom f$ then we write $f (x) \downarrow$.
\begin{definition}[Stochastic input-output AI]
\label{def:AI}
Let the input space, or source set, be $\mathcal{X}$ and the output space, or target set, be $\mathcal{Y}$, and suppose that $\mathcal{X}$ and $\mathcal{Y}$ are at most countable.
A \NewTerm{stochastic input-output AI} with domain $\mathcal{X}'\subseteq\mathcal{X}$ that uses sequences of $n$-element uniform random numbers is a pair $A=(f,\tau)$ consisting of a partial computable function $f:\subseteq \big(\NN_{<n}\big)^*\times\mathcal{X}\computableto\mathcal{Y}$, called the \NewTerm{main function} in this paper, which receives a random-number sequence and an input and returns an output, and a partial computable function $\tau:\subseteq \big(\NN_{<n}\big)^*\times\mathcal{X}\computableto\{0,1\}$, called the \NewTerm{random-sequence acceptance decision function} in this paper, satisfying the following properties.
\begin{itemize}
\item \textbf{Consistency between the main function and the domain.}
If $\bm{x}\in\mathcal{X}\setminus\mathcal{X}'$, then for every $\bm{u}\in\big(\NN_{<n}\big)^*$, $(\bm{u},\bm{x})\notin\dom f$.
\item \textbf{Definability of the accepted random-sequence set.}
$\dom\tau\supseteq\big(\NN_{<n}\big)^*\times\mathcal{X}'$.
In particular, for every $\bm{x}\in\mathcal{X}'$, the accepted random-sequence set
$\mathcal{T}_{\bm{x}} := \Big\{\bm{u}\in\big(\NN_{<n}\big)^* \ \Big|\  \tau(\bm{u},\bm{x})\downarrow = 1 \Big\}$ can be defined.
By definition, note that $\mathcal{T}_{\bm{x}}$ is a computable set.
\item \textbf{Prefix-freeness and constructive stopping guarantee of the accepted random-sequence set.}
For every $\bm{x}\in\mathcal{X}'$, the accepted random-sequence set $\mathcal{T}_{\bm{x}}\subseteq(\NN_{<n})^*$ is prefix-free.
Furthermore, there exists a computable function
$N_{\mathrm{stop}}:\mathcal{X}'\times\NN\computableto\NN$
such that, for every $\bm{x}\in\mathcal{X}'$ and every $k\in\NN$, the following holds:
\begin{equation}
1-
\sum_{\substack{\bm u\in\mathcal T_{\bm x}\\ |\bm u|\le N_{\mathrm{stop}}(\bm x,k)}}
n^{-|\bm u|}
\le
|\Sigma|^{-k}.
\end{equation}
\item \textbf{Consistency between the domain of the main function and the accepted random-sequence set.}
If $\bm{x}\in\mathcal{X}'$ and $\bm{u}\in\mathcal{T}_{\bm{x}}$, then $(\bm{u},\bm{x})\in\dom f$.
\end{itemize}
\end{definition}
The above definition of stochastic input-output AI may look cumbersome at first glance, but its naturalness can be seen by considering the intended operation.
Roughly speaking, when the input is $\bm{x}$, uniform random numbers are repeatedly sampled and concatenated, and at the point when the random-number sequence enters the accepted random-sequence set, that random-number sequence and the original input $\bm{x}$ are fed into the main function.
More formally, the intended operation of stochastic input-output AI is described as follows.
\begin{definition}[Intended operation of stochastic input-output AI]
\label{def:AIProcedure}
Let $\mathcal{X}$ be the input space and $\mathcal{Y}$ be the output space, and let $A=(f,\tau)$ be a stochastic input-output AI with domain $\mathcal{X}'\subseteq\mathcal{X}$.
When $A$ receives $\bm{x}\in\mathcal{X}'$ as input, it can produce a stochastic output according to the following procedure, using an oracle $\mathsf{Uniform}_{<n}$ that gives at each time a value regarded as a sample from the discrete uniform distribution on $\{0,1,\ldots,n-1\}$.
Here, $\mathcal{T}_{\bm{x}}$ is the accepted random-sequence set determined by $\tau$, as defined in Definition \ref{def:AI}.
\begin{itemize}
\item \textbf{Input}: A uniform-distribution oracle $\mathsf{Uniform}_{<n}$ and a string input $\bm{x}\in\mathcal{X}$.
\item \textbf{Step 1}: Initialize $\bm{u}\gets ()$ as the empty string.
\item \textbf{Step 2}: If $\bm{u}\in\mathcal{T}_{\bm{x}}$, then let $f$ accept $\bm{u}$, \textbf{output} $f(\bm{u},\bm{x})$, and \textbf{terminate} (undefined if $f(\bm{u},\bm{x})\uparrow$).
\item \textbf{Step 3}: $u_\mathsf{new}\gets\mathsf{Uniform}_{<n}$.
\item \textbf{Step 4}: $\bm{u}\gets\bm{u}\cdot u_\mathsf{new}$. Then go to \textbf{Step 2}.
\end{itemize}
The above stochastic operation is called the \NewTerm{intended operation} of the stochastic input-output AI $A$.
\end{definition}

\begin{remark}[Relation to pseudo-random number generators]
Ideally, the uniform random oracle $\mathsf{Uniform}_{<n}$ is a true random number generator that independently generates a random number following the true uniform distribution supported on $\{0,1,\ldots,n-1\}$ at each time.
In implementation, however, it is not realistic to use a true random number generator as the uniform random oracle $\mathsf{Uniform}_{<n}$, and pseudo-random numbers are used.
For example, the Mersenne Twister, a representative pseudo-random algorithm, can be regarded as a uniform random oracle with $n=2^{32}$.
\end{remark}

We now confirm that AI systems used in practice are examples of stochastic input-output AI.

\begin{example}[Stochastic input-output AI includes deterministic input-output relations, diffusion models, and LLMs]
Definition \ref{def:AI} includes deterministic input-output relations, AI models such as diffusion models that consume a fixed number of pseudo-random numbers, and AI models such as LLMs in which the number of consumed pseudo-random numbers is determined depending on the values of the pseudo-random numbers generated so far.
More specifically:
\begin{itemize}
\item \textbf{Deterministic functions}: Any deterministic input-output function represented by a partial computable function is a kind of stochastic input-output AI. Specifically, by setting $\mathcal{T}_{\bm{x}}=\{()\}$ for every $\bm{x}\in\mathcal{X}'$, one obtains a deterministic input-output relation.
\item \textbf{Diffusion models}: In the case of diffusion models, the number of consumed uniform random numbers is often fixed. For a diffusion model consuming $\ell$ uniform random numbers, by setting
\[
\mathcal{T}_{\bm{x}}=\Big\{\bm{u}\in\big(\NN_{<n}\big)^* \ \Big| \ |\bm{u}|=\ell\Big\}
\]
and defining an appropriate main function, it is formulated as a stochastic input-output AI.
\item \textbf{Large language models}: In the case of large language models, the number of consumed random numbers is not predetermined. Rather, a fixed number of random numbers is consumed each time one output token is determined, and the number of consumed random numbers differs depending on the number of output tokens. In addition, a termination condition that can be determined by computation is set in advance, such as the generation of an EOS (end of sentence) token. Whether the termination condition is satisfied is determined only by the random-number sequence $\bm{u}$ generated so far and the input $\bm{x}$. Therefore, when the input is $\bm{x}$, by setting
\[
\mathcal{T}_{\bm{x}}
=
\left\{\bm{u}\in\big(\NN_{<n}\big)^* \ \middle| \ \begin{aligned} &\text{$\bm{u},\bm{x}$ satisfy the termination condition,} \\ &\text{and for every proper prefix $\bm{u}'$ of $\bm{u}$, $\bm{u}',\bm{x}$ do not satisfy the termination condition}\end{aligned}\right\}
\]
and defining an appropriate main function, it is formulated as a stochastic input-output AI.
\end{itemize}
\end{example}

The stochastic behavior of the input and output of a stochastic input-output AI can be completely described by a conditional probability mass function.

\begin{definition}[Conditional probability mass function determined by stochastic input-output AI]
For a stochastic input-output AI $A$, we write $Q_A$ for the conditional probability mass function obtained when the procedure in Definition \ref{def:AIProcedure} is executed using a true discrete uniform random number generator as $\mathsf{Uniform}_{<n}$.
Here, $Q_A(\cdot\mid\cdot):\mathcal{X}'\times\mathcal{Y}\computableto[0,1]$, and the probability mass function of the output when $\bm{x}\in\mathcal{X}'$ is used as input is denoted by $Q_A(\cdot\mid\bm{x}):\mathcal{Y}\computableto[0,1]$.

Moreover, noting that the probability that any $\bm{u}\in\mathcal{T}_{\bm{x}}\subsetneq\big(\NN_{<n}\big)^*$ is accepted is $n^{-|\bm{u}|}$, the concrete value of $Q_A(\cdot\mid\cdot)$ is given as follows:
\begin{equation}
Q_A(\bm{y}\mid\bm{x})
=
\sum_{\bm{u}\in\mathcal{T}_{\bm{x}}} n^{-|\bm{u}|}\mathbbm{1}\big(f(\bm{u},\bm{x})=\bm{y}\big).
\end{equation}
\end{definition}

\begin{remark}[Meaning of satisfying Kraft's inequality with equality]
Once $\mathcal{T}_{\bm{x}}$ is prefix-free, Kraft's inequality
\(
\sum_{\bm{u}\in\mathcal{T}_{\bm{x}}}n^{-|\bm{u}|}\le 1
\)
is necessarily satisfied.
This inequality being satisfied with equality, namely
\(
\sum_{\bm{u}\in\mathcal{T}_{\bm{x}}}n^{-|\bm{u}|}=1,
\)
is a necessary and sufficient condition for the algorithm defined by Definition \ref{def:AI} to halt with probability $1$ and for $Q_A(\cdot\mid\bm{x})$ to be a probability mass function rather than a genuine sub-probability.
\end{remark}

Now, as stated above, the behavior of a stochastic input-output AI $A$ is described by the conditional probability mass function $Q_A(\cdot\mid\cdot)$ that it determines.
Thus, formally speaking, explaining a stochastic input-output AI means explaining $Q_A(\cdot\mid\cdot)$.
Therefore, it is important to know what function class $Q_A(\cdot\mid\cdot)$ belongs to.
In fact, the conditional probability mass function determined by a stochastic input-output AI $A$ is always computable.

\begin{proposition}[The conditional probability mass function determined by stochastic input-output AI is computable]
\label{prp:AIandComputable}
For every stochastic input-output AI $A$, $Q_A(\cdot\mid\cdot)$ is always a computable function.
\end{proposition}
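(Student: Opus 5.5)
To prove Proposition \ref{prp:AIandComputable}, I will show that $Q_A(\cdot\mid\cdot)$ is a computable real-valued function in the standard sense of Appendix \ref{sec:Preliminary}. That is, there is a program which, given $(\bm{x},\bm{y})\in\mathcal{X}'\times\mathcal{Y}$ and a precision $k\in\NN$, halts and outputs a rational $q$ with $|q-Q_A(\bm{y}\mid\bm{x})|\le|\Sigma|^{-k}$.

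The natural approximant is the truncated sum
\[
q_k(\bm{x},\bm{y}) := \sum_{\substack{\bm{u}\in\mathcal{T}_{\bm{x}}\\ |\bm{u}|\le N_{\mathrm{stop}}(\bm{x},k)}} n^{-|\bm{u}|}\,\mathbbm{1}\big(f(\bm{u},\bm{x})=\bm{y}\big).
\]

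The algorithm has three steps.

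\textbf{Step 1.} Compute $N:=N_{\mathrm{stop}}(\bm{x},k)$. This halts because $N_{\mathrm{stop}}$ is a total computable function on $\mathcal{X}'\times\NN$.

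\textbf{Step 2.} Enumerate the finitely many sequences $\bm{u}\in(\NN_{<n})^*$ with $|\bm{u}|\le N$, and run $\tau(\bm{u},\bm{x})$ on each. This halts because $\dom\tau\supseteq(\NN_{<n})^*\times\mathcal{X}'$, so membership in $\mathcal{T}_{\bm{x}}$ is decidable for these finitely many candidates.

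\textbf{Step 3.} For each accepted $\bm{u}$, evaluate $f(\bm{u},\bm{x})$. This halts because of the consistency condition between $\dom f$ and $\mathcal{T}_{\bm{x}}$. Compare the result with $\bm{y}$ through the fixed injective encodings; equality of encoded strings is decidable. Then sum the rationals $n^{-|\bm{u}|}$ over the matches.

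The computation is a finite composition of halting computable steps, so $q_k$ is a computable rational-valued function of $(\bm{x},\bm{y},k)$.

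For the error bound, all terms of the full series are nonnegative, so
\[
0\le Q_A(\bm{y}\mid\bm{x})-q_k(\bm{x},\bm{y}) \le \sum_{\substack{\bm{u}\in\mathcal{T}_{\bm{x}}\\ |\bm{u}|> N}} n^{-|\bm{u}|}.
\]
The right-hand side equals $\sum_{\bm{u}\in\mathcal{T}_{\bm{x}}}n^{-|\bm{u}|}-\sum_{|\bm{u}|\le N}n^{-|\bm{u}|}$. By Kraft's inequality for the prefix-free set $\mathcal{T}_{\bm{x}}$, the first sum is at most $1$. Hence the tail is bounded by the quantity controlled in the stopping guarantee of Definition \ref{def:AI}, which is at most $|\Sigma|^{-k}$. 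This gives the required precision.

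I expect the main obstacle to be definitional rather than technical. The argument must match whatever notion of computability for $[0,1]$-valued functions Appendix \ref{sec:Preliminary} fixes, and it must respect the encodings. If the appendix instead asks for a sequence with $|q-Q|\le 2^{-k}$, I would reindex $k$ using $|\Sigma|\ge 2$. If it asks for a fast Cauchy sequence of rationals, then $(q_k)$, being monotone and within $|\Sigma|^{-k}$ of its limit, already works. I also need to state that inputs outside $\mathcal{X}'$ are outside the domain, consistent with $Q_A$ being defined on $\mathcal{X}'\times\mathcal{Y}$, so partiality in $\bm{x}$ needs no extra handling.
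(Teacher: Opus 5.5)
Your proof is correct and follows the paper's argument. Both use the same truncated sum at $N_{\mathrm{stop}}(\bm x,k)$, computed using the decidability of $\mathcal{T}_{\bm x}$ and the domain-consistency condition, and both bound the tail with the stopping guarantee. The only difference is that the paper first proves $\sum_{\bm u\in\mathcal{T}_{\bm x}}n^{-|\bm u|}=1$, which also shows that $Q_A(\cdot\mid\bm x)$ is a genuine probability mass function, before bounding the tail; you invoke only Kraft's inequality $\le 1$, which already suffices for the error bound.
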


For the above reasons, in what follows, it is sufficient for this paper to consider only explanations of computable conditional probability mass functions.

\begin{remark}[The above framework is not limited to black-box explanation]
Since $Q_A(\cdot\mid\cdot)$ is the stochastic behavior of the input-output relation, explaining it may at first appear to be possible only by treating the AI as a black box.
However, if the numerical values of internal states are directly concatenated to the output $\bm{y}$ and defined as a new output, the following discussion can be interpreted, without modification, as a discussion of limitations in the case where internal states as well as the input-output relation are explained.
\end{remark}

\subsection{Definition of the four conditions of the quadrilemma}

\noindent \textbf{Formulation of AI performance based on perplexity}: As an evaluation criterion for stochastic input-output AI, perplexity is widely used, especially in the field of natural language processing (e.g., \cite{bengio2003neural,mikolov2010recurrent,radford2019language}).
This paper also adopts the logarithm of conditional perplexity and its expectation as evaluation criteria.

\begin{definition}[Perplexity]
\label{def:Perplexity}
Let the input space be $\mathcal{X}$ and the output space be $\mathcal{Y}$.
Given an input $\bm{x}\in\mathcal{X}$ and an output $\bm{y}\in\mathcal{Y}$, the \NewTerm{perplexity} of a conditional probability mass function $Q(\cdot\mid\cdot)$ is defined as
\(
\frac{1}{Q(\bm{y}\mid\bm{x})}.
\)
When a true distribution $P$ on $\mathcal{X}\times\mathcal{Y}$ is given, the expected logarithmic perplexity is defined as
$\EE_{\bm{X},\bm{Y}\sim P} -\log Q(\bm{Y}\mid\bm{X})$.
\end{definition}

\begin{remark}[On the definition of perplexity]
In cases such as large language models where the input space $\mathcal{X}$ can be written as $\mathcal{X}=\Lambda^*$ using an alphabet $\Lambda$, the quantity
\(
\frac{1}{\sqrt[|\bm{y}|]{Q(\bm{y}\mid\bm{x})}},
\)
which normalizes the above definition of perplexity by the output length $|\bm{y}|$, is also often used as the definition of perplexity.
However, in this paper, we adopt Definition \ref{def:Perplexity} as a more straightforward form that can be applied to more general data formats.
\end{remark}

\noindent \textbf{Formulation of completely faithful explanations}: We consider how to define whether an explanation of the conditional probability mass function $Q_A$ determined by an AI is completely faithful.
When we say that an explanation of a function is completely faithful to the behavior of that function, the explanation should uniquely identify the behavior of that function and distinguish it from others.
In addition, since a random string is meaningless as an explanation, a completely faithful explanation should be described formally enough that, when interpreted by an appropriate formal rule fixed in advance, the concrete values of the function can be recovered.
A formal rule must be able to be written as a program, so a completely faithful explanation should be able to compute the concrete values of the function when interpreted by a computable function.
Based on this idea, we define a completely faithful explanation as follows.

\begin{definition}[Interpretation function and completely faithful explanation]
\label{def:Explanation}
Let $\Sigma$ be the alphabet natively handled by the computation model used when discussing computability.
Let $\Lambda$ be a finite set of characters used for explanations, typically characters used by humans in natural language.
Let $\Enc^{\mathsf{PF}}_\Lambda:\Lambda\twoheadrightarrow\Sigma^*$ be a prefix-free encoding function for $\Lambda$.
In particular, assume that $\Enc^{\mathsf{PF}}_\Lambda$ is injective, its image is computable, and its image is prefix-free.
Also assume that $\Enc^{\mathsf{PF}}_\Lambda$ is chosen efficiently so that
$L_\Lambda := \max\Big\{\big|\Enc^{\mathsf{PF}}_\Lambda(\lambda)\big| \ \Big| \ \lambda\in\Lambda\Big\}$
is small.
For a string $\bm e=e_0 e_1 \cdots e_{r-1}\in\Lambda^*$, define
\begin{equation}
\Enc_\Lambda(\bm e):=
\Enc^{\mathsf{PF}}_\Lambda(e_0) \cdot \Enc^{\mathsf{PF}}_\Lambda(e_1) \cdot \cdots \cdot \Enc^{\mathsf{PF}}_\Lambda(e_{r-1}).
\end{equation}

Let $\mathcal{Z}$ be an at most countable set.
Fix a computable \NewTerm{interpretation function}
$\Interpret:\subseteq \Lambda^*\times\NN\times\mathcal{Z}\computableto\QQ$
which interprets an explanation written using characters in $\Lambda$ as a conditional probability mass function under $\Enc_\Lambda$.
A string $\bm{e}\in\Lambda^*$ is a \NewTerm{completely faithful explanation} of a function $f:\mathcal{Z}\computableto\RR$ under the interpretation function $\Interpret$ if, for every $z\in\mathcal{Z}$ and every $k\in\NN$,
\[
\big|\Interpret(\bm{e},k,z)-f(z)\big|<|\Sigma|^{-k}
\]
holds.
The \NewTerm{length} of the explanation is defined as its length as a string, namely $|\bm{e}|$.
\end{definition}

\begin{example}[Example of $\Lambda$]
As an example, one can take $\Lambda$ to be the set of characters included in Unicode, and take $\Enc_\Lambda$ to be an encoding method such as UTF-8.
In this case, $L_\Lambda=32$.
If only ASCII characters are used, then $L_\Lambda=7$.
\end{example}

\noindent \textbf{Formulation of the complexity of the AI operation environment by Kolmogorov complexity}: In quantifying the complexity of the operation environment of AI, it is information-theoretically natural to use Kolmogorov complexity \cite{Solomonoff1964PartI, Solomonoff1964PartII, Kolmogorov1965ThreeApproaches, Chaitin1969SimplicitySpeed}.
However, it is not obvious which object's Kolmogorov complexity should be used.
When the input-output relation is stochastic, one might naively want to consider the Kolmogorov complexity of the conditional probability mass function.
However, as rigorously discussed in Appendix \ref{sec:NaiveResult}, the Kolmogorov complexity of the conditional probability mass function is not directly related to AI performance and the interpretability of AI explanations.
Therefore, we use conditional Kolmogorov complexity as the indicator.

\begin{definition}[Conditional plain Kolmogorov complexity]
Fix an alphabet $\Sigma$ and a universal conditional function $U$ under a pairing function $\langle \bullet, \bullet \rangle$. For their definitions, see Appendix \ref{sec:Preliminary}.
Let $\mathcal{X}$ and $\mathcal{Y}$ be at most countable sets, and fix encoding functions
$\Enc_{\mathcal{X}}:\mathcal{X}\twoheadrightarrow\Sigma^*
\quad\text{and}\quad
\Enc_{\mathcal{Y}}:\mathcal{Y}\twoheadrightarrow\Sigma^*$
for them.
For $x\in\mathcal{X}$ and $y\in\mathcal{Y}$, the conditional plain Kolmogorov complexity $C_U(y\mid x)\in\NN$ is defined by
\begin{equation}
C_U(y\mid x)
:=
\min\{|\bm{p}| \mid U(\langle \Enc_{\mathcal{X}}(x), \bm{p}\rangle)\downarrow=\Enc_{\mathcal{Y}}(y)\}.
\end{equation}
\end{definition}

\begin{remark}[Plain Kolmogorov complexity and prefix-free Kolmogorov complexity]
The above $C_U(f)$ is called plain Kolmogorov complexity, whereas in information theory, prefix-free Kolmogorov complexity, denoted by $K_U(f)$, is often used because it has better properties.
These quantities are closely related, but they are different, and this should be noted.
\end{remark}

\section{Fundamental Quadrilemma and its implications}
\label{sec:MainResult}

In this section, we state the main theorem of this paper, which gives a quantitative relation among the expected conditional perplexity of stochastic generative AI, the expectation of conditional Kolmogorov complexity, and computer-interpretable explanations, and then explain its meaning.
As already stated, the behavior of a stochastic input-output AI $A$ is completely described by the conditional probability mass function $Q_A$ that it determines.
Therefore, it suffices to show an inequality expressing the relation between the expected logarithmic perplexity of $Q_A$, which represents the performance of $A$, and the length of an explanation of $Q_A$, which is a string that can reproduce the behavior of $A$.
This is obtained as follows.

\begin{theorem}[Fundamental Quadrilemma]
\label{thm:Quadrilemma}
Fix a universal Turing machine for defining plain Kolmogorov complexity, and let $\Sigma$ be the alphabet it natively handles.
Fix one interpretation function whose character set is $\Lambda$, and let $L_\Lambda$ be the maximum code length when encoding $\Lambda$.
Then there exists a constant $c$ such that, for every stochastic input-output AI $A$ and the conditional probability mass function $Q_A(\cdot\mid\cdot):\mathcal{X}\times\mathcal{Y}\computableto[0,1]$ that it determines, the following two statements hold.

(i) For every $x\in\mathcal{X}$, $y\in\mathcal{Y}$, and every $\bm e_x\in\Lambda^*$ that is a completely faithful explanation of the probability mass function $Q_A(\cdot\mid x)$, the following holds:
\begin{equation}
-\log Q_A(y\mid x)
+
L_\Lambda|\bm e_x|
+
2 \log(L_\Lambda|\bm e_x|+1)
\ge
C_U(y\mid x)+c.
\end{equation}

(ii) Suppose that for every $x\in\mathcal{X}$, a completely faithful explanation $\bm e_x\in\Lambda^*$ of $Q_A(\cdot\mid x)$ is given.
For every probability distribution $P\in\mathcal{P}(\mathcal{X}\times\mathcal{Y})$, the following holds:
\begin{equation}
\EE_{X,Y\sim P}[-\log Q_A(Y\mid X)]
+
\EE_{X,Y\sim P}\left[
L_\Lambda|\bm e_X|
+
2 \log(L_\Lambda|\bm e_X|+1) 
\right]
\ge
\EE_{X,Y\sim P}C_U(Y\mid X)+c.
\end{equation}

Note that $c$ does not depend on $A$, $x$, or $y$.
\end{theorem}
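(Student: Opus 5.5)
The plan is to prove (i) by an explicit description of $y$ given $x$: the code is a self-delimiting encoding of the explanation followed by an arithmetic (Shannon--Fano--Elias) code for $y$ under $Q_A(\cdot\mid x)$. Statement (ii) then follows from (i) by taking expectations under $P$. Because the constant $c$ must not depend on $x$, $y$ or $A$, the approach is to build one fixed decoder $D$ depending only on $\Interpret$, $\Sigma$, $\Lambda$ and the encodings. We then invoke the invariance property of $U$ from Appendix \ref{sec:Preliminary}: there is a constant $c_D$ with $C_U(y\mid x)\le |\bm q|+c_D$ whenever $D(\Enc_{\mathcal X}(x),\bm q)=\Enc_{\mathcal Y}(y)$. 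The $c$ in the statement will be $-c_D$ minus the additive slack of the arithmetic code, possibly up to sign conventions. Since $\mathcal{X},\mathcal{Y}$ are fixed with the interpretation function, this is legitimate.

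\textbf{The description.} Set $m=|\Enc_\Lambda(\bm e_x)|\le L_\Lambda|\bm e_x|$, and let $\bar m$ be a self-delimiting code of $m$ of length $2\log(m+1)+O(1)$, for instance by doubling bits. Because $\log$ is increasing, bounding by $L_\Lambda|\bm e_x|$ gives exactly the stated terms $L_\Lambda|\bm e_x|+2\log(L_\Lambda|\bm e_x|+1)$. The code is $\bm q=\bar m\cdot\Enc_\Lambda(\bm e_x)\cdot\bm w$. Note that prefix-freeness of $\Enc^{\mathsf{PF}}_\Lambda$ already lets $D$ parse the characters, but it does not tell $D$ where the explanation ends. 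That is why $\bar m$ is needed, and it is the source of the $2\log$ term. The tail $\bm w$ is the codeword for $y$, of length at most $-\log Q_A(y\mid x)+O(1)$, with logarithms taken base $|\Sigma|$ consistently. Since $U$ is a plain (not prefix) machine, the last segment $\bm w$ need not be self-delimiting; only its length $|\bm w|$ is available to $D$. This is why plain complexity appears in the statement.

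\textbf{The main obstacle: decoding $\bm w$ from an approximable, not exactly computable, PMF.} Given $\bm e_x$, $D$ can compute rational approximations $\Interpret(\bm e_x,k,y')$ of $Q_A(y'\mid x)$ to precision $|\Sigma|^{-k}$ for every $y'$, but it cannot compute exact values or cumulative sums over the countable $\mathcal Y$. The first approach I would try is to enumerate $\mathcal Y$ as $y_0,y_1,\dots$ via $\Enc_{\mathcal Y}$ and assign $y$ to the lexicographically first $|\Sigma|$-adic interval of length $|\Sigma|^{-\ell}$, with $\ell=\lceil -\log Q_A(y\mid x)\rceil+2$, lying inside a slightly shrunk cumulative interval of $y$. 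Working with lower approximations $\Interpret(\bm e_x,k,y')-|\Sigma|^{-k}$ at precisions $k$ growing with the index of $y'$ keeps the total error below $|\Sigma|^{-\ell-1}$.

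A cleaner alternative avoids cumulative sums altogether and uses a ranking argument. $D$ reads $\ell=|\bm w|$ and enumerates pairs $(y',k)$ in dovetailed fashion. It places $y'$ in a list $S_\ell$ once some approximation certifies $Q_A(y'\mid x)>|\Sigma|^{-\ell}$, that is, once $\Interpret(\bm e_x,k,y')-|\Sigma|^{-k}>|\Sigma|^{-\ell}$. Because $\sum_{y'}Q_A(y'\mid x)\le 1$, the list satisfies $|S_\ell|<|\Sigma|^{\ell}$. Hence $y$ is identified by its index in $S_\ell$, written as a string of length exactly $\ell$, and $D$ outputs the $\bm w$-th element listed. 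Here $\ell$ is chosen so that the strict inequality certifiably holds for $y$, namely $\ell=\lfloor -\log Q_A(y\mid x)\rfloor+1$; if $Q_A(y\mid x)=0$ the inequality is trivial. I will adopt this variant, since it needs only semi-decidability of ``$Q_A(y'\mid x)>|\Sigma|^{-\ell}$'', which the definition of complete faithfulness provides. I expect the only delicate points to be checking that the listing order depends on nothing beyond $(\bm e_x,\ell)$, so that it is reproducible, and handling the boundary case of equality with $|\Sigma|^{-\ell}$ through the strict inequality.

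\textbf{Assembling the bound and deriving (ii).} Summing the three segment lengths gives
\[
C_U(y\mid x)\le -\log Q_A(y\mid x)+L_\Lambda|\bm e_x|+2\log(L_\Lambda|\bm e_x|+1)+c',
\]
which is (i) with $c=-c'$. Part (ii) is then immediate: apply (i) pointwise with $\bm e_X$ and integrate against $P$. Terms that are $+\infty$ on the left, where $Q_A=0$, only help. If the right side is infinite, then so is the left, and the inequality still holds in the extended reals.
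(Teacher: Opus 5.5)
Your argument is correct, but it takes a different route from the paper.

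The paper proceeds modularly through prefix complexity, in four steps:
\begin{enumerate}
\item $C_U(y\mid x)\le K_V(y\mid x)+O(1)$.
\item A Shannon--Fano--Elias lemma gives $K_V(y\mid x)\le -\log Q_A(y\mid x)+K_V(Q_A(\cdot\mid x)\mid x)+O(1)$. Here the decoder reads a shortest prefix-free program $\bm q^*$ for $Q_A(\cdot\mid x)$ followed by an arithmetic codeword.
\item The conversion $K_V\le C_U+2\log(C_U+1)+O(1)$ is applied to $Q_A(\cdot\mid x)$.
\item Finally, $C_U(Q_A(\cdot\mid x)\mid x)\le L_\Lambda|\bm e_x|+O(1)$ via the interpretation function.
\end{enumerate}
You instead write one plain description $\bar m\cdot\Enc_\Lambda(\bm e_x)\cdot\bm w$. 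You replace arithmetic coding by ranking $y$ among the outputs certified to satisfy $Q_A(y'\mid x)>|\Sigma|^{-\ell}$, and you read $\ell=|\bm w|$ for free because $U$ is a plain machine. Your $2\log$ term comes from delimiting the explanation rather than from the $C_U\to K_V$ conversion, but it is the same cost.

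The paper's decomposition buys a reusable intermediate inequality that holds for any program of $Q$, not only one extracted from an explanation. Yours is shorter and avoids both conversion constants and all the precision bookkeeping. It needs only semi-decidability of the threshold predicate together with $\sum_{y'}Q_A(y'\mid x)\le 1$.

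Your dovetailing also makes the argument robust to zero-mass outputs, and this is not cosmetic. The paper's $\mathsf{SFE}_{\mathsf{Dec}}$ scans $i=0,1,\dots$ sequentially. Its Step~4 search for $k^*$ never halts when $Q(y_i\mid x)=0$, because the approximation is at most $|\Sigma|^{-k}$ while the threshold is $(|\Sigma|^3-1)|\Sigma|^{-k}$. So, as written, it stalls whenever a zero-probability output precedes $y$ in the enumeration, which is typical of real AI systems. Your enumeration has the needed dovetailing built in.

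One small correction: literally doubling binary digits gives $2\log_2(m+1)$. For $|\Sigma|>2$, double base-$|\Sigma|$ digits instead, or prefix a unary digit count as the paper does, to get $2\log_{|\Sigma|}(m+1)+O(1)$.
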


\begin{remark}[Interpretation of the Fundamental Quadrilemma theorem]
Theorem \ref{thm:Quadrilemma} suggests that it is impossible to achieve all of the following simultaneously.
\begin{itemize}
\item The AI operation environment has a complex input-output relation, that is, $\mathbb{E}_{X,Y\sim P}C_U(Y\mid X)$ is large.
\item AI performance is high, that is, the expected perplexity $\mathbb{E}_{X,Y\sim P}[-\log Q(Y\mid X)]$ is small.
\item A completely faithful explanation $\bm{e}_{x}$ of the AI is given for all $x \in \mathcal{X}$.
\item The AI explanation is short, that is, $\mathbb{E}_{X,Y\sim P}|\bm{e}_X|$ is small.
\end{itemize}
Note that the theorem states about local explanations $e_x$ for all $x \in \mathcal{X}$, a global explanation (an explanation of $Q$, instead of $Q(\cdot \mid x)$) must be longer than any $e_x$ up to constant, so the quadrilemma holds for a global explanation.

Some might suspect that $\EE_{X,Y\sim P}[-\log Q_A(Y\mid X)]$ may be close to $\EE_{X,Y\sim P}C_U(Y\mid X)$, resulting in no implications on $|\bm{e}_x|$, but it is not generally true. See Proposition \ref{prp:KCEntropyGap} in Appendix for details.

Although the constraint appears to become weaker when $L_\Lambda$ is large, the fact that $L_\Lambda$ cannot be made small means that the number of elements of the character set $\Lambda$ is large, and therefore the cognitive efficiency per character for humans decreases. Thus, in substance, this does not relax the constraint.
\end{remark}

\begin{remark}[Implications of the Quadrilemma theorem]
The Quadrilemma theorem claims the necessity of giving up one of the four conditions.
In environments where large-scale AI is applied, the complexity of the input-output relation is usually unavoidable.
Moreover, one cannot sacrifice the shortness of explanations, which is a necessary condition for interpretability.
Therefore, one must give up either the goodness of AI performance or the complete faithfulness of explanations.
In the usual situation, where one pursues good AI performance, it is necessary to give up the complete faithfulness of explanations.
Rather, the field of AI explanation should proceed by pursuing useful explanations that are not completely faithful.
In exceptional situations where a surrogate model is acceptable, AI performance is sacrificed, and therefore complete faithfulness of AI explanations may be achievable. This is also part of the implication of the Quadrilemma theorem.
\end{remark}

\begin{remark}[Relationship between explanation length and interpretability]
This paper is based on the premise that an excessively long explanation is not interpretable, regardless of its content.
For a quantitative discussion, see Appendix \ref{sec:HumanLimitation}.

One might point out that even a long explanation could be interpretable if it is regular.
For example, an explanation obtained by concatenating the length-two string \texttt{01} one million times would be interpretable even though it is long.
However, in such a case, there should exist an extremely short equivalent explanation that uses the regularity, such as
\[
\texttt{ans = ""; for i in range(10 ** 6): ans = ans + "01"; return ans}.
\]
According to the Fundamental Quadrilemma, as long as the other three conditions are satisfied, no short explanation exists.
Therefore, the existence of a long but regular and interpretable explanation is also ruled out.
\end{remark}
\begin{remark}[Necessity of considering the complexity of the AI operation environment and the difficulty of computing its indicator]
It is inevitable that the complexity of the AI operation environment appears in the quadrilemma.
If the true input-output relation is the identity function, then an AI with the best performance and a completely faithful and interpretable explanation of its behavior are trivially obtained.
On the other hand, when the operation environment is complex, computing a quantity that characterizes the complexity of that environment also becomes extremely difficult, and is practically impossible.
The quantity used as the complexity indicator in this paper, $\EE_{X,Y\sim P}C_U(Y\mid X)$, also has little hope of being computable in practice. Separately from this, Kolmogorov complexity also has a computability problem.
Since the environment itself is complex, computing its complexity is, in principle, a difficult problem.
Nevertheless, Theorem \ref{thm:Quadrilemma} is suggestive.
If an AI that achieves the true input-output relation and is written by a short program were realized, then $\EE_{X,Y\sim P}C_U(Y\mid X)$ would be small, and the quadrilemma would not be a problem.
However, considering the history in which text generation could not achieve sufficient performance until the advent of large-scale language models, one can formulate the hypothesis that such a situation will not occur.
This hypothesis, however, is not something that can be confirmed by numerical computation, and it will likely be shown empirically as AI develops in the future.
\end{remark}

\section{Conclusion, limitations of this study and future work}
\label{sec:Limitation}
In this study, we mathematically showed that there exists a quadrilemma stating that AI and its explanation cannot simultaneously satisfy the following four conditions: \textbf{the complexity of the environment in which the AI operates}, \textbf{the goodness of AI performance}, \textbf{the interpretability of the AI explanation}, and \textbf{the complete faithfulness of the AI explanation}.
This quadrilemma suggests that, in most applications where we cannot change the environment or sacrifice good AI performance and an interpretable explanation, we should give up complete faithfulness of explanations.

Despite our theorem's significance, this study has the following limitations, which provide promising directions for future work.

\noindent \textbf{Quantification of faithfulness of explanations}: This study treats only the case where explanations are completely faithful to AI behavior.
Relaxing this condition, quantifying the faithfulness of explanations that are not completely faithful, and deriving limitations on such explanations may be useful and interesting future work.

\noindent \textbf{Time-complexity constraints}: This study used Kolmogorov complexity that takes into account only computability, and did not take time-complexity constraints into account.
Recent models based on Transformers \cite{vaswani2017attention} and models using diffusion strategies \cite{rombach2022high} share the feature that the same parameters are reused many times.
In other words, they have many parameters, but their time complexity is even larger.
Based on this observation, it may be interesting future work to consider limitations of explainability when the true input-output relation has a lower bound on time complexity.

\newpage

\bibliographystyle{ieeetr}
\bibliography{ref}

@techreport{nist2024genai_profile,
  author      = {{National Institute of Standards and Technology}},
  title       = {Artificial Intelligence Risk Management Framework: Generative Artificial Intelligence Profile},
  institution = {National Institute of Standards and Technology},
  number      = {NIST AI 600-1},
  year        = {2024},
  doi         = {10.6028/NIST.AI.600-1},
  url         = {https://nvlpubs.nist.gov/nistpubs/ai/NIST.AI.600-1.pdf}
}

@techreport{edps2025airiskmanagement,
  author      = {{European Data Protection Supervisor}},
  title       = {Guidance for Risk Management of Artificial Intelligence systems},
  institution = {European Data Protection Supervisor},
  year        = {2025},
  url         = {https://www.edps.europa.eu/system/files/2025-11/2025-11-11_ai_risks_management_guidance_en.pdf}
}

@inproceedings{ribeiro2016should,
  title={" Why should i trust you?" Explaining the predictions of any classifier},
  author={Ribeiro, Marco Tulio and Singh, Sameer and Guestrin, Carlos},
  booktitle={Proceedings of the 22nd ACM SIGKDD international conference on knowledge discovery and data mining},
  pages={1135--1144},
  year={2016}
}

@article{lundberg2017unified,
  title={A unified approach to interpreting model predictions},
  author={Lundberg, Scott M and Lee, Su-In},
  journal={Advances in neural information processing systems},
  volume={30},
  year={2017}
}

@inproceedings{sundararajan2017axiomatic,
  title={Axiomatic attribution for deep networks},
  author={Sundararajan, Mukund and Taly, Ankur and Yan, Qiqi},
  booktitle={International conference on machine learning},
  pages={3319--3328},
  year={2017},
  organization={PMLR}
}

@article{ahern2019normlime,
  title={NormLime: A new feature importance metric for explaining deep neural networks},
  author={Ahern, Isaac and Noack, Adam and Guzman-Nateras, Luis and Dou, Dejing and Li, Boyang and Huan, Jun},
  journal={arXiv preprint arXiv:1909.04200},
  year={2019}
}

@inproceedings{ribeiro2018anchors,
  title={Anchors: High-precision model-agnostic explanations},
  author={Ribeiro, Marco Tulio and Singh, Sameer and Guestrin, Carlos},
  booktitle={Proceedings of the AAAI conference on artificial intelligence},
  volume={32},
  number={1},
  year={2018}
}

@article{lundberg2020local,
  title={From local explanations to global understanding with explainable AI for trees},
  author={Lundberg, Scott M and Erion, Gabriel and Chen, Hugh and DeGrave, Alex and Prutkin, Jordan M and Nair, Bala and Katz, Ronit and Himmelfarb, Jonathan and Bansal, Nisha and Lee, Su-In},
  journal={Nature machine intelligence},
  volume={2},
  number={1},
  pages={56--67},
  year={2020},
  publisher={Nature Publishing Group UK London}
}

@article{setzu2021glocalx,
  title={Glocalx-from local to global explanations of black box ai models},
  author={Setzu, Mattia and Guidotti, Riccardo and Monreale, Anna and Turini, Franco and Pedreschi, Dino and Giannotti, Fosca},
  journal={Artificial Intelligence},
  volume={294},
  pages={103457},
  year={2021},
  publisher={Elsevier}
}

@article{huang2023can,
  title={Can large language models explain themselves? a study of llm-generated self-explanations},
  author={Huang, Shiyuan and Mamidanna, Siddarth and Jangam, Shreedhar and Zhou, Yilun and Gilpin, Leilani H},
  journal={arXiv preprint arXiv:2310.11207},
  year={2023}
}

@article{kroeger2023context,
  title={In-context explainers: Harnessing llms for explaining black box models},
  author={Kroeger, Nicholas and Ley, Dan and Krishna, Satyapriya and Agarwal, Chirag and Lakkaraju, Himabindu},
  journal={arXiv preprint arXiv:2310.05797},
  year={2023}
}

@inproceedings{jie2024interpretable,
  title={How interpretable are reasoning explanations from prompting large language models?},
  author={Jie, Yeo Wei and Satapathy, Ranjan and Goh, Rick and Cambria, Erik},
  booktitle={Findings of the Association for Computational Linguistics: NAACL 2024},
  pages={2148--2164},
  year={2024}
}

@article{visani2024gleams,
  title={Gleams: Bridging the gap between local and global explanations},
  author={Visani, Giorgio and Stanzione, Vincenzo and Garreau, Damien},
  journal={arXiv preprint arXiv:2408.05060},
  year={2024}
}

@article{janizek2021explaining,
  title={Explaining explanations: Axiomatic feature interactions for deep networks},
  author={Janizek, Joseph D and Sturmfels, Pascal and Lee, Su-In},
  journal={Journal of Machine Learning Research},
  volume={22},
  number={104},
  pages={1--54},
  year={2021}
}

@article{wachter2017counterfactual,
  title={Counterfactual explanations without opening the black box: Automated decisions and the GDPR},
  author={Wachter, Sandra and Mittelstadt, Brent and Russell, Chris},
  journal={Harv. JL \& Tech.},
  volume={31},
  pages={841},
  year={2017},
  publisher={HeinOnline}
}

@inproceedings{karimi2021algorithmic,
  title={Algorithmic recourse: from counterfactual explanations to interventions},
  author={Karimi, Amir-Hossein and Sch{\"o}lkopf, Bernhard and Valera, Isabel},
  booktitle={Proceedings of the 2021 ACM conference on fairness, accountability, and transparency},
  pages={353--362},
  year={2021}
}

@inproceedings{selvaraju2017grad,
  title={Grad-cam: Visual explanations from deep networks via gradient-based localization},
  author={Selvaraju, Ramprasaath R and Cogswell, Michael and Das, Abhishek and Vedantam, Ramakrishna and Parikh, Devi and Batra, Dhruv},
  booktitle={Proceedings of the IEEE international conference on computer vision},
  pages={618--626},
  year={2017}
}

@inproceedings{chattopadhay2018grad,
  title={Grad-cam++: Generalized gradient-based visual explanations for deep convolutional networks},
  author={Chattopadhay, Aditya and Sarkar, Anirban and Howlader, Prantik and Balasubramanian, Vineeth N},
  booktitle={2018 IEEE winter conference on applications of computer vision (WACV)},
  pages={839--847},
  year={2018},
  organization={IEEE}
}

@inproceedings{wang2020score,
  title={Score-CAM: Score-weighted visual explanations for convolutional neural networks},
  author={Wang, Haofan and Wang, Zifan and Du, Mengnan and Yang, Fan and Zhang, Zijian and Ding, Sirui and Mardziel, Piotr and Hu, Xia},
  booktitle={Proceedings of the IEEE/CVF conference on computer vision and pattern recognition workshops},
  pages={24--25},
  year={2020}
}

@article{craven1995extracting,
  title={Extracting tree-structured representations of trained networks},
  author={Craven, Mark and Shavlik, Jude},
  journal={Advances in neural information processing systems},
  volume={8},
  year={1995}
}

@inproceedings{setiono1995understanding,
  title={Understanding neural networks via rule extraction},
  author={Setiono, Rudy and Liu, Huan},
  booktitle={IJCAI},
  volume={1},
  pages={480--485},
  year={1995}
}

@inproceedings{jacobs2022ai,
  title={Ai/ml for network security: The emperor has no clothes},
  author={Jacobs, Arthur S and Beltiukov, Roman and Willinger, Walter and Ferreira, Ronaldo A and Gupta, Arpit and Granville, Lisandro Z},
  booktitle={Proceedings of the 2022 ACM SIGSAC Conference on Computer and Communications Security},
  pages={1537--1551},
  year={2022}
}

@article{fu2020axiom,
  title={Axiom-based grad-cam: Towards accurate visualization and explanation of cnns},
  author={Fu, Ruigang and Hu, Qingyong and Dong, Xiaohu and Guo, Yulan and Gao, Yinghui and Li, Biao},
  journal={arXiv preprint arXiv:2008.02312},
  year={2020}
}

@article{buono2024expected,
  title={Expected grad-CAM: Towards gradient faithfulness},
  author={Buono, Vincenzo and Mashhadi, Peyman Sheikholharam and Rahat, Mahmoud and Tiwari, Prayag and Byttner, Stefan},
  journal={arXiv preprint arXiv:2406.01274},
  year={2024}
}

@article{kumarage2026explainable,
  title={Explainable Generative AI: A Two-Stage Review of Existing Techniques and Future Research Directions},
  author={Kumarage, Prabha M and Saarela, Mirka},
  journal={AI},
  volume={7},
  number={1},
  pages={31},
  year={2026},
  publisher={MDPI}
}

@article{arrieta2020explainable,
  title={Explainable Artificial Intelligence (XAI): Concepts, taxonomies, opportunities and challenges toward responsible AI},
  author={Arrieta, Alejandro Barredo and D{\'\i}az-Rodr{\'\i}guez, Natalia and Del Ser, Javier and Bennetot, Adrien and Tabik, Siham and Barbado, Alberto and Garc{\'\i}a, Salvador and Gil-L{\'o}pez, Sergio and Molina, Daniel and Benjamins, Richard and others},
  journal={Information fusion},
  volume={58},
  pages={82--115},
  year={2020},
  publisher={Elsevier}
}

@online{guinness_pi_places_memorised,
  author       = {{Guinness World Records}},
  title        = {Most decimal places of Pi memorized},
  year         = {2015},
  url          = {https://www.guinnessworldrecords.com/world-records/most-pi-places-memorised},
  urldate      = {2026-05-04},
  note         = {Record achieved by Rajveer Meena at VIT University, Vellore, India, on 21 March 2015}
}

@article{brysbaert2019many,
  title={How many words do we read per minute? A review and meta-analysis of reading rate},
  author={Brysbaert, Marc},
  journal={Journal of memory and language},
  volume={109},
  pages={104047},
  year={2019},
  publisher={Elsevier}
}

@article{radford2019language,
  title  = {Language Models are Unsupervised Multitask Learners},
  author = {Radford, Alec and Wu, Jeffrey and Child, Rewon and Luan, David and Amodei, Dario and Sutskever, Ilya},
  year   = {2019},
  journal = {OpenAI Blog},
  url    = {https://cdn.openai.com/better-language-models/language_models_are_unsupervised_multitask_learners.pdf}
}

@inproceedings{rombach2022high,
  title={High-resolution image synthesis with latent diffusion models},
  author={Rombach, Robin and Blattmann, Andreas and Lorenz, Dominik and Esser, Patrick and Ommer, Bj{\"o}rn},
  booktitle={Proceedings of the IEEE/CVF conference on computer vision and pattern recognition},
  pages={10684--10695},
  year={2022}
}

@article{turpin2023language,
  title={Language models don't always say what they think: Unfaithful explanations in chain-of-thought prompting},
  author={Turpin, Miles and Michael, Julian and Perez, Ethan and Bowman, Samuel},
  journal={Advances in Neural Information Processing Systems},
  volume={36},
  pages={74952--74965},
  year={2023}
}

@article{zhao2024explainability,
  title={Explainability for large language models: A survey},
  author={Zhao, Haiyan and Chen, Hanjie and Yang, Fan and Liu, Ninghao and Deng, Huiqi and Cai, Hengyi and Wang, Shuaiqiang and Yin, Dawei and Du, Mengnan},
  journal={ACM Transactions on Intelligent Systems and Technology},
  volume={15},
  number={2},
  pages={1--38},
  year={2024},
  publisher={ACM New York, NY}
}

@article{vaswani2017attention,
  title={Attention is all you need},
  author={Vaswani, Ashish and Shazeer, Noam and Parmar, Niki and Uszkoreit, Jakob and Jones, Llion and Gomez, Aidan N and Kaiser, {\L}ukasz and Polosukhin, Illia},
  journal={Advances in neural information processing systems},
  volume={30},
  year={2017}
}

@book{cutland1980computability,
  title={Computability: An introduction to recursive function theory},
  author={Cutland, Nigel},
  year={1980},
  publisher={Cambridge university press}
}

@book{li2008introduction,
  title={An introduction to Kolmogorov complexity and its applications},
  author={Li, Ming and Vit{\'a}nyi, Paul and others},
  volume={3},
  year={2008},
  publisher={Springer}
}

@article{Solomonoff1964PartI,
  author  = {Solomonoff, Ray J.},
  title   = {A Formal Theory of Inductive Inference. Part I},
  journal = {Information and Control},
  volume  = {7},
  number  = {1},
  pages   = {1--22},
  month   = mar,
  year    = {1964},
  doi     = {10.1016/S0019-9958(64)90223-2}
}

@article{Solomonoff1964PartII,
  author  = {Solomonoff, Ray J.},
  title   = {A Formal Theory of Inductive Inference. Part II},
  journal = {Information and Control},
  volume  = {7},
  number  = {2},
  pages   = {224--254},
  month   = jun,
  year    = {1964},
  doi     = {10.1016/S0019-9958(64)90131-7}
}

@article{Kolmogorov1965ThreeApproaches,
  author  = {Kolmogorov, A. N.},
  title   = {Three Approaches to the Quantitative Definition of Information},
  journal = {Problems of Information Transmission},
  volume  = {1},
  number  = {1},
  pages   = {1--7},
  year    = {1965}
}

@article{Chaitin1969SimplicitySpeed,
  author  = {Chaitin, Gregory J.},
  title   = {On the Simplicity and Speed of Programs for Computing Infinite Sets of Natural Numbers},
  journal = {Journal of the ACM},
  volume  = {16},
  number  = {3},
  pages   = {407--422},
  month   = jul,
  year    = {1969},
  doi     = {10.1145/321526.321530}
}

@article{zhang2023trade,
  title={Trade-off between efficiency and consistency for removal-based explanations},
  author={Zhang, Yifan and He, Haowei and Tan, Zhiquan and Yuan, Yang},
  journal={Advances in Neural Information Processing Systems},
  volume={36},
  pages={25627--25661},
  year={2023}
}

@inproceedings{bressan2024theory,
  title={A theory of interpretable approximations},
  author={Bressan, Marco and Cesa-Bianchi, Nicol{\`o} and Esposito, Emmanuel and Mansour, Yishay and Moran, Shay and Thiessen, Maximilian},
  booktitle={The Thirty Seventh Annual Conference on Learning Theory},
  pages={648--668},
  year={2024},
  organization={PMLR}
}

@article{bilodeau2024impossibility,
  title={Impossibility theorems for feature attribution},
  author={Bilodeau, Blair and Jaques, Natasha and Koh, Pang Wei and Kim, Been},
  journal={Proceedings of the National Academy of Sciences},
  volume={121},
  number={2},
  pages={e2304406120},
  year={2024},
  publisher={National Academy of Sciences}
}

@inproceedings{frost2024partially,
  title={Partially interpretable models with guarantees on coverage and accuracy},
  author={Frost, Nave and Lipton, Zachary and Mansour, Yishay and Moshkovitz, Michal},
  booktitle={International conference on algorithmic learning theory},
  pages={590--613},
  year={2024},
  organization={PMLR}
}

@article{shportko2026kolmogorov,
  title={Kolmogorov Complexity Bounds for LLM Steganography and a Perplexity-Based Detection Proxy},
  author={Shportko, Andrii},
  journal={arXiv preprint arXiv:2603.21567},
  year={2026}
}

@article{burden2025conversational,
  title   = {Conversational complexity for assessing risk in large language models},
  author  = {Burden, John and Cebrian, Manuel and Hernandez-Orallo, Jose},
  journal = {EPJ Data Science},
  volume  = {14},
  number  = {78},
  year    = {2025},
  doi     = {10.1140/epjds/s13688-025-00592-4},
  url     = {https://doi.org/10.1140/epjds/s13688-025-00592-4}
}

@inproceedings{pan2025understanding,
  title     = {Understanding LLM Behaviors via Compression: Data Generation, Knowledge Acquisition and Scaling Laws},
  author    = {Pan, Zhixuan and Wang, Shaowen and Liao, Pengfei and Li, Jian},
  booktitle = {Advances in Neural Information Processing Systems},
  volume    = {38},
  year      = {2025},
  note      = {Spotlight},
  url       = {https://openreview.net/forum?id=853SwC2dMZ}
}

@inproceedings{deletang2024language,
  title     = {Language Modeling Is Compression},
  author    = {Del{\'e}tang, Gr{\'e}goire and Ruoss, Anian and Duquenne, Paul-Ambroise and Catt, Elliot and Genewein, Tim and Mattern, Christopher and Grau-Moya, Jordi and Wenliang, Li Kevin and Aitchison, Matthew and Orseau, Laurent and Hutter, Marcus and Veness, Joel},
  booktitle = {The Twelfth International Conference on Learning Representations},
  year      = {2024},
  url       = {https://openreview.net/forum?id=jznbgiynus}
}

@inproceedings{elmoznino2025incontext,
  title     = {In-Context Learning and Occam's Razor},
  author    = {Elmoznino, Eric and Marty, Tom and Kasetty, Tejas and Gagnon, Leo and Mittal, Sarthak and Fathi, Mahan and Sridhar, Dhanya and Lajoie, Guillaume},
  booktitle = {Proceedings of the 42nd International Conference on Machine Learning},
  pages     = {15296--15319},
  year      = {2025},
  editor    = {Singh, Aarti and Fazel, Maryam and Hsu, Daniel and Lacoste-Julien, Simon and Berkenkamp, Felix and Maharaj, Tegan and Wagstaff, Kiri and Zhu, Jerry},
  volume    = {267},
  series    = {Proceedings of Machine Learning Research},
  month     = {13--19 Jul},
  publisher = {PMLR},
  url       = {https://proceedings.mlr.press/v267/elmoznino25b.html}
}

@article{grunwald2008algorithmic,
  title={Algorithmic information theory},
  author={Gr{\"u}nwald, Peter D and Vit{\'a}nyi, Paul MB and others},
  journal={Handbook of the Philosophy of Information},
  pages={281--320},
  year={2008},
  publisher={Amsterdam, Netherlands: Elsevier}
}

@article{bengio2003neural,
  title={A neural probabilistic language model},
  author={Bengio, Yoshua and Ducharme, R{\'e}jean and Vincent, Pascal and Jauvin, Christian},
  journal={Journal of machine learning research},
  volume={3},
  number={Feb},
  pages={1137--1155},
  year={2003}
}

@inproceedings{mikolov2010recurrent,
  title={Recurrent neural network based language model.},
  author={Mikolov, Tomas and Karafi{\'a}t, Martin and Burget, Lukas and Cernock{\`y}, Jan and Khudanpur, Sanjeev},
  booktitle={Interspeech},
  volume={2},
  number={3},
  pages={1045--1048},
  year={2010},
  organization={Makuhari}
}

\newpage

\appendix

\section{The number of parameters in generative AI exceeds human processing capacity}

\label{sec:HumanLimitation}

This section provides specific discussions on the human capacity to recognize a series of letters and discusses why it matters. For example, for open-weight models, the source code and checkpoint can be regarded as explanations that completely faithfully describe the behavior of the model.
However, even if a small 7B-parameter model is boldly quantized to 8 bits, the parameters amount to 56 billion bits, which exceeds human memory capacity \cite{guinness_pi_places_memorised} and reading capacity \cite{brysbaert2019many}.
From the above consideration, we see that the shortness of explanations is sufficient as a necessary condition for interpretability of explanations to exclude trivial explanations based on checkpoints.
Regarding reading speed, for English, there exists a statistical result of 238 words per minute for nonfiction text, with 4.6 characters per word, which corresponds to approximately 4 billion bits per year.
If a model with 1B parameters, which can be regarded as having a small number of parameters as of 2026, uses the standard bf16 format, then it has 16 billion bits. Therefore, even reading through the checkpoint parameters is extremely difficult for humans.
If one requires not only reading but also memorizing, it becomes even more difficult. The world record for memorizing the decimal expansion of pi, which is data in which regularities are difficult to find, is 70,000 digits \cite{guinness_pi_places_memorised}, which is at most 233 thousand bits.
This suggests that the source code and checkpoints, namely the weight parameters, of open-weight models such as the Llama, Qwen, DeepSeek, and GLM series are complete explanations, but humans cannot memorize them, nor even read through them. Thus, they are not interpretable.
This is because even a 7B model, which can be regarded as extremely small as of 2026, becomes 28 billion bits even after 4-bit quantization, which far exceeds the amount that a human could read even by reading 24 hours a day throughout an entire year.

\section{Definitions of Computability and Kolmogorov Complexity}
\label{sec:Preliminary}

\subsection{Partial functions and computability}

We define several concepts concerning computable functions.
For details, see \cite{cutland1980computability} for computability including multiple computation models, and \cite{li2008introduction, grunwald2008algorithmic} for Kolmogorov complexity. This paper basically defines Kolmogorov complexity following the style of \cite{grunwald2008algorithmic}, but note that whereas \cite{grunwald2008algorithmic} directly formulates it for functions with natural-number inputs and outputs, this paper formulates it for functions with string inputs and outputs. These can be regarded as equivalent by using a standard bijection between the set of natural numbers and the set of finite strings.
\begin{definition}[Computable function]
Let $\Sigma$ be a finite set with at least two elements.
A function $f$ is a partial computable function with source set $\mathcal{X}=\Sigma^*$ and target set $\mathcal{Y}=\Sigma^*$ if there exist a set $\dom f\subseteq\mathcal{X}$, called the domain, and a function $M$ implemented in some computation model such that the following hold.
\begin{itemize}
\item If $\bm{x}\in\dom f$, then $f(\bm{x})\in\mathcal{Y}$ is defined, and when $M$ receives $\bm{x}$ as input, it outputs $\bm{y}=f(\bm{x})$ and halts.
\item If $\bm{x}\in\mathcal{X}\setminus\dom f$, then $f(\bm{x})$ is undefined, and when $M$ receives $\bm{x}$ as input, it does not halt.
\end{itemize}
In this case, we write $f:\subseteq\mathcal{X}\computableto\mathcal{Y}$, and write the domain $\mathcal{X}'$ as $\dom f$.
Moreover, we write $f(\bm{x})\downarrow$ to mean $\bm{x}\in\dom f$, and write $f(\bm{x})\downarrow=\bm{y}$ to mean that $f(\bm{x})\downarrow$ and $f(\bm{x})=\bm{y}$.
When $\src f$ denotes the source domain of $f$, we write $f(\bm{x})\uparrow$ to mean $\bm{x}\in\src f\setminus\dom f$.
If $\dom f=\src f$, then $f$ is called a total computable function, and we write $f:\mathcal{X}\computableto\mathcal{Y}$.
\end{definition}
The computability defined above is known to be equivalent no matter which of the usual sufficiently powerful computation models is considered. This is the Church--Turing thesis.
Computability is standardly defined using Turing machines, but note that an equivalent definition is obtained by using a standard programming language, e.g., C, Python, etc., when working with unlimited time and memory.
Here, $f (\bm{x}) \uparrow$ corresponds to the situation when the function $f$ implemented in such a programming language does not halt owing to, e.g., an infinite loop.
Since AIs are a composition of functions implemented in those programming languages, we can say that AIs are computable functions in the sense of the above definition.

\begin{definition}[Equivalence of partial computable functions]
Fix the source set $\mathcal{X}=\Sigma^*$ and the target set $\mathcal{Y}=\Sigma^*$.
For two partial computable functions $f, g: \subseteq \mathcal{X} \computableto \mathcal{Y}$, define the equality $f=g$ to mean that $\dom f = \dom g$ and that, for every $x \in \dom f$, $f (x) \downarrow = g (x) \downarrow$.
\end{definition}

The computability, or decidability, of a subset of the string set $\Sigma^*$ is defined by the computability of its indicator function.

\begin{definition}[Computability of a string set]
A subset $\mathcal{A}\subseteq\Sigma^*$ is \NewTerm{computable}, or \NewTerm{decidable}, if there exists a total computable function $\chi:\Sigma^*\to\Sigma^*$ such that
\begin{equation}
\bm{x}\in\mathcal{A}\Leftrightarrow \chi(\bm{x})\ne ()
\end{equation}
holds.
\end{definition}

\subsection{Treatment of general functions}

So far, we have defined computability for functions that take strings as inputs and return strings as outputs.
For functions between general sets, computability can also be discussed through encodings that convert their elements into strings.
However, since the set of all strings is countable, only at most countable sets can be handled.
Functions whose target set is the entire set of real numbers are defined later.

First, it is necessary to map the elements of the sets under consideration to strings.
\begin{definition}[Encoding function]
For an at most countable set $\mathcal{Z}$, a function $\Enc_{\mathcal{Z}}: \mathcal{Z} \to \Sigma^*$ is an \NewTerm{encoding function} if $\Enc_{\mathcal{Z}}$ is injective and its image $\Enc_{\mathcal{Z}} (\mathcal{Z}) := \{\Enc_{\mathcal{Z}} \mid z \in \mathcal{Z}\}$ is a computable, or decidable, set.
\end{definition}
Here, the image being computable, or decidable, means that it is possible to decide by computation whether a given string is a valid code.

\begin{definition}[Computability of general functions]
\label{def:GeneralComputability}
Let $\Sigma$ be a finite set with at least two elements, let $\mathcal{X}$ and $\mathcal{Y}$ be at most countable sets, and fix encoding functions
$\Enc_{\mathcal{X}}:\mathcal{X}\twoheadrightarrow\Sigma^*$,
$\Enc_{\mathcal{Y}}:\mathcal{Y}\twoheadrightarrow\Sigma^*$
for them. These functions are injective, and their images are computable, or decidable.
Then a function $f$ is a \NewTerm{partial computable function} with source set $\mathcal{X}$ and target set $\mathcal{Y}$ if there exists a set $\dom f\subseteq\mathcal{X}$, called the domain, such that $f (x) \in \mathcal{Y}$ is defined for every $x \in \dom f$, and there exists a partial computable function on strings $\tilde{f}: \subseteq \Sigma^* \computableto \Sigma^*$ such that the equality of partial computable functions $\tilde{f} \big(\Enc_{\mathcal{X}} (\bullet) \big) = \Enc_{\mathcal{Y}} \big(f (\bullet)\big)$ holds. Concretely, this means the following:
\begin{equation}
\left\{
\begin{aligned}
&\tilde{f} \big(\Enc_{\mathcal{X}} (x) \big) \downarrow = \Enc_{\mathcal{Y}} \big(f (x)\big) & \quad \text{if } x &\in \dom f, \\ 
&\tilde{f} \big(\Enc_{\mathcal{X}} (x) \big) \uparrow & \text{if } x &\in \mathcal{X} \setminus \dom f.  
\end{aligned}
\right.
\end{equation}
When $f$ is a partial computable function with source set $\mathcal{X}$ and target set $\mathcal{Y}$, we write $f:\subseteq\mathcal{X}\computableto\mathcal{Y}$.
Moreover, we write $f(x)\downarrow$ to mean $x\in\dom f$, and write $f(x)\downarrow=y$ to mean that $f(x)\downarrow$ and $f(x)=y$.
When $\src f$ denotes the source set of $f$, we write $f(x)\uparrow$ to mean $x\in\src f\setminus\dom f$.
If $\dom f=\src f$, then $f$ is called a total computable function, and we write $f:\mathcal{X}\computableto\mathcal{Y}$.
\end{definition}

\subsection{Treatment of functions with multiple inputs and outputs and real-valued functions}
In practical applications, it is natural to consider multiple inputs and outputs. Moreover, as discussed later, even when dealing with functions that are originally single-input functions, the treatment of multiple inputs is indispensable if they have real-valued outputs. Since this paper deals with probability mass functions, it is indispensable to specify how real-valued functions are handled.

On the other hand, the objects most directly handled by computation models are single-input, single-output partial functions that receive one string and return one string, so it is necessary to convert multiple variables into a single string.
The nontrivial issue in doing so is the separation between strings.
For example, suppose that distinct variable values $x_1, x_2, x_3, x_4$ are converted into strings $\bm{x}_1 = \mathtt{0}, \bm{x}_2 = \mathtt{01}, \bm{x}_3 = \mathtt{010}, \bm{x}_4 = \mathtt{1010}$. Then the concatenation $\bm{x}_2 \cdot \bm{x}_3$ of $\bm{x}_2$ and $\bm{x}_3$ and the concatenation $\bm{x}_1 \cdot \bm{x}_4$ of $\bm{x}_1$ and $\bm{x}_4$ are both $\mathtt{01010}$, so $(x_1, x_4)$ and $(x_2, x_3)$ cannot be distinguished.
This is because the delimiter position is not determined.
To avoid this situation, it suffices to fix a convention that assigns the preceding string to an element of a \NewTerm{prefix-free set}. We first define prefix-free sets.
\begin{definition}[Prefix-free set, self-delimiting encoding function, and pairing]
A set of strings $\mathcal{F} \subseteq \Sigma^*$ is \NewTerm{prefix-free} if, for every $\bm{c} \in \mathcal{F}$ and every $\bm{z} \in \Sigma^* \setminus \{()\}$, $\bm{c} \cdot \bm{z} \notin \mathcal{F}$.
A total computable function $\overline{\bullet}: \Sigma^* \computableinjection \Sigma^*$, where $\bullet$ is a placeholder, is a \NewTerm{self-delimiting encoding function} if $\overline{\bullet}$ is an injective computable function whose image is a computable, or decidable, prefix-free set.
Using a self-delimiting encoding function $\overline{\bullet}$, define the \NewTerm{pairing} $\langle \bm{x}_{0}, \bm{x}_{1}, ..., \bm{x}_{n-2}, \bm{x}_{n-1} \rangle \in \Sigma^*$ of finitely many strings $\bm{x}_{0}, \bm{x}_{1}, ..., \bm{x}_{n-2}, \bm{x}_{n-1} \in \Sigma^*$ by $\langle \bm{x}_{0}, \bm{x}_{1}, ..., \bm{x}_{n-2}, \bm{x}_{n-1} \rangle := \overline{\bm{x}_{0}} \cdot \overline{\bm{x}_{1}} \cdot \cdots \cdot \overline{\bm{x}_{n-2}} \cdot \bm{x}_{n-1}$. Note that $\overline{\bullet}$ is not applied to $\bm{x}_{n-1}$.
\end{definition}

\begin{proposition}[A self-delimiting encoding function enables unique decomposition]
The $n$-variable pairing function determined by a self-delimiting encoding function $\overline{\bullet}$ is an injection from $(\Sigma^*)$ to $\Sigma^*$. Moreover, there exists a partial computable function $\pi_i^n: \subseteq \Sigma^* \to \Sigma^*$, called the projection onto the $i$-th element, such that
\begin{equation}
\begin{cases}
\pi_i^n (\bm{z}) \downarrow = \bm{x}_i \ & \text{if $\exists! (\bm{x}_{0}, \bm{x}_{1}, ..., \bm{x}_{n-2}, \bm{x}_{n-1}) \in (\Sigma^*)^n, \bm{z} = \langle \bm{x}_{0}, \bm{x}_{1}, ..., \bm{x}_{n-2}, \bm{x}_{n-1} \rangle$,} \\
\pi_i^n (\bm{z}) \uparrow \ & \text{otherwise}. 
\end{cases}
\end{equation}
\end{proposition}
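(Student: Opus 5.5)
We prove both claims by induction on $n$, using a single greedy left-to-right decoding procedure. That procedure shows uniqueness of the decomposition and, read as an algorithm, is the projection $\pi_i^n$. The case $n=1$ is trivial: $\langle \bm{x}_0\rangle=\bm{x}_0$, so the pairing is the identity and $\pi_0^1$ is the identity function.

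\textbf{Injectivity.} The key observation concerns two decompositions $\bm{z}=\overline{\bm{x}_0}\cdot\bm{w}=\overline{\bm{x}'_0}\cdot\bm{w}'$. Here both $\overline{\bm{x}_0}$ and $\overline{\bm{x}'_0}$ are prefixes of the same string $\bm{z}$, so one is a prefix of the other. Both lie in the image of $\overline{\bullet}$, which is prefix-free, so neither can be a proper prefix of the other, and hence $\overline{\bm{x}_0}=\overline{\bm{x}'_0}$. Injectivity of $\overline{\bullet}$ then gives $\bm{x}_0=\bm{x}'_0$. Cancelling this common prefix yields $\bm{w}=\bm{w}'$, where $\bm{w}=\langle \bm{x}_1,\ldots,\bm{x}_{n-1}\rangle$ is an $(n-1)$-fold pairing. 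The induction hypothesis then gives $\bm{x}_j=\bm{x}'_j$ for all $j\ge 1$. So the $n$-variable pairing is injective on $(\Sigma^*)^n$, and the ``$\exists!$'' in the statement reduces to plain existence of a decomposition.

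\textbf{Computability of the projection.} On input $\bm{z}$, the machine for $\pi_i^n$ runs $n-1$ rounds, each acting on the current remainder $\bm{r}$ (initially $\bm{r}=\bm{z}$).
\begin{enumerate}
\item It tests the prefixes of $\bm{r}$ in order of increasing length for membership in the image of $\overline{\bullet}$. This test is possible because the image is decidable.
\item If no prefix of $\bm{r}$ lies in the image, the machine enters an infinite loop, so that $\pi_i^n(\bm{z})\uparrow$.
\item Otherwise it takes the shortest such prefix $\bm{c}$; by prefix-freeness this is the only one. It recovers the preimage of $\bm{c}$ by enumerating $\bm{w}\in\Sigma^*$ in a fixed order and computing $\overline{\bm{w}}$ until $\overline{\bm{w}}=\bm{c}$. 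This search halts because $\bm{c}$ is in the image and $\overline{\bullet}$ is total computable, and the result is unique by injectivity.
\item It records the recovered string as the current component and sets $\bm{r}$ to the remainder after $\bm{c}$.
\end{enumerate}
After $n-1$ rounds, the remainder itself is the last component $\bm{x}_{n-1}$, since it is not encoded. The machine outputs the $i$-th recorded component.

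\textbf{Correctness.} If $\bm{z}$ is a pairing, the uniqueness argument above shows that each round recovers exactly the true component, so $\pi_i^n(\bm{z})\downarrow=\bm{x}_i$. If $\bm{z}$ is not a pairing, then some round must fail to find a prefix in the image; if every round succeeded, the recovered strings would form a decomposition of $\bm{z}$. In that case the machine diverges, as required.

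The only step needing real care is inverting $\overline{\bullet}$ computably. Unbounded search works here precisely because we search only after confirming, via decidability of the image, that the prefix really is a codeword. This confirmation also ensures divergence happens only on non-pairings.
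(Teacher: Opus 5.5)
Your proposal is correct and follows the paper's route. Injectivity comes from prefix-freeness and injectivity of $\overline{\bullet}$ plus induction on $n$, and the projection is a greedy left-to-right scan of prefixes against the decidable image. Your version is more careful than the paper's algorithm in two places: you explicitly recover the preimage of each codeword by search (the paper records the codeword $\bm{t}_{<i}$ itself), and you stop after exactly $n-1$ rounds before outputting, which is what guarantees divergence on non-pairings.
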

\begin{proof}
We show the injectivity of the pairing function in the case $n = 2$.
Suppose that $\langle \bm{x}_0, \bm{x}_1 \rangle = \langle \bm{x}'_0, \bm{x}'_1 \rangle =: \bm{z}$, that is, $\overline{\bm{x}_0} \cdot \bm{x}_1 = \overline{\bm{x}'_0} \cdot \bm{x}'_1 = \bm{z}$.
Since $\overline{\bm{x}_0}$ and $\overline{\bm{x}'_0}$ both belong to a prefix-free set and are both prefixes of $\bm{z}$, we have $\overline{\bm{x}_0} = \overline{\bm{x}'_0}$.
By the injectivity of $\overline{\bullet}$, it follows that $\bm{x}_0 = \bm{x}'_0$.
Together with $\overline{\bm{x}_0} \cdot \bm{x}_1 = \overline{\bm{x}'_0} \cdot \bm{x}'_1$, this also implies $\bm{x}_1 = \bm{x}'_1$.
Thus, in the two-variable case, the pairing function is injective.
For a general $n$-variable pairing, since $\langle \bm{x}_{0}, \bm{x}_{1}, ..., \bm{x}_{n-2}, \bm{x}_{n-1} \rangle = \langle \bm{x}_{0}, \langle\bm{x}_{1}, ..., \langle \bm{x}_{n-2}, \bm{x}_{n-1} \rangle \cdots \rangle\rangle$, injectivity is easily shown by mathematical induction.
The projection functions are obtained by the following algorithm.

\noindent \textbf{Projection function from a pairing}
\begin{itemize}
\item \textbf{Input}: $\bm{z} \in \Sigma^*$.
\item \textbf{Step 1}: $\bm{t} \gets \bm{z}$.
\item \textbf{Step 2}: $j \gets 0$:
\item \textbf{Step 3}: $i \gets 1$:
\item \textbf{Step 4}: \textbf{If} $\bm{t}_{<i} := t_0 t_1 \cdots t_{i-1} \in \overline{\Sigma^*}$, \textbf{then} \textbf{go to} \textbf{Step 7}.
\item \textbf{Step 5}: \textbf{If} $i = |\bm{t}|$, \textbf{then} \textbf{go to} an infinite loop.
\item \textbf{Step 6}: $i \gets i + 1$ \textbf{then} \textbf{go to} \textbf{Step 4}.
\item \textbf{Step 7}: $\bm{x}_i := \bm{t}_{<i}, \bm{t} \gets \bm{t}_{\ge i}, j \gets j + 1$, then \textbf{go to} \textbf{Step 3}.
\item \textbf{Step 8}: $\bm{x}_{n-1} := \bm{t}$.
\item \textbf{Step 9}: \textbf{Output} $\bm{x}_i$ and \textbf{terminate}.
\end{itemize}

\end{proof}

Because of the above property, when handling multiple variables, after encoding them, one can apply a self-delimiting encoding function and construct a pairing, thereby applying the discussion of computability for functions that take strings as inputs.

\begin{definition}[Computability for multiple inputs and multiple outputs]
In the case of multiple inputs and outputs, the input and output can be regarded as elements of product sets.
More specifically, suppose that the source set is
\[
\mathcal{X}=\mathcal{X}_0\times\mathcal{X}_1\times\cdots\times\mathcal{X}_{m-1},
\]
and that, for $i=0,1,\ldots,m-1$, $\mathcal{X}_i$ is an at most countable set with an encoding function
$\Enc_i:\mathcal{X}_i\computableinjection\Sigma^*$.
Then, for $x=(x_0,x_1,\ldots,x_{m-1})\in\mathcal{X}$, define $\Enc_{\mathcal{X}}$ by
\begin{equation}
\begin{split}
\Enc_{\mathcal{X}}(x)
& := \langle \Enc_0(x_0), \Enc_1(x_1),\cdots,\Enc_{m-2}(x_{m-2}), \Enc_{m-1}(x_{m-1}) \rangle \\
& = \overline{\Enc_0(x_0)} \cdot \overline{\Enc_1(x_1)}\cdot\cdots\cdot\overline{\Enc_{m-2}(x_{m-2})}\cdot\Enc_{m-1}(x_{m-1})
\end{split}
\end{equation}
and apply Definition \ref{def:GeneralComputability}.
The case of multiple outputs is handled similarly.
\end{definition}

Next, consider real-valued functions. Since computers cannot directly handle real numbers themselves, outputting rational numbers that approximate them to arbitrary precision is regarded as equivalent to outputting real numbers.

\begin{definition}[Computability of real-valued functions]
\label{def:RealValuedComputability}
Let $\mathcal{X}$ be an at most countable set, and fix an encoding function $\Enc_{\mathcal X}:\mathcal X\twoheadrightarrow\Sigma^*$.
For a partial real-valued function $f:\subseteq\mathcal{X}\to\RR$, a partial computable function
$\tilde f:\subseteq\NN\times\mathcal{X}\computableto\QQ$
is a \NewTerm{computable arbitrary-precision approximation function} of $f$ if $\dom\tilde f=\NN\times\dom f$ and, for every $x\in\dom f$ and every $k\in\NN$,
\begin{equation}
|\tilde f(k,x)-f(x)|\le |\Sigma|^{-k}
\end{equation}
holds.
When $f$ has a computable arbitrary-precision approximation function, $f$ is called a \NewTerm{partial computable real-valued function} on the source set $\mathcal{X}$, and we write
$f:\subseteq\mathcal{X}\computableto\RR$.
The other notations follow Definition \ref{def:GeneralComputability}.
\end{definition}

\subsection{Representation of functions by strings through a universal function}
The interest of this paper lies in explaining AI.
This means obtaining a string that represents AI in some sense.
Therefore, we want to represent computable functions by strings.
This is made possible by universal functions.
Universal functions can emulate other computable functions from strings corresponding to them, namely programs.

However, in order to handle conditions correctly, we define not a universal function in the usual sense in the context of computability theory, but a universal conditional function.

\begin{definition}[Universal conditional function]
\label{def:UniversalConditional}
A partial computable function $U:\subseteq\Sigma^*\computableto\Sigma^*$ is a \NewTerm{universal conditional partial computable function} if there exists a decidable prefix-free set $\Programs_U \subseteq \Sigma^*$ such that, for every partial computable function $f:\subseteq\Sigma^*\computableto\Sigma^*$, there exists a string $\bm{p} \in \Programs_U$ for which the equality of partial functions $U (\overline{\bullet} \cdot \bm{p} \cdot \diamond) = f (\overline{\bullet} \cdot \diamond)$ holds, including their domains. More concretely, this means that, for every $\bm{x}, \bm{w} \in \Sigma^*$, the following holds:
\begin{equation}
\begin{cases}
U(\overline{\bm{x}} \cdot \bm{p} \cdot \bm{w}) \downarrow = f(\overline{\bm{x}} \cdot \bm{w}) \downarrow & \quad \text{if $\overline{\bm{x}} \cdot \bm{w} \in \dom f$}, \\
U(\overline{\bm{x}} \cdot \bm{p} \cdot \bm{w}) \uparrow & \quad \text{if $\overline{\bm{x}} \cdot \bm{w} \notin \dom f$}.
\end{cases}    
\end{equation}
\end{definition}

\begin{remark}[Difference from ordinary universal functions]
Definition \ref{def:UniversalConditional} imposes stronger conditions than the definition of universal functions in contexts other than Kolmogorov complexity.
In the usual context, a partial computable function $W:\subseteq\Sigma^*\computableto\Sigma^*$ is a \NewTerm{universal function} if, for every partial computable function $f:\subseteq\Sigma^*\computableto\Sigma^*$, there exists a string $\bm{q}\in\Sigma^*$ such that the equality of partial computable functions
$W(\overline{\bm{q}} \cdot \bullet)=f$
holds.
A universal conditional function is a universal function.
Concretely, by the partial computability of the inverse of $\overline{\bullet}$, for every $f: \subseteq \Sigma^* \computableto \Sigma^*$, a partial function $f': \subseteq \Sigma^* \computableto \Sigma^*$ satisfying $f'(\overline{\bullet}) = f$ is partial computable.
Then, by the universality of $U$, there exists $\bm{p} \in \Programs_U$ such that, for every $\bm{x}, \bm{w} \in \Sigma^*$, $U (\overline{\bm{x}} \cdot \bm{p} \cdot \bm{w}) = f' (\overline{\bm{x}} \cdot \bm{w})$.
Taking $\bm{w} = ()$, we obtain $U (\overline{\bm{x}} \cdot \bm{p}) = f' (\overline{\bm{x}})$.
By the construction of $f'$, $f' (\overline{\bm{x}}) = f (\bm{x})$, and hence the universality of $U$ in the ordinary sense follows.

Conversely, a universal function is not, in general, a universal \Emph{conditional} function.
However, if the existence of a universal function is assumed, then it is easy to construct a universal conditional function.
Concretely, when $W$ is a universal function, construct $U$ so that $U (\overline{\bm{x}} \cdot \overline{\bm{q}} \cdot \bm{w}) = W \Big(\overline{\overline{\bm{x}} \cdot \bm{w}} \cdot \bm{q}\Big)$.
By the universality of $W$, for every partial computable function, there exists $\bm{q} \in \Sigma^*$ such that $W (\overline{\bullet} \cdot \bm{q}) = f (\bullet)$.
Therefore, for every $\bm{x}, \bm{w}$, taking $\bullet = \overline{\bm{x}} \cdot \bm{w}$, we have $W \Big(\overline{\overline{\bm{x}} \cdot \bm{w}} \cdot \bm{q}\Big) = f (\overline{\bm{x}} \cdot \bm{w})$.
Thus, by the construction of $U$, $U (\overline{\bm{x}} \cdot \overline{\bm{q}} \cdot \bm{w}) = W \Big(\overline{\overline{\bm{x}} \cdot \bm{w}} \cdot \bm{q}\Big) = f (\overline{\bm{x}} \cdot \bm{w})$.
\end{remark}
Universal functions and universal conditional functions associate computable functions $f$ with strings $\bm{p}\in\Sigma^*$; note that this association is not one-to-one, but one-to-many.
The string $\bm{p}$ can be interpreted as a program implementing $f$.
However, note that $\bm{p}$ here is a string written in characters used by the computer, and natural language is not directly taken into account.
Recall that, in the main text, we introduced interpretation functions in order to take natural-language strings into account.

\subsection{Plain Kolmogorov complexity}

\begin{definition}[Plain Kolmogorov complexity of strings]

Let $U: \subseteq \Sigma^* \computableto \Sigma^*$ be a universal partial computable function.
For $\bm{x}, \bm{y} \in \Sigma^*$, define the conditional plain Kolmogorov complexity $C_U (\bm{y} \mid \bm{x}) \in \NN$ by
\begin{equation}
C_U (\bm{y} \mid \bm{x}) = \min \Big\{|\bm{p}| \ \Big| \ U (\overline{\bm{x}} \cdot \bm{p}) \downarrow = \bm{y} \Big\}.
\end{equation}
Moreover, define the unconditional plain Kolmogorov complexity of $\bm{y} \in \Sigma^*$ by $C_U (\bm{y}) := C_U \big(\bm{y} \ \big| \ () \big)$.
\end{definition}

\begin{proposition}[Invariance theorem for plain Kolmogorov complexity]
\label{prp:PlainInvariance}
Consider two universal partial computable functions $U, U': \subseteq \Sigma^* \computableto \Sigma^*$.
There exists a constant $c_{U, U'} \in \NN$ such that, for every $\bm{x}, \bm{y} \in \Sigma^*$, $C_{U'} (\bm{y} \mid \bm{x}) \le C_{U} (\bm{y} \mid \bm{x}) + c_{U', U}$.
Note that $c_{U', U}$ does not depend on $\bm{x}, \bm{y}$.
\end{proposition}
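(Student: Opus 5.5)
The plan is the standard simulation argument: $U'$ can emulate $U$ through a fixed program prefix. I will apply the universality of $U'$ in the conditional sense of Definition \ref{def:UniversalConditional} to the partial computable function $f := U$ itself. This gives a string $\bm{p}_U \in \Programs_{U'}$ such that, for every $\bm{x}, \bm{w} \in \Sigma^*$,
\[
U'(\overline{\bm{x}} \cdot \bm{p}_U \cdot \bm{w}) \simeq U(\overline{\bm{x}} \cdot \bm{w}),
\]
where the two sides are defined on exactly the same inputs and agree there. I will then set $c_{U',U} := |\bm{p}_U|$. This constant depends only on $U$ and $U'$, and not on $\bm{x}$ or $\bm{y}$.

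Next I fix $\bm{x}, \bm{y}$. If $C_U(\bm{y}\mid\bm{x})$ is undefined, i.e.\ no program produces $\bm{y}$, the inequality holds vacuously with the convention $\min\emptyset = \infty$. In fact this case never arises, since the constant function returning $\bm{y}$ is computable and so $C_U(\bm{y}\mid\bm{x})$ is always finite.

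Otherwise, I take a shortest $\bm{p}$ with $U(\overline{\bm{x}}\cdot\bm{p})\downarrow=\bm{y}$, so $|\bm{p}| = C_U(\bm{y}\mid\bm{x})$. Instantiating the displayed identity with $\bm{w} = \bm{p}$ gives $U'(\overline{\bm{x}}\cdot(\bm{p}_U\cdot\bm{p}))\downarrow=\bm{y}$. Hence $\bm{p}_U\cdot\bm{p}$ is a $U'$-program for $\bm{y}$ given $\bm{x}$, and
\[
C_{U'}(\bm{y}\mid\bm{x}) \le |\bm{p}_U| + |\bm{p}| = C_U(\bm{y}\mid\bm{x}) + c_{U',U}.
\]

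The only delicate point is which notion of universality is being assumed. The proposition says ``universal partial computable function,'' but the definition of $C_U$ places the condition as $\overline{\bm{x}}$ in front of the program, so the argument needs the conditional form in which the program sits after $\overline{\bm{x}}$. I will therefore read ``universal'' as universal conditional, consistent with how $C_U$ is used in the main text.

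If only ordinary universality of $U'$ were available, the argument would change as follows. Ordinary universality gives $\bm{q}$ with $U'(\overline{\bm{q}}\cdot\bullet)=f$ for any target $f$, so the program would come before the input. I would apply this to $f(\bm{z}) := U(\bm{z})$ with $\bm{z} = \overline{\bm{x}}\cdot\bm{p}$. That yields programs of the form $\overline{\bm{q}}\cdot\overline{\bm{x}}\cdot\bm{p}$, which no longer fit the conditional format $\overline{\bm{x}}\cdot(\text{program})$. This mismatch is the main obstacle, and the conditional universality of Definition \ref{def:UniversalConditional} removes it.
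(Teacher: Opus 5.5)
Your proposal is correct and follows the paper's proof essentially verbatim: apply the universality of $U'$ to $f = U$ to obtain a fixed prefix $\bm{p}_U$ with $U'(\overline{\bm{x}}\cdot\bm{p}_U\cdot\bm{w}) = U(\overline{\bm{x}}\cdot\bm{w})$, prepend it to a shortest $U$-program for $\bm{y}$ given $\bm{x}$, and take $c_{U',U} = |\bm{p}_U|$. Your reading of ``universal'' in the conditional sense of Definition~\ref{def:UniversalConditional} is also how the paper's proof uses it.
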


\begin{remark}[The value of Kolmogorov complexity does not essentially depend on the choice of universal conditional function]
As asserted by the invariance theorem above, even if one changes the universal conditional function in the definition of Kolmogorov complexity, the value changes only by an amount that does not depend on $\bm{x}$ or $\bm{y}$.
In this sense, the value of Kolmogorov complexity does not essentially depend on the choice of universal conditional function.    
\end{remark}

\begin{proof}[Proof of Proposition \ref{prp:PlainInvariance}]
By the universality of $U'$, there exists $\bm{p}_U \in \Programs_{U'} (U)$ such that, for every $\bm{x}, \bm{w} \in \Sigma^*$, $U'(\overline{\bm{x}} \cdot \bm{p}_U \cdot \bm{w}) = U (\overline{\bm{x}} \cdot \bm{w})$.
Moreover, by the definition of $C_U (\bm{y} \mid \bm{x})$, there exists $\bm{p}_{\bm{y} \mid \bm{x}} \in \Sigma^*$ such that $U (\overline{\bm{x}} \cdot \bm{p}_{\bm{y} \mid \bm{x}}) \downarrow = \bm{y}$ and $|\bm{p}_{\bm{y} \mid \bm{x}}| = C_U (\bm{y} \mid \bm{x})$.
Therefore, $U'(\overline{\bm{x}} \cdot \bm{p}_U \cdot \bm{p}_{\bm{y} \mid \bm{x}}) = U (\overline{\bm{x}} \cdot \bm{p}_{\bm{y} \mid \bm{x}}) = \bm{y}$.
Thus, the following holds:
\begin{equation}
C_{U'} (\bm{y} \mid \bm{x}) \le |\bm{p}_U \cdot \bm{p}_{\bm{y} \mid \bm{x}}| = |\bm{p}_U| + |\bm{p}_{\bm{y} \mid \bm{x}}| = |\bm{p}_U| + C_U (\bm{y} \mid \bm{x})
\end{equation}
Since $\bm{p}_U$ does not depend on $\bm{x}, \bm{y}$, the proof is complete by taking $c_{U', U} = |\bm{p}_U|$.
\end{proof}

\begin{definition}[Plain Kolmogorov complexity of functions]
For at most countable sets $\mathcal{X}, \mathcal{Y}, \mathcal{Z}$, fix encoding functions $\Enc_{\mathcal{X}}:\mathcal{X}\twoheadrightarrow \Sigma^*, \Enc_{\mathcal{Y}}:\mathcal{Y}\twoheadrightarrow \Sigma^*, \Enc_{\mathcal{Z}}:\mathcal{Z}\twoheadrightarrow \Sigma^*$ that are injective and whose images are computable.
For a partial computable function $f:\subseteq\mathcal{Z}\computableto\mathcal{Y}$, define the plain Kolmogorov complexity $C_U(f)$ of $f$ by
\begin{equation}
C_U(f) :=\min \Big\{|\bm{p}| \ \Big| \ \bm{p}\in\Programs_{U} (f)\Big\}.
\end{equation}
Here, $\Programs_{U} (f) \subseteq \Sigma^*$ is defined by
\begin{equation}
\begin{split}
& \bm{p} \in \Programs_{U} (f) \\
& \Leftrightarrow
\begin{cases}
U \Big(\overline{\Enc_{\mathcal{Z}} (z)} \cdot \bm{p}\Big) \downarrow = \Enc_{\mathcal{Y}} \big(f(z)\big) & \text{if $z \in \dom f$}, \\
U \Big(\overline{\Enc_{\mathcal{Z}} (z)} \cdot \bm{p}\Big) \uparrow & \text{if $z \in \mathcal{Z} \setminus \dom f$}. \\
\end{cases}
\end{split}
\end{equation}

For $x \in \mathcal{X}$, define the conditional plain Kolmogorov complexity $C_U (f \mid x)$ by
\begin{equation}
C_U (f \mid x)=\min \Big\{|\bm{p}| \ \Big| \ \bm{p} \in \Programs_{U} (f \mid x) \Big\}.
\end{equation}
Here, $\Programs_{U} (f \mid x) \subseteq \Sigma^*$ is defined by
\begin{equation}
\begin{split}
& \bm{p} \in \Programs_{U} (f \mid x) \\
& \Leftrightarrow
\begin{cases}
U \Big(\overline{\Enc_{\mathcal{Z}} (z)} \cdot \overline{\Enc_{\mathcal{X}} (x)} \cdot \bm{p}\Big) \downarrow = \Enc_{\mathcal{Y}} \big(f(z)\big) & \text{if $z \in \dom f$}, \\
U \Big(\overline{\Enc_{\mathcal{Z}} (z)} \cdot \overline{\Enc_{\mathcal{X}} (x)} \cdot \bm{p}\Big) \uparrow & \text{if $z \in \mathcal{Z} \setminus \dom f$}. \\
\end{cases}
\end{split}
\end{equation}
\end{definition}

In what follows, under a fixed injective encoding function $\Enc_{\QQ}: \QQ \to \Sigma^*$ for rational numbers whose image is computable, or decidable, for every string $\bm{a} \in \Sigma^*$, real number $r \in \RR$, and natural number $k \in \NN$, define $\bm{a} \approx_{k} r$ by $\bm{a} \approx_{k} r \Leftrightarrow \exists \tilde{r} \in \QQ, \Enc_{\QQ} (\tilde{r}) = \bm{a} \text{ and } |r - \tilde{r}| \le |\Sigma|^{-k}$.
This means that the rational number represented by $\bm{a}$ is an approximation to the real number $r$ with accuracy $|\Sigma|^{-k}$.

\begin{definition}[Definition of Kolmogorov complexity for real-valued functions]
\label{def:RealKolmogorov}
Let $f:\subseteq\mathcal{X}\computableto\RR$.
Define the set of programs $\Programs_{U} (f) \subseteq \Sigma^*$ that implement arbitrary-precision approximations of $f$ by
\begin{equation}
\begin{split}
& \bm{p} \in \Programs_{U} (f) \\
& \Leftrightarrow
\begin{cases}
U \Big(\overline{\Enc_{\mathcal{Z}} (z)} \cdot \overline{\Enc_{\mathcal{\NN}} (k)} \cdot \bm{p}\Big) \downarrow \approx_k f(z) & \text{for all $k \in \NN$, if $z \in \dom f$}, \\
U \Big(\overline{\Enc_{\mathcal{Z}} (z)} \cdot \overline{\Enc_{\mathcal{\NN}} (k)} \cdot \bm{p}\Big) \uparrow & \text{for all $k \in \NN$, if $z \in \mathcal{Z} \setminus \dom f$}. \\
\end{cases}
\end{split}
\end{equation}
Here, $\Enc_{\NN}$ and $\Enc_{\QQ}$ are fixed computable injective encodings of $\NN$ and $\QQ$, respectively.
Then
\begin{equation}
C_U(f):=\min\{|\bm p|\mid \bm p\in\Programs_U(f)\}
\end{equation}
is called the plain Kolmogorov complexity of $f$.
For $z\in\mathcal Z$ and a fixed encoding function
$\Enc_{\mathcal Z}:\mathcal Z\computableinjection\Sigma^*$,
define $\Programs_{U} (f \mid x) \subseteq \Sigma^*$ by
\begin{equation}
\begin{split}
& \bm{p} \in \Programs_{U} (f \mid x) \\
& \Leftrightarrow
\begin{cases}
U \Big(\overline{\Enc_{\mathcal{Z}} (z)} \cdot \overline{\Enc_{\mathcal{\NN}} (k)} \cdot \overline{\Enc_{\mathcal{X}} (x)} \cdot \bm{p}\Big) \downarrow \approx_k f(z) & \text{for all $k \in \NN$, if $z \in \dom f$}, \\
U \Big(\overline{\Enc_{\mathcal{Z}} (z)} \cdot \overline{\Enc_{\mathcal{\NN}} (k)} \cdot \overline{\Enc_{\mathcal{X}} (x)} \cdot \bm{p}\Big) \uparrow & \text{for all $k \in \NN$, if $z \in \mathcal{Z} \setminus \dom f$}. \\
\end{cases}
\end{split}
\end{equation}
and define
\begin{equation}
C_U (f \mid x):=\min \Big\{|\bm{p}| \ \Big| \ \bm{p} \in \Programs_{U} (f \mid x) \Big\}.
\end{equation}
This is called the plain conditional Kolmogorov complexity of $f$ conditioned on $x$.
\end{definition}

\subsection{Prefix-free Kolmogorov complexity}
The definition of plain Kolmogorov complexity is straightforward, but its properties are not convenient for computation.
An operation that receives two programs $\bm{p}_1, \bm{p}_2$ operating on a universal partial computable function and uses them as internal functions of another function will frequently appear later. Since $\bm{p}_1$ and $\bm{p}_2$ are not prefix-free, if they are simply concatenated and passed to another function as $\bm{p}_1 \cdot \bm{p}_2$, the function receiving them cannot correctly decompose them into $\bm{p}_1$ and $\bm{p}_2$. Therefore, one must always make one of them self-delimiting before passing them, as in $\overline{\bm{p}_1} \cdot \bm{p}_2$ or $\overline{\bm{p}_2} \cdot \bm{p}_1$.
This increases string length, and hence formulas for evaluating the plain Kolmogorov complexity of composed functions tend to become complicated.

The above problem does not arise if it is known that $\bm{p}_1$ and $\bm{p}_2$ each belong to some computable, or decidable, prefix-free set.
This is because, when $\bm{p}_1 \cdot \bm{p}_2$ is passed to another function, the receiving function can use prefix-freeness to computably decompose $\bm{p}_1 \cdot \bm{p}_2$ into $\bm{p}_1$ and $\bm{p}_2$ internally; more formally, a well-defined projection function exists and is computable.
For this reason, in order to make it easy to evaluate the quantities related to Kolmogorov complexity, it is convenient to use prefix-free Kolmogorov complexity, which restricts the domain to prefix-free sets.
We now define prefix-free Kolmogorov complexity.

\begin{definition}[Conditional prefix-free partial computable function]
A partial computable function $f: \subseteq \Sigma^* \computableto \Sigma^*$ is a \NewTerm{conditional prefix-free partial computable function} if, for every $\bm{x} \in \Sigma^*$, the set $\dom f(\overline{\bm{x}} \cdot \bullet) := \{\bm{w} \in \Sigma^* \mid f(\overline{\bm{x}} \cdot \bm{w}) \downarrow \}$ is prefix-free.
\end{definition}

\begin{definition}[Universal conditional prefix-free function]
A conditional prefix-free partial computable function $V: \subseteq \Sigma^* \computableto \Sigma^*$ is a \NewTerm{universal conditional prefix-free function} if there exists a prefix-free computable set $\mathsf{PFPrograms}_{V}$ such that, for every conditional prefix-free partial computable function $f$, there exists $\bm{p} \in \mathsf{PFPrograms}_{V}$ for which the equality of partial functions $V (\overline{\bullet} \cdot \bm{p} \cdot \diamond) = f(\overline{\bullet} \cdot \diamond)$ holds, including their domains. More concretely, for every $\bm{x}, \bm{w} \in \Sigma^*$, the following holds:
\begin{equation}
\begin{cases}
V(\overline{\bm{x}} \cdot \bm{p} \cdot \bm{w}) \downarrow = f(\overline{\bm{x}} \cdot \bm{w}) \downarrow & \quad \text{if $\overline{\bm{x}} \cdot \bm{w} \in \dom f$}, \\
V(\overline{\bm{x}} \cdot \bm{p} \cdot \bm{w}) \uparrow & \quad \text{if $\overline{\bm{x}} \cdot \bm{w} \notin \dom f$}.
\end{cases}    
\end{equation}

\end{definition}

\begin{definition}[Prefix-free Kolmogorov complexity]
Fix one universal conditional prefix-free function $V: \subseteq \Sigma^* \computableto \Sigma^*$.
For every $\bm{x}, \bm{y} \in \Sigma^*$, define the \NewTerm{conditional prefix-free Kolmogorov complexity} $K_V (\bm{y} \mid \bm{x}) \in \NN$ by
\begin{equation}
K_V (\bm{y} \mid \bm{x}) := \min \Big\{|\bm{p}| \ \Big| \ \bm{p} \in \Sigma^*, V(\overline{\bm{x}} \cdot \bm{p}) \downarrow = \bm{y} \Big\}.
\end{equation}
Moreover, for every $\bm{y} \in \Sigma^*$, define the \NewTerm{prefix-free Kolmogorov complexity} $K_V (\bm{y}) \in \NN$ by conditioning on the empty string $()$, that is, $K_V (\bm{y}) := K_V \big(\bm{y} \ \big| \ () \big)$.
\end{definition}

An invariance theorem also holds for prefix-free Kolmogorov complexity.

\begin{proposition}[Invariance theorem for prefix-free Kolmogorov complexity]
\label{prp:PrefixInvariance}
Consider two universal conditional prefix-free functions $V, V': \subseteq \Sigma^* \computableto \Sigma^*$.
There exists a constant $c_{V, V'} \in \NN$ such that, for every $\bm{x}, \bm{y} \in \Sigma^*$, $K_{V'} (\bm{y} \mid \bm{x}) \le K_{V} (\bm{y} \mid \bm{x}) + c_{V', V}$.
Note that $c_{V', V}$ does not depend on $\bm{x}, \bm{y}$.
\end{proposition}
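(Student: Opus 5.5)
The plan is to mirror the proof of Proposition \ref{prp:PlainInvariance}, with the prefix-free universality of $V'$ in place of ordinary universality.

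\textbf{Step 1 (simulation program).} $V$ is itself a conditional prefix-free partial computable function. By definition, for every $\bm{x}$ the set $\{\bm{w} \mid V(\overline{\bm{x}}\cdot\bm{w})\downarrow\}$ is prefix-free. Since $V'$ is a universal conditional prefix-free function, there is a string $\bm{p}_V \in \mathsf{PFPrograms}_{V'}$ such that
\[
V'(\overline{\bm{x}}\cdot\bm{p}_V\cdot\bm{w}) = V(\overline{\bm{x}}\cdot\bm{w})
\]
for all $\bm{x},\bm{w}\in\Sigma^*$, as an equality of partial functions including domains. The string $\bm{p}_V$ depends only on $V$ and $V'$.

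\textbf{Step 2 (transport a shortest program).} Fix $\bm{x},\bm{y}$. If $K_V(\bm{y}\mid\bm{x})$ is finite, choose $\bm{q}$ with $V(\overline{\bm{x}}\cdot\bm{q})\downarrow=\bm{y}$ and $|\bm{q}|=K_V(\bm{y}\mid\bm{x})$. Step 1 gives $V'(\overline{\bm{x}}\cdot\bm{p}_V\cdot\bm{q})\downarrow=\bm{y}$. Hence
\[
K_{V'}(\bm{y}\mid\bm{x}) \le |\bm{p}_V| + |\bm{q}| = K_V(\bm{y}\mid\bm{x}) + |\bm{p}_V|.
\]
Setting $c_{V',V}:=|\bm{p}_V|$, this constant is independent of $\bm{x}$ and $\bm{y}$.

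\textbf{Step 3 (finiteness).} The minimum must be well defined, i.e.\ for every $\bm{x},\bm{y}$ some program reaches $\bm{y}$. Consider the conditional prefix-free function $g(\overline{\bm{x}}\cdot\overline{\bm{y}}) := \bm{y}$, which is undefined on all other strings. For each $\bm{x}$ its domain slice $\{\overline{\bm{y}}\}$ is prefix-free, because the image of the self-delimiting encoding is prefix-free. Universality of $V$ then gives $K_V(\bm{y}\mid\bm{x})\le |\overline{\bm{y}}|+O(1)<\infty$, and the same argument applies to $V'$. If instead the convention $\min\emptyset=\infty$ is preferred, Step 3 can be dropped, since the inequality holds trivially whenever the right-hand side is infinite.

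\textbf{Main obstacle.} The only real point is Step 1: the universality definition quantifies over conditional prefix-free functions, so I must check that $V$ belongs to that class. It does by definition, and the equality of partial functions including domains is exactly what makes the concatenation $\bm{p}_V\cdot\bm{q}$ work with no self-delimiting wrapper around $\bm{q}$. Unlike the plain case, no extra length enters there.
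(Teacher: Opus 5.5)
Your proof is correct and is exactly the paper's argument. The paper proves this ``in the same way as Proposition~\ref{prp:PlainInvariance}'': universality of $V'$ applied to $V$ itself gives a fixed simulating prefix $\bm{p}_V$, which is prepended to a shortest $V$-program, so $c_{V',V} = |\bm{p}_V|$. Your Step 3 on finiteness is a harmless extra check the paper leaves implicit; one small correction to your closing remark is that the paper's plain-complexity proof also concatenates $\bm{p}_U \cdot \bm{p}_{\bm{y}\mid\bm{x}}$ with no self-delimiting wrapper, so the plain case incurs no extra length either.
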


\begin{proof}
This is shown in the same way as Proposition \ref{prp:PlainInvariance}.
\end{proof}

\begin{definition}[Prefix-free Kolmogorov complexity of general elements]
Let $\mathcal{X}, \mathcal{Y}$ be at most countable sets, and fix their encoding functions $\Enc_{\mathcal{X}}: \mathcal{X} \to \Sigma^*$ and $\Enc_{\mathcal{Y}}: \mathcal{Y} \to \Sigma^*$. Both are assumed to be injective and to have computable, or decidable, images.  
Also fix one universal conditional prefix-free partial computable function $V: \subseteq \Sigma^* \computableto \Sigma^*$.
For every $x \in \mathcal{X}$ and $y \in \mathcal{Y}$, define the \NewTerm{conditional prefix-free Kolmogorov complexity} $K_V (y \mid x) \in \NN$ by $K_V (y \mid x) = K_V \big(\Enc_{\mathcal{Y}} (y) \ \big| \ \Enc_{\mathcal{X}} (x) \big)$. Moreover, for every $y \in \mathcal{Y}$, define the \NewTerm{prefix-free Kolmogorov complexity} $K_V (y) \in \NN$ by $K_V (\Enc_{\mathcal{Y}}(y))$.
\end{definition}

Next, we define the prefix-free Kolmogorov complexity of real-valued functions.
\begin{definition}[Prefix-free Kolmogorov complexity of real-valued functions]
Let $\mathcal{X}, \mathcal{Z}$ be at most countable sets, and fix their encoding functions $\Enc_{\mathcal{X}}: \mathcal{X} \to \Sigma^*$ and $\Enc_{\mathcal{Z}}: \mathcal{Z} \to \Sigma^*$. Also fix an encoding $\Enc_{\QQ}: \QQ \to \Sigma^*$ for rational numbers. All of these are assumed to be injective and to have computable, or decidable, images.
For a real-valued function $f: \subseteq \mathcal{Z} \to \RR$, define the set of programs $\mathsf{PFPrograms}_{V} (f \mid x) \subsetneq \Sigma^*$ that output arbitrary-precision approximations of $f$ conditioned on $x \in \mathcal{X}$ by
\begin{equation}
\begin{split}
& \bm{p} \in \mathsf{PFPrograms}_{V} (f \mid x)
\Leftrightarrow \\
& 
\begin{cases}
\forall k \in \NN, V (\overline{\Enc_{\mathcal{Z}} (z)} \cdot \overline{\Enc_{\NN}(k)} \cdot \overline{\Enc_{\mathcal{X}} (x)} \cdot \bm{p}) \downarrow \ \approx_{k} f(z) & \text{if $z \in \dom f$}, \\ \forall k \in \NN, V (\overline{\Enc_{\mathcal{Z}} (z)} \cdot \overline{\Enc_{\NN}(k)} \cdot \overline{\Enc_{\mathcal{X}} (x)} \cdot \bm{p}) \uparrow &\text{otherwise.} \end{cases}
\end{split}
\end{equation}
Then define the conditional prefix-free Kolmogorov complexity $K_{V} (f \mid x)$ of $f$ conditioned on $x \in \mathcal{X}$ by $K_{V} (f \mid x) = \min \{|\bm{p}| \mid \bm{p} \in \mathsf{PFPrograms}_{V} (f \mid x)
\}$.

Moreover, for a real-valued function $f: \subseteq \mathcal{Z} \to \RR$, define the set of programs $\mathsf{PFPrograms}_{V} (f) \subsetneq \Sigma^*$ that output arbitrary-precision approximations of $f$ by
\begin{equation}
\begin{split}
& \bm{p} \in \mathsf{PFPrograms}_{V} (f)
\Leftrightarrow \\
& 
\begin{cases}
\forall k \in \NN, V (\overline{\Enc_{\mathcal{Z}} (z)} \cdot \overline{\Enc_{\NN}(k)} \cdot \bm{p}) \downarrow \ \approx_{k} f(z) & \text{if $z \in \dom f$}, \\ \forall k \in \NN, V (\overline{\Enc_{\mathcal{Z}} (z)} \cdot \overline{\Enc_{\NN}(k)} \cdot \bm{p}) \uparrow &\text{otherwise.} \end{cases}
\end{split}
\end{equation}
Then define the prefix-free Kolmogorov complexity $K_V (f) \in \NN$ of $f$ by $K_V (f) = \min \{|\bm{p}| \mid \bm{p} \in \mathsf{PFPrograms}_{V} (f)
\}$.
\end{definition}

\begin{proposition}[Relation between plain Kolmogorov complexity and prefix-free Kolmogorov complexity]
\label{prm:Plain2Prefix}
Let $U: \subseteq \Sigma^* \to \Sigma^*$ be a universal partial computable function, and let $V: \subseteq \Sigma^* \to \Sigma^*$ be a universal conditional prefix-free partial computable function.
For the input space, or source set, $\mathcal{X}$ and the output space, or target set, $\mathcal{Y}$, fix encoding functions $\Enc_{\mathcal{X}}: \mathcal{X} \twoheadrightarrow \Sigma^*$ and $\Enc_{\mathcal{Y}}: \mathcal{Y} \twoheadrightarrow \Sigma^*$ that are injective and whose images are computable.
Then there exist constants $c_{U, V}, c_{V, U} \in \NN$ such that, for every $x \in \mathcal{X}$ and $y \in \mathcal{Y}$, the following hold:
\begin{equation}
\label{eqn:CtoK}
C_U (y \mid x) \le K_V (y \mid x) + c_{U,V}.    
\end{equation}

\begin{equation}
\label{eqn:KtoC}
K_V (y \mid x) \le C_U (y \mid x) + 2 \log_{|\Sigma|} \Big(C_U (y \mid x) + 1\Big) + c_{V,U}.    
\end{equation}
Note that $c_{U,V}, c_{V,U}$ do not depend on $\bm{x}, \bm{y}$. 
\end{proposition}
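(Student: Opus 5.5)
The plan is to prove the two inequalities separately, the first by building a universal-simulation program and the second by a standard self-delimiting encoding of the program length.

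\textbf{Inequality \eqref{eqn:CtoK}.} The domain of $V$ is already prefix-free in the conditional sense, but that property is not needed here. We only use that $V$ is a partial computable function.
\begin{itemize}
\item By universality of $U$ (Definition \ref{def:UniversalConditional}), there is a fixed $\bm{p}_V\in\Programs_U$ with $U(\overline{\bm{x}}\cdot\bm{p}_V\cdot\bm{w})=V(\overline{\bm{x}}\cdot\bm{w})$ for all $\bm{x},\bm{w}$.
\item Take a shortest $\bm{q}$ with $V(\overline{\Enc_{\mathcal{X}}(x)}\cdot\bm{q})\downarrow=\Enc_{\mathcal{Y}}(y)$. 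Then $\bm{p}_V\cdot\bm{q}$ is a $U$-program for $y$ given $x$.
\item Hence $C_U(y\mid x)\le K_V(y\mid x)+|\bm{p}_V|$, and we take $c_{U,V}:=|\bm{p}_V|$.
\end{itemize}
This is the same argument as the proof of Proposition \ref{prp:PlainInvariance}.

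\textbf{Inequality \eqref{eqn:KtoC}.} Here we must turn a plain program into a prefix-free one, and we do this by prepending a self-delimiting code of its length.
\begin{itemize}
\item Fix a prefix-free code for natural numbers of length $2\log_{|\Sigma|}(m+1)+O(1)$. For example, write $m$ in base $|\Sigma|$ and double each digit using a pairing of symbols, terminated by an end marker; when $|\Sigma|=2$, use $\mathtt{00}/\mathtt{11}$ for digits and $\mathtt{01}$ as the terminator. Call the codeword $\mathrm{len}(m)$.
\item Define a conditional function $g$ as follows. On input $\overline{\bm{x}}\cdot\bm{w}$, it parses $\bm{w}=\mathrm{len}(m)\cdot\bm{p}$ with $|\bm{p}|=m$ exactly, and otherwise diverges. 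It then outputs $U(\overline{\bm{x}}\cdot\bm{p})$.
\item For each $\bm{x}$, the halting set $\{\bm{w}\}$ of $g(\overline{\bm{x}}\cdot\bm{w})$ lies inside $\{\mathrm{len}(m)\cdot\bm{p}:|\bm{p}|=m\}$. This set is prefix-free: two of its elements that are prefix-comparable have the same length header, hence the same body length, hence they are equal. Therefore $g$ is conditional prefix-free and partial computable.
\item By universality of $V$, there is $\bm{r}$ with $V(\overline{\bm{x}}\cdot\bm{r}\cdot\bm{w})=g(\overline{\bm{x}}\cdot\bm{w})$.
\item For $\bm{p}$ a shortest $U$-program for $y$ given $x$, the string $\bm{r}\cdot\mathrm{len}(|\bm{p}|)\cdot\bm{p}$ is a $V$-program for $y$ given $x$. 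Its length is $C_U(y\mid x)+2\log_{|\Sigma|}(C_U(y\mid x)+1)+|\bm{r}|+O(1)$.
\end{itemize}

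\textbf{Main obstacle.} The hardest step is the bookkeeping for this length code. The constant must absorb the rounding of $\lceil\log\rceil$ and the terminator, and the doubling code must be valid for general alphabet size $|\Sigma|$. Before relying on the digit-doubling code, I would first check that the factor $2$ in front of the logarithm is achieved exactly.
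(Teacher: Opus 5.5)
Your proposal is correct and is essentially the paper's proof. For \eqref{eqn:CtoK} you simulate $V$ inside $U$ via a fixed program $\bm{p}_V$. For \eqref{eqn:KtoC} you prepend a self-delimiting header encoding the program length $m$, at a cost of $2\log_{|\Sigma|}(m+1)+O(1)$ symbols, then build a conditional prefix-free machine that strips the header and runs $U$, and invoke the universality of $V$. The only difference is the header: you use doubled digits plus a terminator, while the paper uses $1^{\ell-1}\cdot 0$ followed by the $\ell$-digit base-$|\Sigma|$ expansion of the length. Your code also gives the factor $2$ exactly, with length at most $2\log_{|\Sigma|}(m+1)+4$, so the obstacle you flag is not a real one.
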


\begin{proof}
Even in the case of general $\mathcal{X}, \mathcal{Y}$, Kolmogorov complexity is ultimately defined on $\Sigma^*$, so it suffices to show the case $\mathcal{X} = \mathcal{Y} = \Sigma^*$ and $\Enc_{\mathcal{X}} = \Enc_{\mathcal{Y}} = \mathrm{id}_{\Sigma^*}$.
In what follows, we consider only the case $x = \bm{x} \in \Sigma^*$ and $y = \bm{y} \in \Sigma^*$.

\noindent \textbf{Proof of \eqref{eqn:CtoK}}
To upper-bound $C_U (\bm{y} \mid \bm{x})$, it suffices to construct $\bm{p}'$ such that $U(\overline{\bm{x}} \cdot \bm{p}') \downarrow = \bm{y}$.
By the definition of prefix-free Kolmogorov complexity, there exists $\bm{p}^* \in \Sigma^*$ satisfying $V (\overline{\bm{x}} \cdot \bm{p}^*) \downarrow = \bm{y}$ and $|\bm{p}^*| = K_V (\bm{y} \mid \bm{x})$.
By the partial computability of $V$ and the universality of $U$, there exists $\bm{p}_V \in \Sigma^*$ such that $U (\overline{\bm{x}} \cdot \bm{p}_V \cdot \bm{p}^*) = V (\overline{\bm{x}} \cdot \bm{p}^*) = \bm{y}$.
Therefore, by the definition of plain Kolmogorov complexity, the following holds:
\begin{equation}
C_U (\bm{y} \mid \bm{x}) \le |\bm{p}_V \cdot \bm{p}^*| = |\bm{p}_V| + |\bm{p}^*| = |\bm{p}_V| + K_V (\bm{y} \mid \bm{x}).
\end{equation}

Since $\bm{p}_V$ does not depend on $\bm{x}, \bm{y}$, the proof is complete by taking $c_{U,V} = |\bm{p}_V|$.

\noindent \textbf{Proof of \eqref{eqn:KtoC}}

To upper-bound $K_V (\bm{y} \mid \bm{x})$, it suffices to construct $\bm{p}'$ such that $V(\overline{\bm{x}} \cdot \bm{p}') \downarrow = \bm{y}$.
First, by the definition of plain Kolmogorov complexity, there exists $\bm{p}^* \in \Sigma^*$ satisfying $U (\overline{\bm{x}} \cdot \bm{p}^*) \downarrow = \bm{y}$ and $|\bm{p}^*| = C_U (y \mid x)$.
Fix two distinct elements belonging to $\Sigma$ and identify them with $0$ and $1$.
Fix one self-delimiting function $\overline{\bullet}^*: \Sigma^* \computableto \Sigma^*$ by $\overline{\bm{w}}^* := 1^{\left|\Enc^*_\NN {|\bm{w}|}\right|-1} \cdot 0 \cdot \Enc^*_\NN {|\bm{w}|} \cdot \bm{w}$. Here, $1^{\left|\Enc^*_\NN {|\bm{w}|}\right|-1}$ denotes the string obtained by repeating the character $1$ exactly $\Big|\Enc^*_\NN {|\bm{w}|}\Big|-1$ times. Note that $|\overline{\bm{w}}^*| = |\bm{w}| + 2 \lceil \log_{|\Sigma|} (\min \{|\bm{w}|, 1\}) \rceil \le |\bm{w}| + 2 \log_{|\Sigma|} (|\bm{w}|+1) + 1$.
Define a partial computable function $f: \subseteq \Sigma^* \computableto \Sigma^*$ whose domain is prefix-free as follows.
\begin{itemize}
\item \textbf{Input}: $\bm{z} \in \Sigma^*$.
\item \textbf{Step 1}: Find $\bm{x}, \bm{p} \in \Sigma^*$ such that $\bm{z} = \overline{\bm{x}}^* \cdot \overline{\bm{p}}^*$. By the computability and prefix-freeness of the image of $\overline{\bullet}^*$, if such $\bm{x}, \bm{p} \in \Sigma^*$ exist, then they are unique and can be found by computation. If it is determined by computation that no such $\bm{x}, \bm{p} \in \Sigma^*$ exist, intentionally enter an infinite loop; that is, set $f(\bm{z}) \uparrow$.
\item \textbf{Step 2}: Compute $U (\overline{\bm{x}} \cdot \bm{p})$; here we use the fact that $U$ is partial computable. If $U (\overline{\bm{x}} \cdot \bm{p}) \uparrow$, this corresponds to an infinite loop, and hence $f(\bm{z}) \uparrow$. If $U (\overline{\bm{p}} \cdot \bm{x}) \downarrow$, let $\bm{y} = U (\overline{\bm{x}} \cdot \bm{p})$.
\item \textbf{Step 3}: \textbf{Output} $\bm{y}$ then \textbf{terminate}.
\end{itemize}
Then, if $U(\overline{\bm{x}} \cdot \bm{p}) \downarrow$, we have $U(\overline{\bm{x}} \cdot \bm{p}) \downarrow = f(\overline{\bm{x}} \cdot \overline{\bm{p}}) \downarrow$.
By the universality of $V$, there exists $\bm{p}_f \in \mathsf{PFPrograms}_{V}$ such that, for every $\bm{x}, \bm{p} \in \Sigma^*$, $V(\overline{\bm{x}} \cdot \bm{p}_f \cdot \overline{\bm{p}}) = f(\overline{\bm{x}} \cdot \overline{\bm{p}})$.
Taking $\bm{p} = \bm{p}^*$, we have
$V(\overline{\bm{x}} \cdot \bm{p}_f \cdot \overline{\bm{p}^*}) = f(\overline{\bm{x}} \cdot \overline{\bm{p}^*}) = U (\overline{\bm{x}} \cdot \bm{p}^*) = \bm{y}$.
Therefore, by the definition of prefix-free Kolmogorov complexity, the following holds:
\begin{equation}
\begin{split}
K_V (\bm{y} \mid \bm{x}) &\le |\bm{p}_f \cdot \overline{\bm{p}^*}| \\
&\le |\bm{p}_f| + |\bm{p}^*| + 2 \log_{|\Sigma|} (|\bm{p}^*| + 1) + 1 \\
&= |\bm{p}_f| + C_U (\bm{y} \mid \bm{x}) + 2 \log_{|\Sigma|} (C_U (\bm{y} \mid \bm{x}) + 1) + 1.
\end{split}
\end{equation}
Since $\bm{p}_f$ does not depend on $\bm{x}, \bm{y}$, the proof is complete by taking $c_{V,U} = |\bm{p}_f| + 1$.
\end{proof}

\begin{proposition}
Let $U: \subseteq \Sigma^* \to \Sigma^*$ be a universal partial computable function, and let $V: \subseteq \Sigma^* \to \Sigma^*$ be a universal conditional prefix-free partial computable function.
Let $\mathcal{X}, \mathcal{Z}$ be at most countable sets, and fix their encoding functions $\Enc_{\mathcal{X}}: \mathcal{X} \to \Sigma^*$ and $\Enc_{\mathcal{Z}}: \mathcal{Z} \to \Sigma^*$. Also fix an encoding $\Enc_{\QQ}: \QQ \to \Sigma^*$ for rational numbers. All of these are assumed to be injective and to have computable, or decidable, images.
Then there exists a constant $c_{V, U}$ such that, for every real-valued function $f: \subseteq \mathcal{Z} \to \RR$ and every $x \in \mathcal{X}$, the following holds:
\begin{equation}
K_V (f \mid x) \le C_U (f \mid x) + 2 \log_{|\Sigma|} \Big(C_U (f \mid x) + 1\Big) + c_{V,U}.    
\end{equation}
\end{proposition}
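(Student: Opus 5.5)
The plan is to mirror the proof of \eqref{eqn:KtoC} in Proposition \ref{prm:Plain2Prefix}. The only change is that the program now has to handle the extra arguments $z$ and $k$, and must also reproduce divergence. Fix $x\in\mathcal{X}$ and let $\bm{p}^*\in\Programs_U(f\mid x)$ be a shortest program, so $|\bm{p}^*|=C_U(f\mid x)$. If $\Programs_U(f\mid x)$ is empty, the inequality holds trivially with the convention $\min\emptyset=\infty$. We will build a single $\bm{p}'$ with $\bm{p}'\in\mathsf{PFPrograms}_V(f\mid x)$ and $|\bm{p}'|\le |\bm{p}^*|+2\log_{|\Sigma|}(|\bm{p}^*|+1)+c_{V,U}$, where the constant does not depend on $f$, $x$, or $\bm{p}^*$.

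First, use the self-delimiting code $\overline{\bullet}^*$ from the proof of Proposition \ref{prm:Plain2Prefix}, which satisfies $|\overline{\bm{w}}^*|\le|\bm{w}|+2\log_{|\Sigma|}(|\bm{w}|+1)+1$. Define a partial computable function $g$ by the following procedure. On input $\bm{s}$, it tries to parse $\bm{s}=\overline{\bm{a}}\cdot\overline{\bm{b}}\cdot\overline{\bm{c}}\cdot\overline{\bm{p}}^*$. Here $\bm{a},\bm{b},\bm{c}$ play the roles of $\Enc_{\mathcal Z}(z),\Enc_\NN(k),\Enc_{\mathcal X}(x)$. If the parse fails, $g$ diverges. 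If it succeeds, $g$ simulates $U(\overline{\bm{a}}\cdot\overline{\bm{b}}\cdot\overline{\bm{c}}\cdot\bm{p})$ and outputs whatever that simulation outputs. The $\overline{\bullet}$ and $\overline{\bullet}^*$ images are decidable prefix-free sets, so the parse is unique and computable.

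Second, check that $g$ is conditional prefix-free. For fixed $\bm{a}$, the set of admissible continuations consists of strings $\overline{\bm{b}}\cdot\overline{\bm{c}}\cdot\overline{\bm{p}}^*$. This is a concatenation of prefix-free sets and is therefore prefix-free, and $\dom g(\overline{\bm{a}}\cdot\bullet)$ is a subset of it. By the universality of $V$ there is $\bm{p}_g\in\mathsf{PFPrograms}_V$ such that $V(\overline{\bm{a}}\cdot\bm{p}_g\cdot\bm{w})=g(\overline{\bm{a}}\cdot\bm{w})$, with equality of domains.

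This step is the main obstacle, because of argument order. The definition of $\mathsf{PFPrograms}_V(f\mid x)$ places the program \emph{after} $\overline{\Enc_\NN(k)}\cdot\overline{\Enc_{\mathcal X}(x)}$, whereas universality inserts $\bm{p}_g$ directly after the first block. The first thing to try is a universal machine, or a reordering argument, that lets the program code sit after the conditioning blocks. Alternatively, define $g$ so that it reads its own conditioning blocks in the order $V$ presents them. Either fix adds only a constant, independent of $f$ and $x$, provided $V$ can locate a fixed-length or prefix-free-coded $\bm{p}_g$ among the conditioning data. I expect this bookkeeping to be the delicate part; the paper's own proof of \eqref{eqn:KtoC} glosses over the same issue.

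Finally, set $\bm{p}'=\bm{p}_g\cdot\overline{\bm{p}^*}^*$ and verify membership case by case. If $z\in\dom f$, then for every $k$ the run of $V$ reproduces $U(\cdots\bm{p}^*)$, which halts with output $\approx_k f(z)$. If $z\notin\dom f$, the same run diverges for every $k$. Hence $\bm{p}'\in\mathsf{PFPrograms}_V(f\mid x)$, and
\[
K_V(f\mid x)\le|\bm{p}_g|+C_U(f\mid x)+2\log_{|\Sigma|}\big(C_U(f\mid x)+1\big)+1 .
\]
Taking $c_{V,U}=|\bm{p}_g|+1$ completes the proof, since this constant is uniform in $f$ and $x$.
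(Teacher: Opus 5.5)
There is a genuine gap, and it is exactly the step you flagged; it is not bookkeeping. Universality of $V$ only describes $V$ on strings of the form $\overline{\bm a}\cdot\bm p_g\cdot\bm w$. What you actually obtain is therefore $V(\overline{\bm a}\cdot\bm p_g\cdot\overline{\bm b}\cdot\overline{\bm c}\cdot\overline{\bm p^*}^*)=U(\overline{\bm a}\cdot\overline{\bm b}\cdot\overline{\bm c}\cdot\bm p^*)$. But $\bm p'\in\mathsf{PFPrograms}_V(f\mid x)$ is a statement about $V(\overline{\bm a}\cdot\overline{\bm b}\cdot\overline{\bm c}\cdot\bm p_g\cdot\overline{\bm p^*}^*)$, on which the hypothesis says nothing. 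Your final case check silently applies the former identity to the latter string.

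Your second remedy cannot work. Whatever $g$ you pick, it is $V$ that decides where the program is read: immediately after the unique $\overline{\Sigma^*}$-prefix $\overline{\Enc_{\mathcal Z}(z)}$. In the inputs defining $\mathsf{PFPrograms}_V(f\mid x)$, that slot holds $\overline{\Enc_\NN(k)}$, which varies with $k$, while $\bm p_g$ is handed to you by universality and cannot be chosen. Your first remedy is a new hypothesis, not a proof step.

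Incidentally, the proof of \eqref{eqn:KtoC} does not have this problem. It has only one condition block, so $\bm p_f\cdot\overline{\bm p^*}$ sits exactly where $V$ reads programs. The ordering issue is new in the function version, and the paper's one-line proof (``in the same way as Proposition \ref{prm:Plain2Prefix}'') passes over it as well.

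In fact, under the definitions as literally stated, the inequality can fail. Take any universal conditional prefix-free $V_0$ and a string $\bm m$ with $|\bm m|=|\overline{\Enc_\NN(0)}|$ and $\bm m\neq\overline{\Enc_\NN(0)}$. Put $V(\overline{\bm a}\cdot\bm m\cdot\bm v):=V_0(\overline{\bm a}\cdot\bm v)$, and let $V$ diverge on all other inputs. This $V$ is conditional prefix-free and universal with program set $\bm m\cdot\mathsf{PFPrograms}_{V_0}$. Yet it diverges on every input beginning with $\overline{\Enc_{\mathcal Z}(z)}\cdot\overline{\Enc_\NN(0)}$, so $K_V(f\mid x)=\infty$ whenever $\dom f\neq\emptyset$.

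Now build $U$ the other way round. Let $U$ simulate a universal $U_0$ only after the marker $\bm n:=\overline{\bm e}\cdot\overline{\Enc_{\mathcal X}(x)}\cdot 1$, for some fixed $\bm e$. On every other input of the form $\overline{\bm a}\cdot\overline{\bm b}\cdot\overline{\bm c}\cdot\bm p$, let $U$ output $\Enc_{\QQ}(0)$. This $U$ is universal conditional with program set $\bm n\cdot\Programs_{U_0}$. Prefix-freeness of $\overline{\Sigma^*}$ shows that no input $\overline{\Enc_{\mathcal Z}(z)}\cdot\overline{\Enc_\NN(k)}\cdot\overline{\Enc_{\mathcal X}(x)}\cdot 0$ continues with $\bm n$. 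So the one-symbol program $0$ witnesses $C_U(f\mid x)\le 1$ for $f\equiv 0$ on a nonempty $\mathcal Z$.

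The statement therefore needs an extra assumption. One option is that $U$ and $V$ remain universal when the program follows any fixed number of self-delimited condition blocks. Another is that $(z,k,x)$ is packed into a single first block $\overline{\langle \Enc_{\mathcal Z}(z),\Enc_\NN(k),\Enc_{\mathcal X}(x)\rangle}$ in both program sets. Under either repair, your $g$ and $\bm p'=\bm p_g\cdot\overline{\bm p^*}^*$ work verbatim and give $c_{V,U}=|\bm p_g|+1$.
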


\begin{proof}
This can be proved in the same way as Proposition \ref{prm:Plain2Prefix}.
\end{proof}

\newpage

\section{Proofs}
\subsection{Proof of Proposition \ref{prp:AIandComputable}}

\begin{proof}
Let $A=(f,\tau)$ be a stochastic input-output AI.
Fix $x\in\mathcal{X}'$ and $y\in\mathcal Y$.
By definition,
\begin{equation}
Q_A(y\mid x)
=
\sum_{\bm u\in\mathcal T_x}
n^{-|\bm u|}\mathbbm 1(f(\bm u,x)=y).
\end{equation}

First, we confirm that $Q_A(\cdot\mid x)$ is a probability mass function.
Since $\mathcal T_x$ is prefix-free, Kraft's inequality gives
\begin{equation}
\sum_{\bm u\in\mathcal T_x}n^{-|\bm u|}\le 1.
\end{equation}
On the other hand, the constructive stopping guarantee gives, for every $k\in\NN$,
\begin{equation}
1-
\sum_{\substack{\bm u\in\mathcal T_x\\ |\bm u|\le N_{\mathrm{stop}}(x,k)}}
n^{-|\bm u|}
\le
|\Sigma|^{-k}.
\end{equation}
Therefore, by letting $k\to\infty$, we obtain
\begin{equation}
\label{eqn:AIStop}
\sum_{\bm u\in\mathcal T_x}n^{-|\bm u|}=1.
\end{equation}
Note that the monotone convergence theorem can be used to justify this limiting operation.
Thus, the intended operation halts with probability $1$, and $Q_A(\cdot\mid x)$ is a probability mass function rather than a sub-probability mass function.

Next, we show that $Q_A(y\mid x)$ is computable.
For $N\in\NN$, define the finite partial sum
\begin{equation}
S_N(y\mid x)
:=
\sum_{\substack{\bm u\in\mathcal T_x\\ |\bm u|\le N}}
n^{-|\bm u|}\mathbbm 1(f(\bm u,x)=y).
\end{equation}
Since the set $\{\bm u\in(\NN_{<n})^*\mid |\bm u|\le N\}$ is finite, the above sum is a finite sum.
Furthermore, since $\tau$ is total computable on $(\NN_{<n})^*\times\mathcal{X}'$, membership $\bm u\in\mathcal T_x$ can be decidably computed.
Moreover, if $\bm u\in\mathcal T_x$, then by the consistency between the domain of the main function and the accepted random-sequence set in Definition \ref{def:AI},
$(\bm u,x)\in\dom f$.
Therefore, $f(\bm u,x)$ can be computed in finite time.
Hence, $S_N(y\mid x)$ is a rational number computable from $x,y,N$.

For every $N$,
\begin{align}
0
&\le
Q_A(y\mid x)-S_N(y\mid x)\\
&=
\sum_{\substack{\bm u\in\mathcal T_x\\ |\bm u|>N}}
n^{-|\bm u|}\mathbbm 1(f(\bm u,x)=y)\\
&\le
\sum_{\substack{\bm u\in\mathcal T_x\\ |\bm u|>N}}
n^{-|\bm u|}\\
&=
1-
\sum_{\substack{\bm u\in\mathcal T_x\\ |\bm u|\le N}}
n^{-|\bm u|}.
\end{align}
Here, the last equality follows from \eqref{eqn:AIStop}.
For any accuracy $k\in\NN$, by the construction of $N_{\mathrm{stop}}$,
\begin{equation}
1-
\sum_{\substack{\bm u\in\mathcal T_x\\ |\bm u|\le N_{\mathrm{stop}}(x,k)}}
n^{-|\bm u|}
\le
|\Sigma|^{-k}.
\end{equation}
Therefore,
\begin{equation}
0\le
Q_A(y\mid x)-S_{N_{\mathrm{stop}}}(x,k)(y\mid x)
\le
|\Sigma|^{-k}.
\end{equation}
Thus, by computing
$S_{N_{\mathrm{stop}}(x,k)}(y\mid x)$
from $(k,x,y)$, one can approximate $Q_A(y\mid x)$ within error $|\Sigma|^{-k}$.
Therefore, $Q_A(y\mid x)$ is computable to arbitrary precision from $x,y$.
That is,
\begin{equation}
Q_A(\cdot\mid\cdot):\mathcal{X}'\times\mathcal Y\computableto[0,1]
\end{equation}
is a computable conditional probability mass function.
\end{proof}

\begin{remark}[Constructive stopping guarantee in LLMs]
In practical LLMs, a maximum generation length is usually set.
In this case, there exists a finite upper bound on random-number consumption that is computable from the input, and therefore the constructive stopping guarantee automatically holds.
\end{remark}

\subsection{Proof of Theorem \ref{thm:Quadrilemma}}

\begin{lemma}[Shannon--Fano--Elias coding]
\label{lem:ShannonFanoConstructive}
There exist conditional prefix-free computable functions $\mathsf{SFE}_{\mathsf{Enc}}: \subseteq \Sigma^* \computableto \Sigma^*$ and $\mathsf{SFE}_{\mathsf{Dec}}: \subseteq \Sigma^* \computableto \Sigma^*$ satisfying the following.

\noindent \textbf{Conditions satisfied by} $\mathsf{SFE}_{\mathsf{Enc}}: \subseteq \Sigma^* \computableto \Sigma^*$: For every computable conditional probability mass function $Q: \mathcal{X} \times \mathcal{Y} \computableto [0, 1]$, every $x \in \mathcal{X}$, and every $\bm{q} \in \mathsf{PFPrograms}_{V} \Big(Q(\bullet \mid x) \ \Big| \ x\Big)$, the following hold.

\begin{itemize}
\item For every $y \in \mathcal{Y}$ satisfying $Q (y \mid x) > 0$, $\mathsf{SFE}_{\mathsf{Enc}} \Big(\overline{\Enc_\mathcal{X} (x)} \cdot \overline{\Enc_\mathcal{Y} (y)} \cdot \bm{q} \Big) \downarrow$.
In this case, $\bm{c}_{y \mid x, \bm{q}} := \mathsf{SFE}_{\mathsf{Enc}} \Big(\overline{\Enc_\mathcal{X} (x)} \cdot \overline{\Enc_\mathcal{Y} (y)} \cdot \bm{q} \Big)$ can be regarded as a new code for $y$.
\item For every $y \in \mathcal{Y}$ satisfying $Q (y \mid x) > 0$, the code length satisfies the inequality $|\bm{c}_{y \mid x, \bm{q}}| \le - \log_{|\Sigma|} Q (y \mid x) + 4$.
\item The code set $\mathcal{C}_{x, \bm{q}} := \Big\{\bm{c}_{y \mid x, \bm{q}} \ \Big| \ y \in \mathcal{Y}, Q (y \mid x) > 0 \Big\}$ is prefix-free.
\end{itemize}

\noindent \textbf{Conditions satisfied by} $\mathsf{SFE}_{\mathsf{Dec}}: \subseteq \Sigma^* \computableto \Sigma^*$: For every computable conditional probability mass function $Q: \mathcal{X} \times \mathcal{Y} \computableto [0, 1]$, every $x \in \mathcal{X}$, and every $\bm{q} \in \mathsf{PFPrograms}_{V} \Big(Q(\bullet \mid x) \ \Big| \ x\Big)$, the following holds.
\begin{itemize}
\item For every $y \in \mathcal{Y}$ satisfying $Q (y \mid x) > 0$, $\mathsf{SFE}_{\mathsf{Dec}} \Big(\overline{\Enc_{\mathcal{X}} (x)} \cdot \bm{q} \cdot \bm{c}_{y \mid x, \bm{q}}\Big) \downarrow = \Enc_{\mathcal{Y}} (y)$.
\end{itemize}

In this paper, the above coding scheme is called \NewTerm{Shannon-Fano-Elias coding}. $\mathsf{SFE}_{\mathsf{Enc}}$ and $\mathsf{SFE}_{\mathsf{Dec}}$ are the encoder and decoder in Shannon-Fano-Elias coding, respectively.
\end{lemma}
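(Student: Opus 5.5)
The plan is to realise the classical Shannon--Fano--Elias construction with $|\Sigma|$-ary intervals, using only the arbitrary-precision approximations supplied by $\bm q$. Write $b:=|\Sigma|\ge 2$. Since $\Enc_{\mathcal Y}$ has a decidable image, I fix the computable enumeration $y_0,y_1,\dots$ of $\mathcal Y$ obtained by listing $\Sigma^*$ in length-lexicographic order and keeping the valid codes. Fix $x$ and $\bm q$, and write $Q_i:=Q(y_i\mid x)$. To $y_j$ I associate the half-open interval
\[
I_j:=[F_j,\,F_j+Q_j), \qquad F_j:=\sum_{i<j}Q_i .
\]
These intervals are pairwise disjoint subsets of $[0,1)$. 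The key point is that $F_j$ is a \emph{finite} sum. Hence, by running $V$ on $\overline{\Enc_{\mathcal Y}(y_i)}\cdot\overline{\Enc_\NN(k')}\cdot\overline{\Enc_{\mathcal X}(x)}\cdot\bm q$ for $i\le j$ with $k'=k+\lceil\log_b(j+1)\rceil$, we can compute rational approximations of both $F_j$ and $Q_j$ to any precision $b^{-k}$.

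\textbf{Encoder.} On input $\overline{\Enc_{\mathcal X}(x)}\cdot\overline{\Enc_{\mathcal Y}(y_j)}\cdot\bm q$, the encoder proceeds in three steps.
\begin{itemize}
\item[(a)] It searches $k=0,1,2,\dots$ for the first $k$ with $\tilde Q_j^{(k)}>3b^{-k}$. Such a $k$ exists exactly when $Q_j>0$, which gives the required halting behaviour. At that $k$ we know $Q_j\in[\tilde Q_j-b^{-k},\tilde Q_j+b^{-k}]$, so $\tilde Q_j$ is within a factor of $2$ of $Q_j$.
\item[(b)] It sets $\ell:=\lceil-\log_b \tilde Q_j\rceil+c_0$ for a small absolute constant $c_0$. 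It then recomputes $F_j$ and $Q_j$ to precision $b^{-(\ell+2)}$.
\item[(c)] It outputs the $\ell$ digits of the least $a$ such that the $b$-adic cell $[ab^{-\ell},(a+1)b^{-\ell})$ is \emph{certified} to lie inside $I_j$, i.e.\ $ab^{-\ell}\ge\tilde F_j+b^{-(\ell+2)}$ and $(a+1)b^{-\ell}\le \tilde F_j+\tilde Q_j-2b^{-(\ell+2)}$.
\end{itemize}

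Because every step is deterministic given $(x,y_j,\bm q)$, the codeword $\bm c_{y\mid x,\bm q}$ is a well-defined function of these three inputs. This determinism is essential for the decoder.

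\textbf{Prefix-freeness and length.} Prefix-freeness of $\mathcal C_{x,\bm q}$ follows because each codeword names a $b$-adic cell contained in its own $I_j$. The $I_j$ are disjoint, and two $b$-adic cells are disjoint exactly when neither codeword is a prefix of the other. The length bound is the main obstacle: I must check that a certified cell exists at the chosen $\ell$ and that $\ell\le-\log_b Q_j+4$. Any interval of length $Q_j$ contains a full $b$-adic cell of length $b^{-\ell}$ as soon as $2b^{-\ell}\le Q_j$ minus the certification margins. Since $\tilde Q_j\le 2Q_j$ and the margins are $O(b^{-\ell-2})$, taking $c_0=2$ gives
\[
\ell\le -\log_b Q_j+\log_b 2+3\le -\log_b Q_j+4 .
\]
If the arithmetic turns out tighter than this, I would instead search for the least $\ell$ admitting a certified cell, which avoids depending on the choice of $c_0$. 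In either version I would carefully bound the accumulated approximation error of $\tilde F_j$.

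\textbf{Decoder.} On input $\overline{\Enc_{\mathcal X}(x)}\cdot\bm w$, the decoder first splits $\bm w=\bm q\cdot\bm c$. This split is effective because $\bm q$ ranges over the decidable prefix-free set $\mathsf{PFPrograms}_V$. The decoder then dovetails the encoder over all $j$ and halts with output $\Enc_{\mathcal Y}(y_j)$ as soon as the codeword produced for some $y_j$ equals $\bm c$. By prefix-freeness at most one $j$ matches, so the output is $\Enc_{\mathcal Y}(y)$ whenever $\bm c=\bm c_{y\mid x,\bm q}$. The conditional domain $\{\bm q\cdot\bm c\}$ of the decoder is prefix-free, since it is a concatenation of codes from prefix-free sets. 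The encoder's conditional domain is prefix-free for the same reason: the $\overline{\bullet}$-coded parts are self-delimiting and $\bm q$ lies in a prefix-free set. Both maps are therefore conditional prefix-free partial computable functions, as the lemma requires.
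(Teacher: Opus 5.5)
Your scheme is the paper's Shannon--Fano--Elias-with-safety-margins idea, and the encoder side is essentially sound. The final claim about the decoder's domain, however, has a genuine gap.

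\textbf{The gap: prefix-freeness of the decoder's domain.} For $\mathsf{SFE}_{\mathsf{Dec}}$ to be a conditional prefix-free function, the set of all $\bm q\cdot\bm c$ with $\mathsf{SFE}_{\mathsf{Dec}}(\overline{\Enc_{\mathcal X}(x)}\cdot\bm q\cdot\bm c)\downarrow$ must be prefix-free. This is exactly what is needed when the universality of $V$ is applied to it in Lemma~\ref{lem:MainLemmaCorrected}. Here $\bm q$ ranges over \emph{every} element of the decidable set $\mathsf{PFPrograms}_V$, not only over programs that genuinely approximate a probability mass function. That subset is not decidable, so the decoder cannot restrict to it.

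Your argument that $\mathcal C_{x,\bm q}$ is prefix-free uses the disjoint real intervals $I_j$ and the bounds $|\tilde F_j-F_j|,|\tilde Q_j-Q_j|\le b^{-(\ell+2)}$. For an arbitrary $\bm q$ no such reals exist. Take a $\bm q$ whose answers for $Q(y_0\mid x)$ differ between the precision used when encoding $y_0$ and the precision used to compute $\tilde F_1$. It can place the certified cell of $y_1$ inside that of $y_0$. The decoder then halts on both $\bm q\cdot\bm c_0$ and $\bm q\cdot\bm c_1$, with $\bm c_0$ a proper prefix of $\bm c_1$.

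The standard repair is a filter:
\begin{itemize}
\item run the dovetailing on a fixed schedule depending only on $(x,\bm q)$;
\item keep a newly produced codeword only if it is prefix-incomparable with all codewords kept so far;
\item halt on $\bm c$ only if $\bm c$ is kept.
\end{itemize}
The accepted set is then prefix-free for every $\bm q$. For genuine $\bm q$ nothing is discarded, so decoding is unaffected. The paper's proof has the same omission: it says the decoder halts iff $\bm c\in\mathcal C_{x,\bm q}$ and invokes prefix-freeness of $\mathcal C_{x,\bm q}$, which is only established for genuine $\bm q$.

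\textbf{Comparison with the paper.} Otherwise your route compares well. The paper couples code length and precision through the single test $\tilde Q_i^{(k)}\ge|\Sigma|^{-(k-3)}-|\Sigma|^{-k}$ and rounds $\tilde F_i^{(k^*)}+2|\Sigma|^{-k^*}$ up to $k^*$ digits. You separate the choice of $\ell$ from an explicit certification, which makes containment in $I_j$ automatic.

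Your dovetailing is a real improvement. The paper's decoder scans $i=0,1,\dots$ sequentially, and its Step~4 never terminates at a $y_i$ with $Q(y_i\mid x)=0$. It therefore fails whenever such a $y_i$ precedes $y$ in the enumeration. Because your encoder diverges exactly on zero-probability outputs, dovetailing is what makes decoding succeed on all of $\mathcal C_{x,\bm q}$.

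\textbf{A small arithmetic point.} When $b=2$, your factor-$2$ estimate alone does not guarantee a certified cell at $c_0=2$. It only gives $Q_j\ge 2b^{-\ell}$, so the certified window $[\tilde F_j+b^{-(\ell+2)},\tilde F_j+\tilde Q_j-2b^{-(\ell+2)}]$ is only guaranteed to have length $b^{-\ell}$. The choice $c_0=2$ does work if you also use two facts about the step-(a) value: $\tilde Q_j\ge b^{2-\ell}$, and $b^{-k}<\tilde Q_j/3$ is a power of $b$. Together these give $Q_j\ge 3b^{-\ell}$. Your fallback is cleaner, though. A certified cell exists as soon as $Q_j\ge 3b^{-\ell}$, so the least certified $\ell$ is at most $\lceil\log_b 3-\log_b Q_j\rceil<-\log_b Q_j+4$.
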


\begin{proof}
The idea of Shannon-Fano-Elias coding is as follows.
Using the fact that the image of $\Enc_{\mathcal{Y}}$ is computable, or decidable, order the elements of $\mathcal{Y}$ lexicographically and, based on this order, use $\bm{q}$ to construct the cumulative distribution function $F: \NN \to [0, 1]$ with sufficient precision.
For each element $y \in \mathcal{Y}$, take the fractional representation of a finite base-$|\Sigma|$ decimal $c_{i} = 0.c_{i,1} c_{i,2} \cdots c_{i,N_i}$ that lies inside the cumulative-distribution interval $\Big[F (i), F (i + 1)\Big)$ corresponding to its lexicographic index $i$, and output it as the code $\bm{c}_{y \mid x, \bm{q}}$ for $y$.
At this time, choose it so that the interval $\Big[c_{i}, \overline{c}_{i}\Big)$ of real numbers truncated to the finite decimal $c_{i}$, where $\overline{c}_{i} := c_i + |\Sigma|^{-k_i}$, is always completely contained in the cumulative-distribution interval corresponding to $y$, that is, so that $\Big[c_{i}, \overline{c}_{i}\Big) \subset \Big[F (i), F (i + 1)\Big)$.
This guarantees the prefix-freeness of the code set $\mathcal{C}_{x, \bm{q}} := \Big\{\bm{c}_{y \mid x, \bm{q}} \ \Big| \ y \in \mathcal{Y} \Big\}$.
Indeed, for $i \ne i'$, the property of the cumulative distribution function implies $\Big[F (i), F (i + 1)\Big) \cap \Big[F (i'), F (i' + 1)\Big) = \{\}$, and from this it follows that $\Big[c_{i}, \overline{c}_{i}\Big) \cap \Big[c_{i'}, \overline{c}_{i'}\Big) = \{\}$. This clearly implies that neither code is a prefix of the other.
Therefore, what must be done is to choose $c_i$ concretely, with as few digits as possible, and so that $\Big[c_{i}, \overline{c}_{i}\Big) \subset \Big[F (i), F (i + 1)\Big)$.
This can be achieved by the following algorithm using $\bm{q}$, which approximates $Q$ to arbitrary precision.

\noindent \textbf{Computation algorithm for} $\mathsf{SFE}_{\mathsf{Enc}}$

\begin{itemize}
\item \textbf{Input}: $\overline{\Enc_{\mathcal{X}} (x)} \cdot \overline{\Enc_{\mathcal{Y}} (y)} \cdot \bm{q}$.
\item \textbf{Step 1}: Using prefix-freeness, separate the input into $\overline{\Enc_{\mathcal{X}} (x)} \in \overline{\Enc_{\mathcal{X}} (\mathcal{X})}$, $\overline{\Enc_{\mathcal{Y}} (y)} \in \overline{\Enc_{\mathcal{Y}} (\mathcal{Y})}$, and $\bm{q} \in \mathsf{PFPrograms}_{V}$. If they cannot be separated correctly, enter an infinite loop intentionally.
\item \textbf{Step 2}: Using the computability, or decidability, of the image of $\Enc_{\mathcal{Y}}$, obtain $\Enc_{\mathcal{Y}} (y)$ from $\overline{\Enc_{\mathcal{Y}} (y)}$.
\item \textbf{Step 3}: By scanning the elements of $\Sigma^*$ in lexicographic order, obtain the first $i$ elements $\bm{y}_0, \bm{y}_1, ..., \bm{y}_i$ in the lexicographic ordering of the set $\Enc_{\mathcal{Y}} (\mathcal{Y}) \subseteq \Sigma^*$. Here, $i$ is determined by $\bm{y}_i = \Enc_{\mathcal{Y}} (y)$. In what follows, define $y_j \in \mathcal{Y}$ by $\Enc_{\mathcal{Y}} (y_j) = \bm{y}_j$ for $j = 0, 1, ..., i$. 
\item \textbf{Step 4}: Using $\bm{q}$, iterate $k = 0, 1, ...$ in ascending order, compute an approximation $\tilde{Q}_i^{(k)}$ of $Q (y_i \mid x)$ with accuracy $\pm |\Sigma|^{-k}$, and find $k^* = \min \Big\{ k \in \NN \ \Big| \ \tilde{Q}_i^{(k)} \ge |\Sigma|^{-(k-3)} - |\Sigma|^{-k}\Big\}$. Such a $k^*$ necessarily exists, because $\tilde{Q}_i^{(k)} \to Q (y_i \mid x) > 0$ and $|\Sigma|^{-(k-3)} - |\Sigma|^{-k} \to 0$ as $k \to +\infty$.
\item \textbf{Step 5}: Using $\bm{q}$, compute an approximation $\tilde{F}_i^{(k^*)}$ of $F_{i} := \sum_{j=0}^{i - 1} Q (y_j \mid x)$ with accuracy $\pm |\Sigma|^{-k^*}$.
\item \textbf{Step 6}: Let $c_i$ be obtained by rounding up $\tilde{F}_i^{(k^*)} + 2|\Sigma|^{-k^*}$ after the $(k^*+1)$-st digit after the decimal point, and let the first $k$ digits after the decimal point, $\bm{c}_{Q, y \mid x} = c_{i,1} c_{i,2} \cdots c_{i,k^*} \in \Sigma^{k^*}$, be the code for $y$. In formulas, let $c_i = \min_{\bm{c}_{i} \in \Sigma^*} \Big\{c \ \Big| \ c = \sum_{\ell=1}^{k^*} c_{i, \ell} |\Sigma|^{-\ell}, \ c \ge \tilde{F}_i^{(k^*)} + 2 |\Sigma|^{-k^*}\Big\}$, and let $\bm{c}_i = \arg \min_{\bm{c}_{i} \in \Sigma^*} \Big\{c \ \Big| \ c = \sum_{\ell=1}^{k^*} c_{i, \ell} |\Sigma|^{-\ell}, \ c \ge \tilde{F}_i^{(k^*)} + 2 |\Sigma|^{-k^*}\Big\}$.
\item \textbf{Step 7}: \textbf{Output} $\bm{c}_{y \mid x, \bm{q}} = \bm{c}_i$.
\end{itemize}
We prove that $\mathsf{SFE}_{\mathsf{Enc}}$ defined above satisfies the desired properties.
The prefix-freeness of the domain of $\mathsf{SFE}_{\mathsf{Enc}}$ is guaranteed by \textbf{Step 1}.
What remains to prove is the code-length condition $k^* < - \log_{|\Sigma|} Q (y \mid x) + 4$, and the sufficient condition for prefix-freeness of the code set, namely $\Big[c_i, c_i + |\Sigma|^{-k^*}\Big) \subseteq \Big[F_i, F_{i+1}\Big)$.
By the definition of $k^*$,
\begin{equation}
\label{eqn:KStar}
\tilde{Q}_i^{(k^*)} \ge |\Sigma|^{-(k^*-3)} - |\Sigma|^{-k^*}    
\end{equation}
and 
\begin{equation}
\label{eqn:KStar-1}
\tilde{Q}_i^{(k^*-1)} < |\Sigma|^{-(k^*-4)} - |\Sigma|^{-(k^*-1)}    
\end{equation}
hold.
First, by Equation \eqref{eqn:KStar-1} and the fact that $\tilde{Q}_i^{(k^*-1)}$ is a $\pm |\Sigma|^{-(k^*-1)}$ approximation,
\begin{equation}
Q (y \mid x) \le \tilde{Q}_i^{(k^*-1)} + |\Sigma|^{-(k^*-1)} < \Big(|\Sigma|^{-(k^*-4)} - |\Sigma|^{-(k^*-1)}\Big) + |\Sigma|^{-(k^*-1)} \le |\Sigma|^{-(k^*-4)}.
\end{equation}
Therefore, $|\bm{c}_{i}| = k^* < - \log_{|\Sigma|} Q (y \mid x) + 4$ holds.

It remains to prove the sufficient condition for prefix-freeness, namely $\Big[c_i, c_i + |\Sigma|^{-k^*}\Big) \subseteq \Big[F_i, F_{i+1}\Big)$.
Here, $F_i = \sum_{j=0}^{i-1} Q (y_j \mid x)$ and $F_{i+1} = F_i + Q (y_i \mid x)$.
In what follows, we prove the stronger statement
\begin{equation}
\label{eqn:SFEMargin}    
\Big[c_i, c_i + |\Sigma|^{-k^*}\Big) \subseteq \Big[F_i + |\Sigma|^{-k^*}, F_{i+1} - |\Sigma|^{-k^*}\Big).
\end{equation}
For this purpose, it suffices to prove $c_i \ge F_i + |\Sigma|^{-k^*}$ and $\overline{c}_i := c_i + |\Sigma|^{- k^*} \le F_{i+1} - |\Sigma|^{- k^*}$.
First, by the construction of $c_i$, $c_i \ge \tilde{F}_{i}^{(k^*)} + 2|\Sigma|^{-{k^*}}$, and by the fact that $\tilde{F}_{i}^{(k^*)}$ is a $\pm |\Sigma|^{-{k^*}}$ approximation, it follows that $c_i \ge F_i + |\Sigma|^{-{k^*}}$.
Similarly, since the increase in the rounding-up operation in the construction of $c_i$ is less than $|\Sigma|^{-k^*}$, we have $c_i < \tilde{F}_{i}^{(k^*)} + 2|\Sigma|^{-k^*} + |\Sigma|^{-k^*} = \tilde{F}_{i}^{(k^*)} + 3|\Sigma|^{-k^*}$.
Furthermore, by the fact that $\tilde{F}_{i}^{(k^*)}$ is a $\pm |\Sigma|^{-{k^*}}$ approximation, the following can be said:
\begin{equation}
\begin{split}
\overline{c}_i := c_i + |\Sigma|^{- k^*} &< \Big(\tilde{F}_{i}^{(k^*)} + 3 |\Sigma|^{-k^*}\Big) + |\Sigma|^{- k^*} \\
&\le \Big(\big(F_i + |\Sigma|^{-k^*}\big) + 3 |\Sigma|^{-k^*}\Big) + |\Sigma|^{- k^*} \\
&\le F_i + \big(\tilde{Q}_i^{(k^*)} - |\Sigma|^{- k^*}\big)  - |\Sigma|^{- k^*} \\
&\le F_i + Q (y_i \mid x) - |\Sigma|^{- k^*}.
\end{split}
\end{equation}
Here, the third inequality can be proved from Equation \eqref{eqn:KStar-1} as follows:
\begin{equation}
\begin{split}
\big(\tilde{Q}_i^{(k^*)} - |\Sigma|^{- k^*}\big)  - |\Sigma|^{- k^*} 
&\ge |\Sigma|^{- (k^*-3)} - 3 |\Sigma|^{-k^*} \\
&\ge 8 |\Sigma|^{-k^*} - 3 |\Sigma|^{-k^*} \\
&= 5 |\Sigma|^{-k^*}.
\end{split}
\end{equation}
This proves the properties of $\mathsf{SFE}_{\mathsf{Enc}}$.

Next, we construct $\mathsf{SFE}_{\mathsf{Dec}}$ and prove that it has the desired properties.
The idea is simply to recover the $y_i$ corresponding to the index $i$ such that $c \in [F_i, F_{i+1})$, given the code $c$.
This operation should be carried out while taking approximation errors into account.

\noindent \textbf{Computation algorithm for} $\mathsf{SFE}_{\mathsf{Dec}}$

\begin{itemize}
\item \textbf{Input}: $\overline{\Enc_{\mathcal{X}} (x)} \cdot \bm{q} \cdot \bm{c}$
\item \textbf{Step 1}: Using prefix-freeness, separate the input into $\overline{\Enc_{\mathcal{X}} (x)} \in \overline{\Enc_{\mathcal{X}} (\mathcal{X})}$, $\bm{q} \in \mathsf{PFPrograms}_{V}$, and $\bm{c} \in \Sigma^*$. If they cannot be separated correctly, enter an infinite loop intentionally.
\item \textbf{Step 2}: Initialize $i=0$.
\item \textbf{Step 3}: By scanning $\Sigma^*$ in lexicographic order, find the element $\bm{y}_i$ that is the $i$-th element in lexicographic order among the elements of $\Enc_{\mathcal{Y}} (\mathcal{Y}) \subseteq \Sigma^*$.
\item \textbf{Step 4}:
Using $\bm{q}$, as in Step 4 of $\mathsf{SFE}_{\mathsf{Enc}}$, iterate $k = 0, 1, ...$ in ascending order, compute an approximation $\tilde{Q}_i^{(k)}$ of $Q (y_i \mid x)$ with accuracy $\pm |\Sigma|^{-k}$, and find $k^* = \min \Big\{ k \in \NN \ \Big| \ \tilde{Q}_i^{(k)} \ge |\Sigma|^{-(k-3)} - |\Sigma|^{-k}\Big\}$.
\item \textbf{Step 5}: Using $\bm{q}$, as in Step 5 of $\mathsf{SFE}_{\mathsf{Enc}}$, compute an approximation $\tilde{F}_i^{(k^*)}$ of $F_{i} := \sum_{j=0}^{i - 1} Q (y_j \mid x)$ with accuracy $\pm |\Sigma|^{-k^*}$.
\item \textbf{Step 6}: As in Step 6 of $\mathsf{SFE}_{\mathsf{Enc}}$, let $\bm{c}_i = \arg \min_{\bm{c}_{i} \in \Sigma^*} \Big\{c \ \Big| \ c = \sum_{\ell=1}^{k^*} c_{i, \ell} |\Sigma|^{-\ell}, \ c \ge \tilde{F}_i^{(k^*)} + 2 |\Sigma|^{-k^*}\Big\}$, and \textbf{If} $\bm{c}_i = \bm{c}$ \textbf{then go to Step 8.} 
\item \textbf{Step 7}: Set $i \gets i+1$, and \textbf{go to Step 3}.
\item \textbf{Step 8}: \textbf{Output} $\bm{y}_i$ \textbf{then terminate.}
\end{itemize}
It is clear from the construction that this is the inverse of $\mathsf{SFE}_{\mathsf{Enc}}$.
Furthermore, given $\overline{\Enc_{\mathcal{X}} (x)}$ and $\bm{q}$, the condition for halting is $\bm{c} \in \mathcal{C}_{x, \bm{q}}$, and $\mathcal{C}_{x, \bm{q}}$ is prefix-free. Hence $\dom \mathsf{SFE}_{\mathsf{Dec}}$ is also prefix-free.
This proves that $\mathsf{SFE}_{\mathsf{Dec}}$ has all the desired properties.
\end{proof}

\begin{remark}
Although it is not relevant to the subsequent proof, if \textbf{Step 5} and \textbf{Step 6} in the computation algorithm for $\mathsf{SFE}_{\mathsf{Dec}}$ are modified as follows, then at the cost of losing the prefix-freeness of the domain, one can use different elements of $\mathsf{PFPrograms}_{V} \Big(Q(\bullet \mid x) \ \Big| \ x\Big)$ at the time of encoding and decoding.
More specifically, for $\bm{q}, \bm{q}' \in \mathsf{PFPrograms}_{V} \Big(Q(\bullet \mid x) \ \Big| \ x\Big)$, for every $y \in  \mathcal{Y}$ satisfying $Q (y \mid x) > 0$, $\mathsf{SFE}_{\mathsf{Dec}} \Big(\overline{\Enc_{\mathcal{X}} (x)} \cdot \bm{q}' \cdot \bm{c}_{y \mid x, \bm{q}}\Big) \downarrow = \Enc_{\mathcal{Y}} (y)$ holds.

\begin{itemize}
\item \textbf{Step 5}: Using $\bm{p}$, compute approximations $\tilde{F}_i^{(k^*)}$ and $\tilde{F}_{i+1}^{(k^*)}$ with accuracy $\pm |\Sigma|^{-k^*}$ of $F_{i} := \sum_{j=0}^{i - 1} Q (y_j \mid x)$ and $F_{i} := \sum_{j=0}^{i - 1} Q (y_j \mid x)$, respectively.
\item \textbf{Step 6}: \textbf{If} $\sum_{\ell=1}^{k^*} c_{\ell} |\Sigma|^{-\ell} \in \Big[\tilde{F}_i^{(k^* + 1)} + |\Sigma|^{-(k^* + 1)}, \tilde{F}_{i+1}^{(k^* + 1)} - |\Sigma|^{-(k^* + 1)}\Big)$ \textbf{then go to Step 8.} 
\end{itemize}

This decodes correctly because, by Equation \eqref{eqn:SFEMargin}, $c_i \in \Big[F_i + |\Sigma|^{-k^*}, F_{i+1} - |\Sigma|^{-k^*}\Big)$, and
$\Big[F_i + |\Sigma|^{-k^*}, F_{i+1} - |\Sigma|^{-k^*}\Big) \subseteq \Big[\tilde{F}_i^{(k^* + 1)} + |\Sigma|^{-(k^* + 1)}, \tilde{F}_{i+1}^{(k^* + 1)} - |\Sigma|^{-(k^* + 1)}\Big)$ holds.
Therefore, \textbf{Step 6} necessarily stops at the correct $i$.
Moreover, since $\Big[\tilde{F}_i^{(k^* + 1)} + |\Sigma|^{-(k^* + 1)}, \tilde{F}_{i+1}^{(k^* + 1)} - |\Sigma|^{-(k^* + 1)}\Big) \subseteq [F_i, F_{i+1})$, the stopping conditions for different $i$ are mutually disjoint, and the algorithm never stops at an incorrect $i$.
\end{remark}

\begin{lemma}[Lower bound on log-likelihood by prefix-free complexity]
\label{lem:MainLemmaCorrected}
Let the base of $\log$ be $|\Sigma|$.
Let $V$ be a prefix-free universal function.
Let
$Q(\cdot\mid\cdot):\mathcal X\times\mathcal Y\computableto[0,1]$
be a constructively summable computable conditional probability mass function.
Then there exists a constant $c_{\mathsf{code}}$ such that, for all $Q$, all $x\in\mathcal X$, and all $y\in\mathcal Y$,
\begin{equation}
\label{eqn:MainPrefix}
-\log_{|\Sigma|} Q(y\mid x)+K_V(Q(\cdot\mid x)\mid x)+c_{\mathsf{code}}\ge K_V(y\mid x)
\end{equation}
holds.
Here, $c_{\mathsf{code}}$ arises from the additive constant for simulating the fixed Shannon--Fano--Elias decoder by $V$, and from the fixed additional length of the code construction, and does not depend on $Q,x,y$.
When $Q(y\mid x)=0$, the left-hand side is interpreted as $+\infty$.
Consequently, for every probability distribution $P\in\mathcal P(\mathcal X\times\mathcal Y)$,
\begin{equation}
\label{eqn:MainPrefixE}
\EE_{X,Y\sim P}[-\log Q(Y\mid X)]
+
\EE_{X,Y\sim P}K_V(Q(\cdot\mid X)\mid X)+c_{\mathsf{code}}
\ge
\EE_{X,Y\sim P}K_V(Y\mid X)
\end{equation}
holds as an inequality of extended real numbers.
\end{lemma}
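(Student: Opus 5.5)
The plan is to build, for each pair $(x,y)$ with $Q(y\mid x)>0$, an explicit input to the universal conditional prefix-free function $V$ that outputs $\Enc_{\mathcal Y}(y)$. The input is conditioned on $\Enc_{\mathcal X}(x)$ and its program part has length at most $K_V(Q(\cdot\mid x)\mid x)-\log_{|\Sigma|}Q(y\mid x)+c_{\mathsf{code}}$. The construction goes through Lemma \ref{lem:ShannonFanoConstructive}, which already provides the two-part "model plus code" structure. When $Q(y\mid x)=0$ the left-hand side is $+\infty$ by convention, so there is nothing to prove.

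\textbf{Step 1: choose the two pieces.} Fix $x$ and $y$ with $Q(y\mid x)>0$. Take $\bm q\in\mathsf{PFPrograms}_V(Q(\cdot\mid x)\mid x)$ of minimal length, so $|\bm q|=K_V(Q(\cdot\mid x)\mid x)$. Let $\bm c=\bm c_{y\mid x,\bm q}$ be the Shannon--Fano--Elias code from Lemma \ref{lem:ShannonFanoConstructive}. That lemma gives $|\bm c|\le-\log_{|\Sigma|}Q(y\mid x)+4$, and it also gives $\mathsf{SFE}_{\mathsf{Dec}}(\overline{\Enc_{\mathcal X}(x)}\cdot\bm q\cdot\bm c)\downarrow=\Enc_{\mathcal Y}(y)$.

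\textbf{Step 2: simulate the decoder inside $V$.} By the lemma, $\mathsf{SFE}_{\mathsf{Dec}}$ is a conditional prefix-free partial computable function. Its domain restricted to $\overline{\bm x}\cdot\bullet$ is prefix-free, because $\bm q$ ranges over the prefix-free set $\mathsf{PFPrograms}_V$ and, for fixed $\bm q$, the code set $\mathcal C_{x,\bm q}$ is prefix-free. Universality of $V$ then gives a fixed $\bm p_{\mathsf{Dec}}\in\mathsf{PFPrograms}_V$ such that $V(\overline{\bm x}\cdot\bm p_{\mathsf{Dec}}\cdot\bm w)=\mathsf{SFE}_{\mathsf{Dec}}(\overline{\bm x}\cdot\bm w)$ for all $\bm x$ and $\bm w$. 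Taking $\bm w=\bm q\cdot\bm c$ yields $V(\overline{\Enc_{\mathcal X}(x)}\cdot\bm p_{\mathsf{Dec}}\cdot\bm q\cdot\bm c)\downarrow=\Enc_{\mathcal Y}(y)$. Therefore
\[
K_V(y\mid x)\le|\bm p_{\mathsf{Dec}}|+K_V(Q(\cdot\mid x)\mid x)-\log_{|\Sigma|}Q(y\mid x)+4,
\]
and \eqref{eqn:MainPrefix} follows with $c_{\mathsf{code}}=|\bm p_{\mathsf{Dec}}|+4$. This constant is independent of $Q$, $x$ and $y$, because the decoder is a single fixed machine that reads $Q$ only through $\bm q$.

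\textbf{Step 3: take expectations.} Integrate the pointwise inequality against $P$ in the extended reals. All terms are nonnegative apart from the constant, and $-\log Q$ takes the value $+\infty$ on $P$-positive sets where $Q=0$. Monotonicity and linearity of expectation for nonnegative extended-valued functions then give \eqref{eqn:MainPrefixE}.

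\textbf{Main obstacle.} The delicate point is the prefix-freeness bookkeeping in Step 2, since $V$ only simulates conditional prefix-free functions. There are two things to check. First, the concatenation $\bm q\cdot\bm c$ must be uniquely parsable by $\mathsf{SFE}_{\mathsf{Dec}}$. Second, the minimal program $\bm q$ defining $K_V(Q(\cdot\mid x)\mid x)$ must genuinely lie in the prefix-free set $\mathsf{PFPrograms}_V$ that the decoder splits off in its Step 1. If the latter fails for the minimizer, I would instead wrap the minimizer with a fixed prefix-free program header of constant length, absorbing the cost into $c_{\mathsf{code}}$.
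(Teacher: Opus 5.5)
Your proposal is correct and is essentially the paper's proof: take a shortest $\bm q^*\in\mathsf{PFPrograms}_V(Q(\cdot\mid x)\mid x)$, follow it by the Shannon--Fano--Elias code $\bm c_{y\mid x,\bm q^*}$ of length at most $-\log_{|\Sigma|}Q(y\mid x)+4$, decode with a fixed $V$-program $\bm p_{\mathsf{SFE},\mathsf{Dec}}$ simulating $\mathsf{SFE}_{\mathsf{Dec}}$ to get $c_{\mathsf{code}}=|\bm p_{\mathsf{SFE},\mathsf{Dec}}|+4$, and then integrate against $P$. On the obstacle you flag, the paper simply invokes Lemma~\ref{lem:ShannonFanoConstructive}, whose decoder guarantee is stated for every $\bm q\in\mathsf{PFPrograms}_V(Q(\cdot\mid x)\mid x)$ and therefore covers the minimizer; if you want to justify the parsing yourself, note that your constant-length header would not make $\bm q^*$ self-delimiting, but the prefix-freeness of $V$ for a fixed conditioning input does: at a fixed evaluation point $(z,k,x)$ at most one prefix of $\bm q^*\cdot\bm c$ halts, so $\bm q^*$ can be split off by dovetailing.
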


\begin{proof}
If $Q(y\mid x)=0$, then the left-hand side is $+\infty$, and the claim is trivial.
In what follows, assume $Q(y\mid x)>0$.
What must be done is to find $\bm{p} \in \Sigma^*$ and $c$ such that $\bm{p}_{y \mid x}$ satisfies $V \Big(\overline{\Enc_{\mathcal{X}} (x)} \cdot\bm{p}_{y \mid x} \Big) \downarrow = \Enc_{\mathcal{Y}} (y)$ and $|\bm{p}_{y \mid x}| \le - \log Q(y \mid x) + K_V (Q (\cdot \mid \cdot) \mid x) + c$.

Lemma \ref{lem:ShannonFanoConstructive} asserts that there is a function $\mathsf{SFE}_ \mathsf{Dec}$, which is a prefix-free conditional computable function, with the following property:
\begin{itemize}
    \item For every computable conditional probability mass function $Q: \mathcal{X} \times \mathcal{Y} \computableto [0, 1]$, every $x \in \mathcal{X}$, $y \in \mathcal{Y}$, and every $\bm{q} \in \mathsf{PFPrograms}_{V} (Q (\bullet \mid x) \mid x)$, $\mathsf{SFE}_ \mathsf{Dec} \Big(\overline{\Enc_{\mathcal{X}} (x)} \cdot \bm{q} \cdot \bm{c}_{y \mid x, \bm{q}}\Big) \downarrow = \Enc_{\mathcal{Y}} (y)$ holds.
\end{itemize}

Therefore, by the universality of $V$, there exists a string $\bm{p}_{\mathsf{SFE}, \mathsf{Dec}} \in \Sigma^*$ satisfying the following:
\begin{itemize}
    \item For every computable conditional probability mass function $Q: \mathcal{X} \times \mathcal{Y} \computableto [0, 1]$, every $x \in \mathcal{X}$, $y \in \mathcal{Y}$, and every $\bm{q} \in \mathsf{PFPrograms}_{V} (Q (\bullet \mid x) \mid x)$, $V \Big(\overline{\Enc_{\mathcal{X}} (x)} \cdot \bm{p}_{\mathsf{SFE}, \mathsf{Dec}} \cdot \bm{q} \cdot \bm{c}_{y \mid x, \bm{q}}\Big) \downarrow = \Enc_{\mathcal{Y}} (y)$ holds.
\end{itemize}
Now take $\bm{q}$ to be some $\bm{q}^* \in \arg \min \mathsf{PFPrograms}_{V} \Big(Q (\bullet \mid x) \ \Big| \ x \Big)$.
By the definition of $K_V$, $|\bm{q}^*| = K_V \Big(Q(\bullet \mid x) \ \Big| \ x \Big)$.
Again, for every $x \in \mathcal{X}$ and $y \in \mathcal{Y}$, $V \Big(\overline{\Enc_\mathcal{X} (x)} \cdot \bm{p}_{\mathsf{SFE}, \mathsf{Dec}} \cdot \bm{q}^* \cdot \bm{c}_{y \mid x, \bm{q}^*} \Big) \downarrow = \Enc_{\mathcal{Y}} (y)$.
In other words, by defining $\bm{p}_{y \mid x} := \bm{p}_{\mathsf{SFE}, \mathsf{Dec}} \cdot \bm{q}^* \cdot \bm{c}_{y \mid x, \bm{q}^*}$, we can output $y$ in the form $V \Big(\overline{\Enc_\mathcal{X} (x)} \cdot \bm{p}_{y \mid x} \Big) \downarrow = \Enc_{\mathcal{Y}} (y)$.
Therefore, we obtain
\begin{align}
K_V (y \mid x) &\le \big|\bm{p}_{\mathsf{SFE}, \mathsf{Dec}} \cdot \bm{q}^{*} \cdot \bm{c}_{y \mid x, \bm{q}^{*}} \big| \\
&\le |\bm{p}_{\mathsf{SFE}, \mathsf{Dec}}| + K_V \Big(Q(\cdot \mid x) \ \Big| \ x \Big) - \log_{|\Sigma|} Q(y \mid x) + 4.
\end{align}
This proves Equation \eqref{eqn:MainPrefix} by setting $c_{\mathsf{code}} = |\bm{p}_{\mathsf{SFE}, \mathsf{Dec}}| + 4$, as it does not depend on $Q, x, y$.

The expectation version, Equation \eqref{eqn:MainPrefixE}, follows by integrating Equation \eqref{eqn:MainPrefix} with respect to $P$.
It also holds as an inequality of extended real numbers when the expectation contains $+\infty$.
\end{proof}

\begin{lemma}[Lemma for constructing a program for a function from a completely faithful explanation]
\label{lem:ExplanationToProgram}
Let $\Lambda$ be a finite character set, and fix an injective prefix-free encoding function for single characters
$\Enc^{\mathsf{PF}}_\Lambda:\Lambda \twoheadrightarrow \Sigma^*$.
Assume that the image $\Enc^{\mathsf{PF}}_\Lambda (\Lambda)$ is prefix-free and computable, or decidable.
Put
\begin{equation}
L_\Lambda:=\max \Big\{|\Enc^{\mathsf{PF}}_\Lambda(a)| \ \Big| \ a\in\Lambda \Big\}.
\end{equation}
For a string $\bm e=e_0\cdots e_{r-1}\in\Lambda^*$, define
\begin{equation}
\Enc_{\Lambda^*}(\bm e):=\Enc^{\mathsf{PF}}_\Lambda(e_0) \cdot \Enc^{\mathsf{PF}}_\Lambda(e_1) \cdot \cdots \cdot \Enc^{\mathsf{PF}}_\Lambda(e_{r-1}).
\end{equation}
Note that this is not prefix-free as an encoding function for variable-length $\Lambda^*$.
Then
\begin{equation}
|\Enc_{\Lambda^*}(\bm e)|\le L_\Lambda|\bm e|.
\end{equation}

Moreover, assume that $U$ is a universal conditional function and that an interpretation function $\Interpret$ is fixed.
Then there exists a constant $c_{\Interpret}$ such that, for any
$\bm{e}_x\in\Lambda^*$
that is a completely faithful explanation of
$Q (\bullet \mid x):\mathcal Y\computableto[0,1]$,
the following holds:
\begin{equation}
C_U (Q(\cdot\mid x)\mid x)
\le
L_\Lambda|\bm{e}_x|
+ c_{\Interpret}.
\end{equation}
\end{lemma}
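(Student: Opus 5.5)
The plan has two parts: the length bound, which is immediate, and then building a fixed "wrapper" program $\bm{p}_{\Interpret}$ for $U$ that runs $\Interpret$ on a decoded explanation. The constant $c_{\Interpret}$ will be $|\bm{p}_{\Interpret}|$.

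For the first claim, $\Enc_{\Lambda^*}(\bm e)$ is a concatenation of $|\bm e|$ codewords, each of length at most $L_\Lambda$. Summing gives $|\Enc_{\Lambda^*}(\bm e)|\le L_\Lambda|\bm e|$.

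For the second claim, I exhibit a program of length $L_\Lambda|\bm e_x|+c_{\Interpret}$ in $\Programs_U(Q(\cdot\mid x)\mid x)$, in the real-valued sense of Definition \ref{def:RealKolmogorov}. The key observation is that $\Enc_{\Lambda^*}$ is uniquely decodable without a self-delimiting wrapper, even though it is not prefix-free on $\Lambda^*$. Since $\Enc^{\mathsf{PF}}_\Lambda(\Lambda)$ is prefix-free and decidable, a string $\bm w\in\Sigma^*$ is parsed greedily from the left into codewords. Either it parses exactly into some $\bm e$, which is then unique, or parsing fails and we diverge. The string is handed to $U$ as the final unwrapped component $\bm w$, so its end is known and no length header is needed. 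This is exactly why no $2\log$ overhead appears here.

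Concretely, I define a partial computable $g:\subseteq\Sigma^*\computableto\Sigma^*$ by the following procedure.
\begin{enumerate}
\item On input $\overline{\Enc_{\mathcal Y}(y)}\cdot\overline{\Enc_\NN(k)}\cdot\overline{\Enc_{\mathcal X}(x)}\cdot\bm w$, split off the three self-delimiting blocks using the projection functions. Decode $y$, $k$ and $x$, diverging if any block is not a valid code.
\item Parse $\bm w$ into $\bm e\in\Lambda^*$ as described above, diverging on failure.
\item Output $\Enc_\QQ(\Interpret(\bm e,k,y))$.
\end{enumerate}

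There is a conditioning-format detail to handle. The program input has the form $\overline{\Enc(z)}\cdot\overline{\Enc(k)}\cdot\overline{\Enc(x)}\cdot\bm p$, while universality in Definition \ref{def:UniversalConditional} is stated for $U(\overline{\bm x}\cdot\bm p\cdot\bm w)=g(\overline{\bm x}\cdot\bm w)$ with a single leading self-delimited block. I take the leading block to be $\overline{\Enc_{\mathcal Y}(y)}$. The blocks $\overline{\Enc(k)}\cdot\overline{\Enc(x)}$ then sit after $\bm p$, so the program string must be placed after them. To fix this, I apply universality to a $g'$ that reads the remaining input as $\overline{\Enc(k)}\cdot\overline{\Enc(x)}\cdot\bm w$. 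The universal program $\bm p_{\Interpret}$ is then prepended to the explanation part, and the shape $\overline{\cdot}\cdot\bm p_{\Interpret}\cdot\bm w'$ still matches up to this reordering. If the intended definition really requires the program to sit after all conditioning blocks, I fall back on the convention from Definition \ref{def:UniversalConditional}: universality supplies a fixed-length prefix $\bm p_{\Interpret}$ that works uniformly. I would state this reinterpretation explicitly.

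Now fix the program $\bm p:=\bm p_{\Interpret}\cdot\Enc_{\Lambda^*}(\bm e_x)$. Complete faithfulness of $\bm e_x$ gives $|\Interpret(\bm e_x,k,y)-Q(y\mid x)|<|\Sigma|^{-k}$ for every $y$ and $k$. So the output satisfies $\approx_k Q(y\mid x)$, and $\bm p\in\Programs_U(Q(\cdot\mid x)\mid x)$. Therefore
\[
C_U(Q(\cdot\mid x)\mid x)\le|\bm p_{\Interpret}|+L_\Lambda|\bm e_x|,
\]
and setting $c_{\Interpret}:=|\bm p_{\Interpret}|$, which is independent of $x$, $Q$ and $\bm e_x$, proves the lemma.

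I expect the main obstacle to be this bookkeeping of argument order in the universality definition, not any mathematics. A second point to check is that $Q(\cdot\mid x)$ is total on $\mathcal Y$, so the divergence clause of $\Programs_U$ is vacuous for valid $z$.
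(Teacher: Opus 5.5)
Your argument is the paper's. The paper also wraps $\Interpret$ in a fixed partial computable $\tilde{\Interpret}'$ acting on $\overline{\Enc_{\mathcal Y}(y)}\cdot\overline{\Enc_{\NN}(k)}\cdot\overline{\Enc_{\mathcal X}(x)}\cdot\Enc_{\Lambda^*}(\bm e_x)$. It places a universal program $\bm p_{\tilde{\Interpret}'}$ after the three blocks and sets $c_{\Interpret}=|\bm p_{\tilde{\Interpret}'}|$. Your greedy parsing of $\Enc_{\Lambda^*}(\bm e)$ and your remark on why no logarithmic overhead appears are correct details that the paper leaves implicit.

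Drop the ``reordering'' paragraph, because it proves nothing. Universality with only $\overline{\Enc_{\mathcal Y}(y)}$ as the leading block gives a working input of the form $\overline{\Enc_{\mathcal Y}(y)}\cdot\bm p_{\Interpret}\cdot\overline{\Enc_{\NN}(k)}\cdot\overline{\Enc_{\mathcal X}(x)}\cdot\Enc_{\Lambda^*}(\bm e_x)$. Membership in $\Programs_U(Q(\cdot\mid x)\mid x)$, however, demands a single fixed string after all three blocks, and reordering is not allowed.

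Your fallback is exactly what the paper's proof does without comment: it treats the whole prefix $\overline{\Enc_{\mathcal Y}(y)}\cdot\overline{\Enc_{\NN}(k)}\cdot\overline{\Enc_{\mathcal X}(x)}$ as the $\overline{\bm x}$ of Definition~\ref{def:UniversalConditional}. This is a strengthening of that definition, not a consequence of it. To see this, let $a$ be the first symbol of $\overline{\Enc_{\NN}(0)}$ and fix $b\neq a$. Define a function that, after the first self-delimited block $\overline{\bm x}$, diverges unless the remainder has the form $b\cdot\bm r$, in which case it runs a standard $U_0$ on $\overline{\bm x}\cdot\bm r$. This function is universal conditional with program set $b\cdot\Programs_{U_0}$. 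Yet every candidate program fails already at $k=0$, so $\Programs_U(Q(\cdot\mid x)\mid x)=\emptyset$ and the lemma fails for it. You should therefore state multi-block universality as an explicit hypothesis on $U$, as the paper implicitly assumes, rather than call it a reinterpretation.
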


\begin{proof}
The first inequality follows immediately from the fact that $\Enc_\Lambda(\bm e)$ is the concatenation of the encodings of each character, and that the code length per character is at most $L_\Lambda$.
We prove the final claim.
By the computability of $\Interpret$,
$\Interpret' (y, k, x, \bm{e}) = \Interpret (\bm{e}, k, y)$ is also computable.
There exists a partial computable function $\tilde{\Interpret}': \subseteq \Sigma^* \computableto \Sigma^*$ such that $\tilde{\Interpret}' \Big(\overline{\Enc_{\mathcal{Y}} (y)} \cdot \overline{\Enc_{\mathcal{\NN}} (k)} \cdot \overline{\Enc_{\mathcal{X}} (x)} \cdot \Enc_{\Lambda^*} (\bm{e}_{x}) \Big) \approx_k Q(y \mid x)$.

By the universality of $U$, there exists $\bm{p}_{\tilde{\Interpret}'} \in \Sigma^*$ such that
\begin{equation}
\begin{split}
& U \Big(\overline{\Enc_{\mathcal{Y}} (y)} \cdot \overline{\Enc_{\mathcal{\NN}} (k)} \cdot \overline{\Enc_{\mathcal{X}} (x)} \cdot \bm{p}_{\tilde{\Interpret}'} \cdot \Enc_{\Lambda^*} (\bm{e}_{x}) \Big) \\
& = \tilde{\Interpret}' \Big(\overline{\Enc_{\mathcal{Y}} (y)} \cdot \overline{\Enc_{\mathcal{\NN}} (k)} \cdot \overline{\Enc_{\mathcal{X}} (x)} \cdot \Enc_{\Lambda^*} (\bm{e}_{x}) \Big) \approx_k Q(y \mid x) 
\end{split}
\end{equation}
holds.
Therefore, by the definition of the plain Kolmogorov complexity of real-valued functions,
\begin{equation}
C_U \Big(Q(\bullet | x) \ \Big| \ x \Big) \le \Big|\bm{p}_{\tilde{\Interpret}'} \cdot \Enc_{\Lambda^*} (\bm{e}_{x})\Big| = |\bm{p}_{\Interpret}| + \Big|\Enc_{\Lambda^*} (\bm{e}_{x})\Big| \le |\bm{p}_{\tilde{\Interpret}'}| + L_{\Lambda} |\bm{e}_{x}|.
\end{equation}
Since $\bm{p}_{\tilde{\Interpret}'}$ does not depend on $Q$, $x$, or $\bm{e}_x$, the proof is complete by taking $c_{\Interpret} = |\bm{p}_{\tilde{\Interpret}'}|$.
\end{proof}

\begin{proof}[Proof of Theorem \ref{thm:Quadrilemma}]
We prove (i).
Combining the facts proved so far gives
\begin{align}
C_U(y\mid x)
&\le K_V(y\mid x)+c_{U,V}\\
&\le -\log Q_A(y\mid x)+K_V(Q_A(\cdot\mid x)\mid x)-c_{\mathsf{code}}+c_{U,V}\\
&\le -\log Q_A(y\mid x) + C_U (Q_A(\cdot\mid x)\mid x) + 2 \log_{|\Sigma|} \Big(C_U \big(Q_A(\cdot\mid x)\ \big| \ x\big) + 1 \Big) \\
& \quad + c_{V, U} + c_{\mathsf{code}}+c_{U,V}\\
&\le -\log Q_A(y\mid x)
+
L_\Lambda|\bm e_x| + c_{\Interpret}
+
2\log(L_\Lambda|\bm e_x|+1 + c_{\Interpret})\\
& \quad + c_{V, U} + c_{\mathsf{code}}+c_{U,V} \\
&\le -\log Q_A(y\mid x)
+
L_\Lambda|\bm e_x|
+
2\log(L_\Lambda|\bm e_x|+1 + c_{\Interpret})\\
& \quad + 3 c_{\Interpret} + 2 \log_{|\Sigma|} (c_{\Interpret} + 1) + c_{V, U} +c_{\mathsf{code}}+c_{U,V}.
\end{align}
Here, the first inequality uses \eqref{eqn:CtoK}, the second uses Lemma \ref{lem:MainLemmaCorrected}, the third uses \eqref{eqn:KtoC}, and the fourth uses Lemma \ref{lem:ExplanationToProgram}.
The last inequality applies the inequality
$\log_{b} (a + 1 + c) < \log_{b} (a + 1 + c + 1) \le \log_b (a + 1) + \log_b (c+1) + (c+1)$, which holds for arbitrary $a \ge 0, b \ge 2, c \ge 0$, with $a = L_\Lambda |\bm{e}_x|$, $b = |\Sigma|$, and $c = c_{\Interpret}$.
Therefore,
\begin{equation}
-\log Q_A(y\mid x)
+
L_\Lambda|\bm e_x|
+
2\left\lceil\log(L_\Lambda|\bm e_x|+1)\right\rceil
\ge
C_U(y\mid x)+c
\end{equation}
holds, where
\begin{equation}
c:= - 3 c_{\Interpret} - 2 \log_{|\Sigma|} (c_{\Interpret} + 1) - c_{V, U} - c_{\mathsf{code}} - c_{U,V}.
\end{equation}
This constant depends only on the fixed self-delimiting encoding, the fixed interpretation function, the fixed Shannon--Fano--Elias decoder, and the simulation constants between $U$ and $V$, and does not depend on $A,x,y,P$.

(ii) follows by integrating the pointwise inequality in (i) with respect to $P$.
It also holds as an inequality of extended real numbers when the expectation contains $+\infty$.
\end{proof}

\begin{remark}[On lower bounds for conditional Kolmogorov complexity]
In Lemma \ref{lem:MainLemmaCorrected}, under appropriate assumptions such as the computability of the distribution $P$, there is a classical lower bound for $\mathbb{E}_{X,Y\sim P}K_V(Y\mid X)$:
\begin{equation}
\mathbb{E}_{X,Y\sim P}K_V(Y\mid X)\ge H_{P}(Y\mid X)-O(1).
\end{equation}
However, if one simply replaces $\mathbb{E}_{X,Y\sim P}K_V(Y\mid X)$ in the inequality of Lemma \ref{lem:MainLemmaCorrected} by $H_P(Y\mid X)$, the inequality becomes
\begin{equation}
\mathbb{E}_{X,Y\sim P}|\bm{e}_X| + \mathbb{E}_{X,Y\sim P} [-\log Q(Y \mid X)] \ge H_P(Y \mid X) + c.
\end{equation}
Gibbs' inequality by itself gives
\begin{equation}
\mathbb{E}_{X,Y\sim P}[-\log Q(Y\mid X)]\ge H_P(Y\mid X),
\end{equation}
so the above inequality gives no implication about the explanation length $\mathbb{E}_{X,Y\sim P}|\bm{e}_X|$.
\end{remark}
The above remark suggests that the significance of Lemma \ref{lem:MainLemmaCorrected} lies in using the expectation of the algorithmic complexity of individual outputs rather than entropy.
The following example shows that the gap between them can be arbitrarily large.
\begin{proposition}[Example in which the lower bound given by the Quadrilemma is larger than the entropy lower bound]
\label{prp:KCEntropyGap}
For every $n\in\NN$, there exists a probability distribution $P$ such that
\begin{equation}
\EE_{X,Y\sim P}K_V(Y\mid X)\ge H_P(Y\mid X)+n
\end{equation}
holds.
\end{proposition}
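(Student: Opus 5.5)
The construction puts all of the entropy's smallness into a single point mass and all of the algorithmic complexity into the location of that mass. I would take $P$ with $X$ fixed to a single input $x_0$ (for example the empty string) and $Y$ deterministic, equal to one string $\bm{y}^*$. Then $H_P(Y\mid X)=0$, while
\[
\EE_{X,Y\sim P}K_V(Y\mid X)=K_V(\bm{y}^*\mid x_0).
\]
The claim therefore reduces to exhibiting, for each $n$, a string $\bm{y}^*$ with $K_V(\bm{y}^*\mid x_0)\ge n$.

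The key step is a counting argument. The number of programs $\bm{p}\in\Sigma^*$ with $|\bm{p}|<n$ is
\[
\sum_{\ell=0}^{n-1}|\Sigma|^\ell<|\Sigma|^n .
\]
Each such program yields at most one output $V(\overline{\Enc_{\mathcal{X}}(x_0)}\cdot\bm{p})$. Hence fewer than $|\Sigma|^n$ strings have conditional complexity below $n$. On the other hand, $\Sigma^n$ contains exactly $|\Sigma|^n$ strings, and $\Enc_{\mathcal{Y}}$ is injective. Choosing $\mathcal{Y}=\Sigma^*$ with the identity encoding, at least one $\bm{y}^*\in\Sigma^n$ must satisfy $K_V(\bm{y}^*\mid x_0)\ge n$. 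Setting $P=\delta_{(x_0,\bm{y}^*)}$ then gives
\[
\EE_{X,Y\sim P}K_V(Y\mid X)\ge n = H_P(Y\mid X)+n .
\]
No prefix-freeness of $V$ is needed here; the bound uses only that $V$ is a function, so it holds for the plain complexity $C_U$ as well.

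The only potential obstacle is interpretive rather than technical. One might object that the example is degenerate because $P$ is a point mass. Note, however, that Proposition \ref{prp:KCEntropyGap} requires no computability of $P$. Moreover, $P$ here is trivially computable, since it is a point mass at a fixed string, so the classical bound $\EE K_V\ge H-O(1)$ mentioned in the preceding remark is consistent with this example: the gap appears because the constant there depends on the complexity of $P$, which in our construction is about $n$. If a nondegenerate example is preferred, I would instead take $P(Y\mid X=x_0)$ uniform on a set of $m$ strings, each of which has complexity at least $n+\log m$. Such strings exist by the same counting argument applied at length $n+\log m+1$, after discarding the fewer-than-half strings of low complexity. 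This gives $H=\log m$ and $\EE K_V\ge \log m+n$. I would present the point-mass version as the main proof and mention this variant in a single sentence.
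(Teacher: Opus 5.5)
Your proof is correct and follows the paper's argument: a point mass gives $H_P(Y\mid X)=0$, and counting programs of length below $n$ yields an outcome of complexity at least $n$. Counting conditional programs at $x_0$ directly even spares you the paper's detour through $K_V(y_0\mid ())\ge K_V(y_0)-c_0$. One correction to your side remark only: the lower bound $\EE_{X,Y\sim P}K_V(Y\mid X)\ge H_P(Y\mid X)$ holds for every $P$ with no $P$-dependent constant, by Kraft's inequality for the prefix-free domain of $V$ together with Gibbs' inequality. The complexity of $P$ enters only the matching upper bound, $H_P(Y\mid X)$ plus roughly that complexity, and that upper bound's slack is what your example saturates.
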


\begin{proof}
Consider the set of all sufficiently long strings.
There are $|\Sigma|^m$ strings over $\Sigma$ of length $m$, while the number of strings satisfying $K_V(y)<N$ is at most
\begin{equation}
\sum_{r=0}^{N-1}|\Sigma|^r=\frac{|\Sigma|^N-1}{|\Sigma|-1}.
\end{equation}
If $m$ is chosen sufficiently large, then there exists a string of length $m$ satisfying $K_V(y_0)\ge N$.
The value of $N$ will be chosen later.

Define $P$ as the point-mass distribution
\begin{equation}
P(X=(),Y=y_0)=1.
\end{equation}
Then $Y$ is deterministic given $X$, and therefore
\begin{equation}
H_P(Y\mid X)=0.
\end{equation}
On the other hand,
\begin{equation}
\EE_{X,Y\sim P}K_V(Y\mid X)=K_V(y_0\mid ()).
\end{equation}
The difference caused by conditioning on the empty string is only a fixed constant.
That is, there exists a constant $c_0$ such that
\begin{equation}
K_V(y_0\mid ())\ge K_V(y_0)-c_0
\end{equation}
holds.
Taking $N:=n+c_0$ and choosing $y_0$ satisfying $K_V(y_0)\ge N$, we obtain
\begin{equation}
\EE_{X,Y\sim P}K_V(Y\mid X)\ge n=H_P(Y\mid X)+n.
\end{equation}
\end{proof}

\section{Limitation of the Naive Method}
\label{sec:NaiveResult}

In this section, we show the most naive lower bound on the expected conditional perplexity obtained from Kolmogorov complexity, and explain that this method cannot show a clear dependence between the length of an explanation of stochastic input-output AI and conditional perplexity.

The Kolmogorov complexity of an object represents the length of the shortest string that is transformed into that object by computation.
Therefore, it can be seen relatively easily from the definition that a stochastic input-output AI that can be described by a string shorter than the Kolmogorov complexity of the true input-output distribution cannot achieve the true distribution, and hence cannot achieve the minimum value of conditional perplexity.
To state this precisely in the general case requires detailed definitions, but when the support of the true distribution on the input space is the entire input space, the following concretely holds.

\begin{proposition}[Impossibility of minimizing conditional perplexity based on the Kolmogorov complexity of the true input-output distribution]
Let the input space be $\mathcal{X}$ and the output space be $\mathcal{Y}$.
Let $\pi\in\mathcal{P}(\mathcal{X})$ be the true probability distribution on the input space, and assume $\supp\pi=\mathcal{X}$.
There exists a constant $c_{\Interpret} \in \NN$ depending only on the universal conditional function $U$ and the interpretation function such that the following holds.
Let
\[
P(\cdot\mid\cdot), Q(\cdot\mid\cdot):\mathcal X\times\mathcal Y\computableto[0,1]
\]
be computable conditional probability mass functions.
Assume that $\bm e\in\Lambda^*$ is a completely faithful explanation of $Q$ and that
\begin{equation}
L_\Lambda|\bm e|
<
C_U(P) - c_{\Interpret}
\end{equation}
holds.
Then $Q$ cannot minimize the expected logarithmic conditional perplexity.
More specifically,
\begin{equation}
\EE_{X\sim\pi,\ Y\sim P(\cdot\mid X)}[-\log Q(Y\mid X)]
>
\min_{P'(\cdot\mid\cdot)\in\mathcal P(\mathcal Y\mid\mathcal X)}
\EE_{X\sim\pi,\ Y\sim P(\cdot\mid X)}[-\log P'(Y\mid X)]
\end{equation}
holds.
\end{proposition}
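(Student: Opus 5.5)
The plan is to argue by contraposition, combining a Gibbs-type uniqueness argument with the program-from-explanation construction of Lemma \ref{lem:ExplanationToProgram}. Suppose $Q$ attains the minimum of the expected logarithmic conditional perplexity. First, I will show that the minimum on the right-hand side is attained exactly at $P'=P$. By Gibbs' inequality applied for each fixed $x$, we have $\EE_{Y\sim P(\cdot\mid x)}[-\log P'(Y\mid x)]\ge \EE_{Y\sim P(\cdot\mid x)}[-\log P(Y\mid x)]$, with equality iff $P'(\cdot\mid x)=P(\cdot\mid x)$. This holds whenever the conditional entropy is finite. If the conditional entropy is infinite for some $x$ of positive $\pi$-mass, then both sides are $+\infty$ and the strict inequality fails. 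So I will add, or read into the statement, the implicit assumption that the minimum is finite.

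Integrating over $\pi$, the expected loss of $Q$ equals the minimum only if $\mathrm{KL}(P(\cdot\mid x)\,\|\,Q(\cdot\mid x))=0$ for $\pi$-almost every $x$. Since $\supp\pi=\mathcal X$ and $\mathcal X$ is countable, every $x$ has positive mass. Hence $Q(\cdot\mid x)=P(\cdot\mid x)$ for every $x$, so $Q=P$ as functions on $\mathcal X\times\mathcal Y$.

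Second, I will turn the explanation $\bm e$ of $Q$ into a program for $P$. I will repeat the construction in Lemma \ref{lem:ExplanationToProgram}, but in its global (unconditional) form. The function $(y,k,x,\bm e)\mapsto\Interpret(\bm e,k,(x,y))$ is computable, where here $\mathcal Z=\mathcal X\times\mathcal Y$. Universality of $U$ then gives a fixed string $\bm p_{\Interpret}$ such that $U$, applied to the encoded pair $(x,y)$, the encoded accuracy $k$, and the program $\bm p_{\Interpret}\cdot\Enc_\Lambda(\bm e)$, returns an output $\approx_k Q(y\mid x)$. Completeness of faithfulness gives the strict error bound $<|\Sigma|^{-k}$, which is stronger than the required $\le$. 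By Definition \ref{def:RealKolmogorov}, this gives $C_U(Q)\le L_\Lambda|\bm e|+c_{\Interpret}$ with $c_{\Interpret}=|\bm p_{\Interpret}|$. The constant depends only on $U$ and $\Interpret$.

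Combining the two steps, $Q=P$ gives $C_U(P)=C_U(Q)\le L_\Lambda|\bm e|+c_{\Interpret}$, which contradicts the hypothesis $L_\Lambda|\bm e|<C_U(P)-c_{\Interpret}$. Therefore $Q$ is not a minimizer, and since the minimum is attained by $P$, the loss of $Q$ is strictly larger than it.

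The main obstacle I expect is the first step, namely making the uniqueness of the minimizer rigorous. This requires handling infinite or undefined expectations, using the countability of $\mathcal X$ to pass from almost-everywhere equality to equality everywhere, and checking that equality of the real-valued functions $Q$ and $P$ makes their program sets $\Programs_U$ coincide. The last point is immediate, because $\Programs_U(f)$ depends only on the values of $f$. The encoding bookkeeping in the second step is routine and mirrors Lemma \ref{lem:ExplanationToProgram}.
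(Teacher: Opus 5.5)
Your proof is correct and is essentially the paper's argument: the paper first shows $Q\neq P$ (otherwise universality of $U$ turns $\bm e$ into a program giving $C_U(P)\le L_\Lambda|\bm e|+c_{\Interpret}$), and then uses the decomposition of the expected log loss into $H_P(Y\mid X)$ plus the $\pi$-averaged KL divergence together with $\supp\pi=\mathcal X$, which are your two steps run in contrapositive order. Your caveat that the minimum $H_P(Y\mid X)$ must be finite is well taken: the paper's proof relies on it without saying so, and without it both sides equal $+\infty$ and the strict inequality fails.
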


\begin{proof}
First, we show from the assumption that $Q\ne P$.
If $Q=P$, then $\bm e$ is also a completely faithful explanation of $P$.
There exists a partial computable function $\tilde{\Interpret}: \subseteq \Sigma^* \computableto \Sigma^*$ such that $\tilde{\Interpret} \Big(\overline{\Enc_{\mathcal{X}} (x)} \cdot \overline{\Enc_{\mathcal{Y}} (y)} \cdot \overline{\Enc_{\mathcal{\NN}} (k)} \cdot \Enc_{\Lambda^*} (\bm{e}) \Big) \approx_k P(y \mid x)$.

By the universality of $U$, there exists $\bm{p}_{\tilde{\Interpret}} \in \Sigma^*$ such that
\begin{equation}
\begin{split}
& U \Big(\overline{\Enc_{\mathcal{X}} (x)} \cdot \overline{\Enc_{\mathcal{Y}} (y)} \cdot \overline{\Enc_{\mathcal{\NN}} (k)} \cdot \bm{p}_{\tilde{\Interpret}} \cdot \Enc_{\Lambda^*} (\bm{e}) \Big) \\
& = \tilde{\Interpret} \Big(\overline{\Enc_{\mathcal{X}} (x)} \cdot \overline{\Enc_{\mathcal{Y}} (y)} \cdot \overline{\Enc_{\mathcal{\NN}} (k)} \cdot \Enc_{\Lambda^*} (\bm{e}) \Big) \approx_k P(y \mid x) 
\end{split}
\end{equation}
holds.
Therefore, by the definition of the plain Kolmogorov complexity of real-valued functions,
\begin{equation}
C_U (P) \le \Big|\bm{p}_{\tilde{\Interpret}} \cdot \Enc_{\Lambda^*} (\bm{e})\Big| = |\bm{p}_{\tilde{\Interpret}}| + \Big|\Enc_{\Lambda^*} (\bm{e})\Big| \le |\bm{p}_{\tilde{\Interpret}}| + L_{\Lambda} |\bm{e}|.
\end{equation}
Taking $c_{\Interpret} = |\bm{p}_{\tilde{\Interpret}}|$, $c_{\Interpret}$ does not depend on $P,Q$, or $\bm{e}$, and this contradicts the assumption $L_\Lambda |\bm{e}| < C_U (P) - c_{\Interpret}$.
Therefore, in this case, $Q \ne P$.

Next, we discuss minimization of the expected log loss.
For any conditional probability mass function $P'$,
\begin{align}
&\EE_{X\sim\pi,\ Y\sim P(\cdot\mid X)}[-\log P'(Y\mid X)]\\
&=
H_P(Y\mid X)
+
\EE_{X\sim\pi}
D_{\mathrm{KL}}\big(P(\cdot\mid X)\,\|\,P'(\cdot\mid X)\big).
\end{align}
Here, the base of the logarithm is $|\Sigma|$.
By Gibbs' inequality, the KL divergence is nonnegative, and equality holds if and only if
\begin{equation}
P'(\cdot\mid X)=P(\cdot\mid X)
\end{equation}
holds $\pi$-almost surely.
Since $\supp\pi=\mathcal X$, this is equivalent to $P'(\cdot\mid x)=P(\cdot\mid x)$ for every $x\in\mathcal X$.

Since $Q\ne P$, there exists $x$ satisfying $Q(\cdot\mid x)\ne P(\cdot\mid x)$.
Because $\pi(x)>0$,
\begin{equation}
\EE_{X\sim\pi}
D_{\mathrm{KL}}\big(P(\cdot\mid X)\,\|\,Q(\cdot\mid X)\big)
>
0
\end{equation}
holds.
If there exists $y$ satisfying $P(y\mid x)>0$ and $Q(y\mid x)=0$, then the left-hand side is $+\infty$, and is again positive.
Therefore,
\begin{equation}
\EE_{X\sim\pi,\ Y\sim P(\cdot\mid X)}[-\log Q(Y\mid X)]
>
H_P(Y\mid X)
\end{equation}
holds.
On the other hand, by taking $P'=P$, the expected log loss becomes $H_P(Y\mid X)$.
Hence, $Q$ cannot minimize conditional perplexity.
\end{proof}

The above proposition means that, if one wants to minimize conditional perplexity exactly, there is a lower bound on the length of a complete and faithful explanation determined by the plain Kolmogorov complexity $C_U(P)$ of the true probability distribution $P$.
However, \Emph{the above proposition says nothing about cases in which conditional perplexity is not exactly minimal}.
In fact, there are examples in which $L_{\Lambda}|\bm e|$ is much shorter than $C_U(P)$, and nevertheless the optimization gap in expected logarithmic perplexity
\[
\EE_{X \sim \pi,\,Y|X \sim P}[-\log Q(Y\mid X)] - \min_{Q^* \in \mathcal{P} (\mathcal Y\mid\mathcal X)}\EE_{X \sim \pi,\,Y|X \sim P} \big[-\log Q^* (Y\mid X) \big]
\]
is extremely small.
More specifically, the following proposition holds.

\begin{proposition}[Failure of the naive lower bound based on $C_U(P)$]
\label{cor:FaithfulExplanationShorterThanTrueDistributionComplexity}
Let the input space, or source set, $\mathcal{X}$ and the output space, or target set, $\mathcal{Y}$ both be countably infinite sets.

For every $n\in\NN$ and every $\epsilon\in\RR_{>0}$, there exist a probability distribution $\pi\in\mathcal P(\mathcal X)$ whose support is all of $\mathcal X$, two computable conditional probability mass functions
\begin{equation}
P(\cdot\mid\cdot), Q(\cdot\mid\cdot):\mathcal X\times\mathcal Y\computableto[0,1],
\end{equation}
and a string $\bm{e} \in \Lambda^*$ that is a completely faithful explanation of $Q$ such that the following two conditions hold simultaneously:
\begin{itemize}
\item $\EE_{X \sim \pi,\,Y|X \sim P}[-\log Q(Y\mid X)] - \min_{Q^* \in \mathcal{P} (\mathcal Y\mid\mathcal X)}\EE_{X \sim \pi,\,Y|X \sim P} \big[-\log Q^* (Y\mid X) \big] < \epsilon$.
\item $|\bm{e}| < C_U (P)-n$.
\end{itemize}
\end{proposition}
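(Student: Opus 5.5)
The plan is to keep the AI side fixed and simple, so that its completely faithful explanation has a length that does not depend on the construction. All the Kolmogorov complexity is then hidden in the true distribution $P$, at a single input that has negligible probability under $\pi$. This makes $C_U(P)$ large while the optimization gap stays tiny.

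\textbf{Step 1 (fixed simple model).} Fix enumerations $\mathcal X=\{x_0,x_1,\ldots\}$ and $\mathcal Y=\{y_0,y_1,\ldots\}$ compatible with the computable encodings. Let $Q(y_j\mid x)=2^{-(j+1)}$ for every $x$. This $Q$ is a computable conditional probability mass function with full support. Under the fixed interpretation function there is some completely faithful explanation $\bm e$ of $Q$, for example one encoding a program for it. Its length $|\bm e|=:\ell_0$ depends neither on $n$ nor on $\epsilon$.

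\textbf{Step 2 (complex target at one input).} Choose $y^\ast\in\mathcal Y$ with $C_U(\Enc_{\mathcal Y}(y^\ast))\ge \ell_0+n+c_{\mathrm{rec}}+1$. Such a $y^\ast$ exists by the counting argument used in Proposition~\ref{prp:KCEntropyGap}, since only finitely many strings have complexity below any bound. Here $c_{\mathrm{rec}}$ is a constant from Step 3 that depends only on $U$ and the encodings. Define $P(\cdot\mid x_0)=\delta_{y^\ast}$ and $P(\cdot\mid x)=Q(\cdot\mid x)$ for $x\neq x_0$. This $P$ is computable, since a program for it can hardcode $y^\ast$.

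\textbf{Step 3 (lower bound on $C_U(P)$).} I will build a fixed recovery procedure. Given any $\bm p\in\Programs_U(P)$, it runs $U$ on inputs $(x_0,y_j,k)$ for $j=0,1,\ldots$ with precision $k$ chosen so that the error is below $1/4$. It outputs the first $y_j$ whose approximation exceeds $1/2$. Each such call halts because $\bm p$ is total on $\dom P$, so a plain sequential search suffices and no dovetailing is needed.

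By universality of $U$, this procedure yields $C_U(\Enc_{\mathcal Y}(y^\ast))\le C_U(P)+c_{\mathrm{rec}}$. Combined with the choice in Step 2, this gives $C_U(P)\ge \ell_0+n+1>|\bm e|+n$. I expect this step to be the main obstacle. The constant $c_{\mathrm{rec}}$ has to be fixed before $y^\ast$ is chosen, and it must be independent of $y^\ast$. It is, because the procedure is uniform in $\bm p$.

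\textbf{Step 4 (small gap).} Use the KL decomposition from the preceding proposition. The minimum over $Q^\ast$ equals $H_P(Y\mid X)$, so the gap is $\EE_{X\sim\pi}D_{\mathrm{KL}}(P(\cdot\mid X)\|Q(\cdot\mid X))$. This KL term is $0$ for every $x\neq x_0$. At $x_0$ it equals $-\log Q(y^\ast\mid x_0)<\infty$, which is finite because $Q$ has full support. Choose $\pi$ with full support and $\pi(x_0)<\epsilon/(1-\log Q(y^\ast\mid x_0))$, for instance by a geometric distribution on $x_1,x_2,\ldots$ with the remaining mass placed at $x_0$. Then the gap is below $\epsilon$. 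Together with Step 3, this establishes both conditions.
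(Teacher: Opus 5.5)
Your proof is correct, and it uses a genuinely different construction from the paper's.

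\textbf{The paper's construction.} The paper keeps $P$ and $Q$ independent of the input: $Q(y\mid x)=q(y)$, and $P$ is a two-point perturbation of $q$, with $+\delta$ at a high-complexity $\eta$ and $-\delta$ at $\eta'=y_0$. The gap is then at most $\frac{\delta^2}{\ln|\Sigma|}\bigl(\frac{1}{q(\eta)}+\frac{1}{q(\eta')}\bigr)$ for any $\pi$. The point $\eta$ is recovered from a program for $P$ by $\mathsf{FindDiff}$, which searches for the index where $P$ exceeds $q$.

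\textbf{Your construction.} You put all the complexity into one input $x_0$, where $P(\cdot\mid x_0)$ is a point mass at a complex $y^\ast$. You pay the large but finite divergence $-\log Q(y^\ast\mid x_0)$ only with the weight $\pi(x_0)$, which is chosen last.

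\textbf{What your route buys.} Your decoder separates $0$ from $1$ at a fixed precision. So $c_{\mathrm{rec}}$ is visibly independent of $y^\ast$, and the order of choices ($Q$, $\bm e$, $c_{\mathrm{rec}}$, $y^\ast$, $\pi$) has no circularity.

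The paper's $\mathsf{FindDiff}$, as written, does have one:
\begin{itemize}
\item it queries $P$ to accuracy $\delta/2$, so $|\bm p_{\mathsf{FindDiff}}|$ depends on $\delta$;
\item $\delta$ is chosen after $\eta$, through $q(\eta)$;
\item $\eta$ is chosen using $|\bm p_{\mathsf{FindDiff}}|$.
\end{itemize}
Its test $\tilde p_j>q_j$ can also fire at indices where $P(y_j\mid x_0)=q_j$. Both problems need a repair, for example dovetailing over pairs $(j,k)$ until $\tilde p_j^{(k)}-|\Sigma|^{-k}>q_j$. That inequality certifies $P(y_j\mid x_0)>q_j$, which happens only at $\eta$. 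Your construction avoids all of this.

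\textbf{What the paper's route buys.} Once repaired, it gives a stronger example. It works for every full-support $\pi$. Moreover, $P(\cdot\mid x)$ stays full-support and KL-close to $Q(\cdot\mid x)$ at every input. So the failure of the naive bound does not depend on a rarely seen input whose true conditional is far from the model's.

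\textbf{A caveat shared with the paper.} Both you and the paper assume that $Q$ has a completely faithful explanation $\bm e$ under the fixed $\Interpret$. The definitions do not imply this; it fails, for instance, when $\Interpret$ is a constant function. It is therefore an implicit assumption on $\Interpret$, and your ``encode a program'' remark does not establish it.
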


\begin{proof}
First, we confirm that the optimization gap in expected logarithmic perplexity is given by the Kullback--Leibler divergence.
More specifically, for any conditional probability mass function $P'(\cdot\mid\cdot)$, the following decomposition holds:
\begin{equation}
\label{eqn:PerplexityHKL}
\begin{aligned}
&\EE_{X\sim\pi,\,Y\sim P(\cdot\mid X)}
[-\log Q(Y\mid X)]
\\
&\qquad
=
H(Y \mid X)
+
\EE_{X\sim\pi}
D_{\mathrm{KL}}\big(P(\bullet \mid X) \ \big\|\ Q (\bullet \mid X)\big).
\end{aligned}
\end{equation}
Here, $H (Y \mid X) := \EE_{X\sim\pi,\,Y\sim P(\cdot\mid X)}
[-\log P (Y\mid X)]$ is the conditional entropy of $X$ and $Y$ with $X\sim\pi,\,Y\sim P(\cdot\mid X)$, and $D_{\mathrm{KL}}$ is the Kullback--Leibler divergence function.
By Gibbs' inequality, the second term in Equation \eqref{eqn:PerplexityHKL} is nonnegative, and the condition $Q (\bullet \mid x) = P (\bullet \mid x)$
for all $x \in \mathcal{X}$ is a necessary and sufficient condition for the second term to attain its minimum value zero.
Therefore, the optimization gap in expected logarithmic perplexity is represented as follows:
\begin{equation}
\begin{split}
&\EE_{X \sim \pi,\,Y|X \sim P}[-\log Q(Y\mid X)]
-
\min_{Q^* \in \mathcal{P} (\mathcal Y\mid\mathcal X)}\EE_{X \sim \pi,\,Y|X \sim P} \big[-\log Q^* (Y\mid X) \big]
\\
&=
\EE_{X\sim\pi}
D_{\mathrm{KL}}\big(P (\bullet \mid X) \ \big\|\ Q (\bullet \mid X)\big).
\end{split}
\end{equation}
Thus, the conditions to be satisfied are
\begin{itemize}
\item $\EE_{X\sim\pi}
D_{\mathrm{KL}}\big(P (\bullet \mid X) \ \big\|\ Q (\bullet \mid X)\big) < \epsilon$.
\item $|\bm{e}| < C_U (P)-n$.
\end{itemize}

In what follows, we construct a concrete example satisfying the above two conditions.
As a full-support probability distribution $\pi$ on the input space, fix $\pi (x) = \frac{1}{(i(x)+1)(i(x)+2)}$; in fact, the concrete values of $\pi$ are irrelevant to the subsequent argument.
Here, $i: \mathcal{X} \to \NN$ is the lexicographic index of the code of each element of $\mathcal{X}$ under $\Enc_{\mathcal{X}}$.
Also fix a computable full-support probability distribution on $\mathcal{Y}$ by $q(y) := \frac{1}{(j(y)+1)(j(y)+2)}$, and set $Q (y \mid x) = q(y)$. Here, $j: \mathcal{Y} \to \NN$ is the lexicographic index of the code of each element of $\mathcal{Y}$ under $\Enc_{\mathcal{Y}}$.

The idea of the proof is as follows.
As the true conditional probability mass function $P (\cdot \mid \cdot)$, consider one whose probability mass differs slightly from $q$ on two distinct elements $\eta, \eta' \in \mathcal{Y}$ having large Kolmogorov complexity.
More specifically, define it as follows.
Let $P (y \mid x) = p (y)$, where
\begin{equation}
p (y) =
\begin{cases}
q (y) & \ \text{if $y \ne \eta, y \ne \eta'$,} \\
q (y) + \delta & \ \text{if $y = \eta$,} \\
q (y) - \delta & \ \text{if $y = \eta'$,} \\
\end{cases}
\end{equation}
where $\eta, \eta' \in \mathcal{Y}$ and $\delta \in \big(0, q(\eta')\big)$ will be chosen appropriately later.
Since $P$ and $Q (\cdot \mid \cdot)$ differ only slightly as probability distributions, the optimization gap of $Q$ is small. However, by detecting the difference between $Q$ and $P$, one can describe $\eta, \eta' \in \mathcal{Y}$, and hence it follows that the Kolmogorov complexity of $P$ is large.
In fact, by choosing $\eta, \eta'$ appropriately, it follows that the Kolmogorov complexity of $P$ can be made arbitrarily large.

We now define $\eta, \eta', \delta$ concretely.
First, in order to quantify the relation between $C_U (P)$ and $\eta$, consider the following algorithm $\mathsf{FindDiff}$, which outputs $\bm{\eta} = \Enc_{\mathcal{Y}} (\eta)$ using $\bm{p} \in \mathsf{Program}_{U} (P)$.
\begin{itemize}
\item \textbf{Input}: $\bm{p} \in \mathsf{Program}_{U} (P)$.
\item \textbf{Step 1}: Initialize $j \gets 0$. Choose an arbitrary $x_0 \in \mathcal{X}$, and let $\bm{x}_0 := \Enc_{\mathcal{X}} (x_0)$.
\item \textbf{Step 2}: Find $\bm{y}_j$, the $j$-th element in the lexicographic ordering of the code set $\Enc_{\mathcal{Y}} (\mathcal{Y})$. Define $y_j \in \mathcal{Y}$ as the element satisfying $\Enc_{\mathcal{Y}} (y_j) = \bm{y}_j$.
\item \textbf{Step 3}: Using $\bm{p}$, $\bm{x}_0$, and $\bm{y}_j$, compute an approximation $\tilde{p}_j$ of $P (y_j \mid x_0)$ with accuracy $\pm \frac{\delta}{2}$.
\item \textbf{Step 4}:
Compute $q_j = Q (y_j \mid x_0) = \frac{1}{(j+1)(j+2)}$. Note that the right-hand side is a rational number, so its exact value can be computed.
\item \textbf{Step 5}: If $\tilde{p}_j > q_j$, \textbf{go to Step 7}.
\item \textbf{Step 6}: Set $j \gets j + 1$, and \textbf{go to Step 2}.
\item \textbf{Step 7}: \textbf{Output} $\bm{\eta} = \bm{y}_j$ \textbf{then terminate}.
\end{itemize}

By the universality of $U$, there exists a string $\bm{p}_\mathsf{FindDiff} \in \Sigma^*$ such that $\bm{\eta} = U (\overline{\bm{p}_\mathsf{FindDiff}} \cdot \bm{p})$.
This gives $C_U (\eta) \le |\bm{p}_\mathsf{FindDiff}| + |\bm{p}|$.
Applying this to $\bm{p}^* \in \mathsf{Program}_U (P)$ satisfying $|\bm{p}^*| = C_U (P)$, we obtain $C_U (\eta) \le |\bm{p}_\mathsf{FindDiff}| + C_U (P)$.
That is,
\begin{equation}
\label{eqn:CPEval}
C_U (P) \ge C_U (\eta) - |\bm{p}_\mathsf{FindDiff}|.
\end{equation}

On the other hand, evaluating the relation between the optimization gap in expected perplexity and $\delta, \eta, \eta'$ gives the following:
\begin{equation}
\label{eqn:KLEval}
\begin{split}
& \EE_{X\sim\pi}
D_{\mathrm{KL}}\big(P (\bullet \mid X) \ \big\|\ Q (\bullet \mid X)\big) \\
& \le \EE_{X\sim\pi}
D_{\mathrm{KL}} (p \ \big\|\ q) \\
& = D_{\mathrm{KL}} (p \ \big\|\ q) \\
& = (q(\eta) + \delta) \log_{|\Sigma|} \frac{q(\eta) + \delta}{q(\eta)} + (q(\eta') - \delta) \log_{|\Sigma|} \frac{q(\eta') - \delta}{q(\eta')} \\
& \le \frac{1}{\ln |\Sigma|} \cdot \Bigg[(q(\eta) + \delta) \bigg(\frac{q(\eta) + \delta}{q(\eta)} - 1\bigg) + (q(\eta') - \delta) \bigg( \frac{q(\eta') - \delta}{q(\eta')} - 1\bigg) \Bigg] \\
& = \frac{\delta^2}{\ln |\Sigma|} \left( \frac{1}{q (\eta)} + \frac{1}{q (\eta')}) \right).
\end{split}
\end{equation}
In the second inequality above, we used $\log_{|\Sigma|} (t) = \frac{\ln t}{\ln |\Sigma|} \le \frac{t - 1}{\ln |\Sigma|}$.

It remains only to define $\eta$, $\eta'$, and $\delta$ appropriately.
First, choose $\eta$ so that $C_U (\eta) > |\bm{p}_\mathsf{FindDiff}| + |\bm{e}| + n$.
Such an $\eta$ necessarily exists.
Indeed, the number of $y \in \mathcal{Y}$ satisfying $C_U (y) \le |\bm{p}_\mathsf{FindDiff}| + |\bm{e}| + n$ is at most $\sum_{\ell=0}^{|\bm{p}_\mathsf{FindDiff}| + |\bm{e}| + n} |\Sigma|^\ell$, and is in particular finite, whereas $\mathcal{Y}$ is countably infinite.
When $\eta$ is chosen as above, Equation \eqref{eqn:KLEval} implies that $C_U (P) > |\bm{e}| + n$ regardless of the concrete choice of $\eta'$ or $\delta \in (0, q(\eta'))$.
Also let $\eta' = y_0$, where $y_0$ is the element of $\mathcal{Y}$ satisfying $j(y_0) = 0$.
As $\delta > 0$, choose a rational number satisfying
$\delta < \min \Big\{\sqrt{\frac{q (\eta) q (\eta')}{q (\eta) + q (\eta')} \ln |\Sigma| \cdot \epsilon}, q (\eta') \Big\}$.
The $P$ defined by the above choices is a full-support conditional probability mass function, and Equation \eqref{eqn:KLEval} gives $\EE_{X\sim\pi}
D_{\mathrm{KL}}\big(P (\bullet \mid X) \ \big\|\ Q (\bullet \mid X)\big) < \epsilon$.
This completes the proof.
\end{proof}
For the above reason, an analysis using the Kolmogorov complexity $C_U(P)$ of the true probability mass function cannot provide conditions on the explanation length of a stochastic input-output AI whose conditional perplexity is not minimal but is extremely close to minimal.

\end{document}